\def\NN{\mathbb N}
\def\RR{\mathbb R}
\def\HH{\mathcal H}
\def\LL{\mathscr L}
\def\PP{\mathcal P}
\def\NN{\mathbb N}
\def\xx{\mathbf{x}}
\def\yy{\mathbf{y}}
\def\zz{\mathbf{z}}
\newcommand{\al} {\alpha}
\newcommand{\la} {\lambda}
\newtheorem{thm}{Theorem}[section]
\newtheorem{lemma}{Lemma}[section]
\newtheorem{corollary}{Corollary}[section]
\newtheorem{defi}{Definition}[section]
\newtheorem{prop}{Proposition}[section]
\theoremstyle{definition}
\newtheorem{rem}{Remark}[section]
\begin{document}
\title{Optimal rates for the regularized learning algorithms under general source condition}
\author{Abhishake Rastogi{\footnote{Corresponding Author, Email address: abhishekrastogi2012@gmail.com}}, Sivananthan Sampath{\footnote{Email address: siva@maths.iitd.ac.in}}\\
{\it Department of Mathematics}\\{\it Indian Institute of
Technology Delhi}\\{\it New Delhi 110016, India}}
\date{}
\maketitle
\begin{abstract}
We consider the learning algorithms under general source condition with the polynomial decay of the eigenvalues of the integral operator in vector-valued function setting. We discuss the upper convergence rates of Tikhonov regularizer under general source condition corresponding to increasing monotone index function. The convergence issues are studied for general regularization schemes by using the concept of operator monotone index functions in minimax setting. Further we also address the minimum possible error for any learning algorithm.
\end{abstract}
{\bf Keywords:} Learning theory; General source condition; Vector-valued RKHS; Error estimate.\\
{\bf  Mathematics Subject Classification 2010:} 68T05, 68Q32.
\section{Introduction}\label{Introduction}
Learning theory \cite{CuckerSmale,Evgeniou,Vapnik} aims to learn the relation between the inputs and outputs based on finite random samples. We require some underlying space to search the relation function. From the experiences we have some idea about the underlying space which is called hypothesis space. Learning algorithms tries to infer the best estimator over the hypothesis space such that $f(x)$ gives the maximum information of the output variable $y$ for any unseen input $x$. The given samples $\{x_i,y_i\}_{i=1}^m$ are not exact in the sense that for underlying relation function $f(x_i)\neq y_i$ but $f(x_i)\approx y_i$. We assume that the uncertainty follows the probability distribution $\rho$ on the sample space $X\times Y$ and the underlying function (called the regression function) for the probability distribution $\rho$ is given by
$$f_\rho(x)=\int_Y yd\rho(y|x), ~~x\in X,$$
where $\rho(y|x)$ is the conditional probability measure for given $x$. The problem of obtaining estimator from examples is ill-posed. Therefore we apply the regularization schemes \cite{Bauer,Engl,Gerfo,Tikhonov} to stabilize the problem. Various regularization schemes are studied for inverse problems. In the context of learning theory \cite{Bousquet,Cucker,Evgeniou,Lu.book,Vapnik}, the square loss-regularization (Tikhonov regularization) is widely considered to obtain the regularized estimator \cite{Caponnetto,Cucker,Smale,Smale1,Smale2,Smale3}. Rosasco et al. \cite{Gerfo} introduced general regularization in the learning theory and provided the error bounds under H\"{o}lder's source condition \cite{Engl}. Bauer et al. \cite{Bauer} discussed the convergence issues for general regularization under general source condition \cite{Mathe} by removing the Lipschitz condition on the regularization considered in \cite{Gerfo}. Caponnetto et al. \cite{Caponnetto} discussed the square-loss regularization under the polynomial decay of the eigenvalues of the integral operator $L_K$ with H\"{o}lder's source condition. Here we are discussing the convergence issues of general regularization schemes under general source condition and the polynomial decay of the eigenvalues of the integral operator. We present the minimax upper convergence rates for Tikhonov regularization under general source condition $\Omega_{\phi,R}$, for a monotone increasing index function $\phi$. For general regularization the minimax rates are obtained using the operator monotone index function $\phi$. The concept of effective dimension \cite{Mendelson,Zhang} is exploited to achieve the convergence rates. In the choice of regularization parameters, the effective dimension plays the important role. We also discuss the lower convergence rates for any learning algorithm under the smoothness conditions. We present the results in vector-values function setting. Therefore in particular they can be applied to multi-task learning problems.

The structure of the paper is as follows. In the second section, we introduce some basic assumptions and notations for supervised learning problems. In Section 3, we present the upper and lower convergence rates under the smoothness conditions in minimax setting.
\section{Learning from examples: Notations and assumptions}
In the learning theory framework \cite{Bousquet,Cucker,Evgeniou,Lu.book,Vapnik}, the sample space $Z=X \times Y$ consists of two spaces: The input space $X$ (locally compact second countable Hausdorff space) and the output space $(Y, \langle \cdot,\cdot\rangle_Y)$ (the real Hilbert space). The input space $X$ and the output space $Y$ are related by some unknown probability distribution $\rho$ on $Z$. The probability measure can be split as $\rho(x,y)=\rho(y|x)\rho_X(x)$, where $\rho(y|x)$ is the conditional probability measure of $y$ given $x$ and $\rho_X$ is the marginal probability measure on $X$. The only available information is the random i.i.d. samples $\zz=((x_1,y_1),\ldots,(x_m,y_m))$ drawn according to the probability measure $\rho$. Given the training set $\zz$, learning theory aims to develop an algorithm which provides an estimator $f_{\zz}:X \to Y$ such that $f_{\zz}(x)$ predicts the output variable $y$ for any given input $x$. The goodness of the estimator can be measured by the generalization error of a function $f$ which can be defined as
\begin{equation}\label{gen.err}
\mathcal{E}(f):= \mathcal{E}_\rho(f) =\int_Z V(f(x),y) d \rho(x,y),
\end{equation}
where $V:Y \times Y \to \RR$ is the loss function.
The minimizer of $\mathcal{E}(f)$ for the square loss function $V(f(x),y)=||f(x)-y||_Y^2$ is given by
\begin{equation}\label{reg.fun}
f_\rho(x):=\int_Y y d\rho(y|x),
\end{equation}
where $f_\rho$ is called the regression function. The regression function $f_\rho$ belongs to the space of square integrable functions provided that
\begin{equation}\label{Y.leq.M.1}
\int_Z||y||_Y^2~d\rho(x,y)<\infty.
\end{equation}

We search the minimizer of the generalization error over a hypothesis space $\HH$,
\begin{equation}\label{target.fun}
f_\HH:=\mathop{\text{argmin}}_{f \in \HH} \left\{\int_Z||f(x)-y||_Y^2d\rho(x,y)\right\},
\end{equation}
where $f_\HH$ is called the target function. In case $f_\rho\in\HH$, $f_\HH$ becomes the regression function $f_\rho$.

Because of inaccessibility of the probability distribution $\rho$, we minimize the regularized empirical estimate of the generalization error over the hypothesis space $\HH$,
\begin{equation}\label{single.plty.funcl1}
f_{\zz,\la}:=\mathop{\text{argmin}}_{f \in \HH} \left\{\frac{1}{m}\sum\limits_{i=1}^m||f(x_i)-y_i||_Y^2+\la||f||_\HH^2\right\},
\end{equation}
where $\la$ is the positive regularization parameter.
The regularization schemes \cite{Bauer,Engl,Gerfo,Lu.book,Tikhonov} are used to incorporate various features in the solution such as boundedness, monotonicity and smoothness.
In order to optimize the vector-valued regularization functional, one of the main problems is to choose the appropriate hypothesis space which is assumed to be a source to provide the estimator.

\subsection{Reproducing kernel Hilbert space as a hypothesis space}
\begin{defi}{\bf (Vector-valued reproducing kernel Hilbert space)}
For non-empty set $X$ and the real Hilbert space $(Y,\langle\cdot,\cdot\rangle_Y)$, the Hilbert space $(\HH,\langle\cdot,\cdot\rangle_\HH)$ of functions from $X$ to $Y$ is called reproducing kernel Hilbert space if for any $x \in X$ and $y \in Y$ the linear functional which maps $f \in \HH$ to $\langle y,f(x) \rangle_Y$ is continuous.
\end{defi}
By Riesz lemma \cite{Akhiezer}, for every $x \in X$ and $y \in Y$ there exists a linear operator $K_x:Y \to \HH$ such that
$$\langle y,f(x)\rangle_Y=\langle K_xy,f\rangle_\HH,~~~~~~\forall f \in \HH.$$
Therefore the adjoint operator $K_x^*:\HH\to Y$ is given by $K_x^*f=f(x)$.
Through the linear operator $K_x:Y \to \HH$ we define the linear operator $K(x,t):Y \to Y$,
$$K(x,t)y:=K_ty(x).$$

From Proposition 2.1 \cite{Micchelli1}, the linear operator $K(x,t) \in \mathcal{L}(Y)$ (the set of bounded linear operators on $Y$), $K(x,t)=K(t,x)^*$ and $K(x,x)$ is nonnegative bounded linear operator. For any $m\in\NN, \{x_i : 1\leq i\leq m\}\in X, \{y_i : 1\leq i\leq m\}\in Y$, we have that $\sum\limits_{i,j=1}^m\langle y_i,K(x_i,x_j)y_j\rangle\geq 0$. The operator valued function $K:X\times X\to \mathcal{L}(Y)$ is called the kernel.

There is one to one correspondence between the kernels and reproducing kernel Hilbert spaces \cite{Aronszajn,Micchelli1}. So a reproducing kernel Hilbert space $\HH$ corresponding to a kernel $K$ can be denoted as $\HH_K$ and the norm in the space $\HH$ can be denoted as $||\cdot||_{\HH_K}$. In the following article, we suppress $K$ by simply using $\HH$ for reproducing kernel Hilbert space and $||\cdot||_{\HH}$ for its norm.

Throughout the paper we assume the reproducing kernel Hilbert space $\HH$ is separable such that
\begin{enumerate}[(i)]
\item $K_x:Y\to\HH$ is a Hilbert-Schmidt operator for all $x\in X$ and $\kappa:=\sqrt{\sup\limits_{x \in X}Tr(K_x^*K_x)}<\infty$.
\item The real function from $X\times X$ to $\RR$, defined by $(x,t)\mapsto\langle K_tv,K_xw\rangle_\HH$, is measurable $\forall  v,w\in Y$.
\end{enumerate}
By the representation theorem \cite{Micchelli1}, the solution of the penalized regularization problem (\ref{single.plty.funcl1}) will be of the form:
$$f_{\zz,\la}=\sum\limits_{i=1}^m K_{x_i} c_i, \text{ for }(c_1,\ldots,c_m)\in Y^m.$$

\begin{defi}
let $\HH$ be a separable Hilbert space and $\{e_k\}_{k=1}^\infty$ be an orthonormal basis of $\HH$. Then for any positive operator $A\in \mathcal{L}(\HH)$ we define $Tr(A)=\sum\limits_{k=1}^\infty\langle Ae_k,e_k\rangle$. It is well-known that the number $Tr(A)$ is independent of the choice of the orthonormal basis.
\end{defi}

\begin{defi}
An operator $A\in\mathcal{L}(\HH)$ is called Hilbert-Schmidt operator if $Tr(A^*A)<\infty$. The family of all Hilbert-Schmidt operators is denoted by $\mathcal{L}_2(\HH)$. For $A\in\mathcal{L}_2(\HH)$, we define $Tr(A)=\sum\limits_{k=1}^\infty\langle Ae_k,e_k\rangle$ for an orthonormal basis $\{e_k\}_{k=1}^\infty$ of $\HH$.
\end{defi}

It is well-known that $\mathcal{L}_2(\HH)$ is the separable Hilbert space with the inner product,
$$\langle A,B \rangle_{\mathcal{L}_2(\HH)}=Tr(B^*A)$$
and its norm satisfies
$$||A||_{\mathcal{L}(\HH)}\leq ||A||_{\mathcal{L}_2(\HH)}\leq Tr(|A|),$$
where $|A|=\sqrt{A^*A}$ and $||\cdot||_{\mathcal{L}(\HH)}$ is the operator norm (For more details see \cite{Reed}).

For the positive trace class operator $K_xK_x^*$, we have
$$||K_xK_x^*||_{\mathcal{L}(\HH)}\leq ||K_xK_x^*||_{\mathcal{L}_2(\HH)}\leq Tr(K_xK_x^*)\leq \kappa^2.$$

Given the ordered set $\xx=(x_1,\ldots,x_m)\in X^m$, the sampling operator $S_{\xx}:\HH \to Y^m$ is defined by $S_{\xx}(f)=(f(x_1),\ldots,f(x_m))$ and its adjoint $S_{\xx}^*:Y^m \to \HH$ is given by $S_{\xx}^*\yy=\frac{1}{m}\sum\limits_{i=1}^m K_{x_i}y_i,~\forall~\yy=(y_1,\ldots,y_m)\in Y^m.$

The regularization scheme (\ref{single.plty.funcl1}) can be expressed as
\begin{equation}\label{single.plty.funcl}
f_{\zz,\la}=\mathop{\text{argmin}}_{f \in \HH} \left\{||S_{\xx}f-\yy||_m^2+\la||f||_\HH^2\right\},
\end{equation}
where $||\yy||_m^2=\frac{1}{m}\sum\limits_{i=1}^m ||y_i||_Y^2$.

We obtain the explicit expression of $f_{\zz,\la}$ by taking the functional derivative of above expression over RKHS $\HH$.
\begin{thm}\label{optimizer}
For the positive choice of $\la$, the functional (\ref{single.plty.funcl}) has unique minimizer:
\begin{equation}\label{fzl}
f_{\zz,\la}=\left(S_{\xx}^*S_{\xx}+\la I\right)^{-1}S_{\xx}^*\yy.
\end{equation}
\end{thm}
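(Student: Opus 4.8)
The plan is to treat the objective $F(f):=\|S_{\xx}f-\yy\|_m^2+\la\|f\|_\HH^2$ as a quadratic functional on the Hilbert space $\HH$ and apply the standard variational argument for strictly convex coercive functionals. First I would expand $F$, using the definitions of $S_{\xx}$ and $S_{\xx}^*$ together with the inner product $\langle \yy,\yy'\rangle_m=\frac1m\sum_{i=1}^m\langle y_i,y_i'\rangle_Y$ for which $S_{\xx}^*$ is the adjoint of $S_{\xx}$, obtaining
\begin{equation}\label{Fexpand}
F(f)=\langle (S_{\xx}^*S_{\xx}+\la I)f,f\rangle_\HH-2\langle S_{\xx}^*\yy,f\rangle_\HH+\|\yy\|_m^2.
\end{equation}
Here $S_{\xx}^*S_{\xx}=\frac1m\sum_{i=1}^m K_{x_i}K_{x_i}^*$ is a bounded, self-adjoint, positive semidefinite operator on $\HH$ (bounded by $\kappa^2$ in view of the assumption that each $K_x$ is Hilbert--Schmidt), so $B:=S_{\xx}^*S_{\xx}+\la I$ is bounded, self-adjoint, and satisfies $\langle Bf,f\rangle_\HH\ge\la\|f\|_\HH^2$ for all $f\in\HH$.

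From this lower bound two things follow. The spectrum of $B$ lies in $[\la,\|S_{\xx}^*S_{\xx}\|_{\mathcal{L}(\HH)}+\la]$, hence $B$ is boundedly invertible. And $F$ is strictly convex and coercive: strict convexity because the quadratic form $f\mapsto\langle Bf,f\rangle_\HH$ is positive definite, and $F(f)\to\infty$ as $\|f\|_\HH\to\infty$ because the term $\langle Bf,f\rangle_\HH\ge\la\|f\|_\HH^2$ dominates the linear term $-2\langle S_{\xx}^*\yy,f\rangle_\HH$ (bound the latter by Cauchy--Schwarz). A strictly convex, coercive, continuous functional on a Hilbert space attains its infimum at exactly one point, so $F$ has a unique minimizer.

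To identify it, I would compute the Fr\'echet derivative: for $h\in\HH$, (\ref{Fexpand}) gives $F(f+h)-F(f)=2\langle Bf-S_{\xx}^*\yy,h\rangle_\HH+\langle Bh,h\rangle_\HH$, so $F$ is differentiable with $F'(f)=2(Bf-S_{\xx}^*\yy)$. By convexity, $f_{\zz,\la}$ is the minimizer if and only if $F'(f_{\zz,\la})=0$, i.e. $Bf_{\zz,\la}=S_{\xx}^*\yy$; since $B$ is invertible this yields $f_{\zz,\la}=(S_{\xx}^*S_{\xx}+\la I)^{-1}S_{\xx}^*\yy$, as claimed. There is no genuine obstacle here: the only points needing a word of justification are the invertibility of $S_{\xx}^*S_{\xx}+\la I$ and the equivalence ``the first variation vanishes $\iff$ global minimum'', both of which are immediate consequences of the strict positivity bound $\langle (S_{\xx}^*S_{\xx}+\la I)f,f\rangle_\HH\ge\la\|f\|_\HH^2$.
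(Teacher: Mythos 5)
Your proposal is correct and follows essentially the same route the paper indicates: the paper states the theorem without a written proof, merely remarking that the expression is obtained by taking the functional derivative of the regularized functional over $\HH$, which is exactly the first-variation computation you carry out. Your additional details --- the quadratic expansion, the bound $\langle (S_{\xx}^*S_{\xx}+\la I)f,f\rangle_\HH\geq\la\|f\|_\HH^2$ giving invertibility, strict convexity and coercivity for existence and uniqueness --- are the standard justifications the paper leaves implicit, and they are all accurate.
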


Define $f_{\la}$ as the minimizer of the optimization functional,
\begin{equation}\label{fl.funl}
f_\la:=\mathop{\text{argmin}}_{f \in \HH} \left\{\int_Z||f(x)-y||_Y^2d\rho(x,y)+\la||f||_\HH^2\right\}.
\end{equation}
Using the fact $\mathcal{E}(f)=||L_K^{1/2}(f-f_\HH)||_\HH^2+\mathcal{E}(f_\HH)$, we get the expression of $f_\la$,
\begin{equation}\label{fl}
f_{\la}=(L_K+\la I)^{-1}L_K f_{\HH},
\end{equation}
where the integral operator $L_K:\LL_{\rho_X}^2 \to \LL_{\rho_X}^2$ is a self-adjoint, non-negative, compact operator, defined as
$$L_K(f)(x):=\int_X K(x,t)f(t)d\rho_X(t),~~x \in X.$$

The integral operator $L_K$ can also be defined as a self-adjoint operator on $\HH$. We use the same notation $L_K$ for both the operators defined on different domains. It is well-known that $L_K^{1/2}$ is an isometry from the space of square integrable functions to reproducing kernel Hilbert space.

In order to achieve the uniform convergence rates for learning algorithms we need some prior assumptions on the probability measure $\rho$. Following the notion of Bauer et al. \cite{Bauer} and Caponnetto et al. \cite{Caponnetto}, we consider the class of probability measures $\PP_\phi$ which satisfies the assumptions:
\begin{enumerate}[(i)]
\item For the probability measure $\rho$ on $X\times Y$,
\begin{equation}\label{Y.leq.M.1}
\int_Z||y||_Y^2~d\rho(x,y)<\infty.
\end{equation}
\item The minimizer of the generalization error $f_\HH$ (\ref{target.fun}) over the hypothesis space $\HH$ exists.
\item There exist some constants $M,\Sigma$ such that for almost all $x\in X$,
\begin{equation}\label{Y.leq.M.2}
\int_Y\left(e^{||y-f_\HH(x)||_Y/M}-\frac{||y-f_\HH(x)||_Y}{M}-1\right)d\rho(y|x)\leq\frac{\Sigma^2}{2M^2}.
\end{equation}
\item The target function $f_\HH$ belongs to the class $\Omega_{\phi,R}$ with
\begin{equation}\label{source.cond}
\Omega_{\phi,R}:=\left\{f \in \HH: f=\phi(L_K)g \text{ and }||g||_\HH \leq R\right\},
\end{equation}
where $\phi$ is a continuous increasing index function defined on the interval $[0,\kappa^2]$ with the assumption $\phi(0)=0$. This condition is usually referred to as general source condition \cite{Mathe}.
\end{enumerate}
In addition, we consider the set of probability measures $\PP_{\phi,b}$ which satisfies the conditions (i), (ii), (iii), (iv) and the eigenvalues $t_n$'s of the integral operator $L_K$ follow the polynomial decay: For fixed positive constants $\al,\beta$ and $b>1$,
\begin{equation}\label{poly.decay}
\al n^{-b}\leq t_n\leq\beta n^{-b}~~\forall n\in\NN.
\end{equation}

Under the polynomial decay of the eigenvalues the effective dimension $\mathcal{N}(\la)$, to measure the complexity of RKHS, can be estimated from Proposition 3 \cite{Caponnetto} as follows,
\begin{equation}\label{N(l).bound}
\mathcal{N}(\la):=Tr\left((L_K+\la I)^{-1}L_K\right) \leq \frac{\beta b}{b-1}\la^{-1/b},\text{ for }b>1
\end{equation}
and without the polynomial decay condition (\ref{poly.decay}), we have
$$\mathcal{N}(\la)\leq ||(L_K+\la I)^{-1}||_{\mathcal{L}(\HH)}Tr\left(L_K\right) \leq \frac{\kappa^2}{\la}.$$
We discuss the convergence issues for the learning algorithms ($\zz\to f_\zz\in\HH$) in probabilistic sense by exponential tail inequalities such that
$$Prob_{\zz}\left\{||f_{\zz}-f_\rho||_{\rho}\leq \varepsilon(m)\log\left(\frac{1}{\eta}\right)\right\}\geq 1-\eta$$
for all $0<\eta\leq 1$ and $\varepsilon(m)$ is a positive decreasing function of $m$. Using these probabilistic estimates we can obtain error estimates in expectation by integration of tail inequalities:
\begin{equation*}\label{expectaion}
E_{\zz}\left(||f_\zz-f_\rho||_\rho\right)=\int\limits_0^\infty Prob_\zz\left(||f_\zz-f_\rho||_\rho>t\right)dt\leq \int\limits_{0}^\infty\exp\left(-\frac{t}{\varepsilon(m)}\right)dt=\varepsilon(m),
\end{equation*}
where $||f||_\rho=||f||_{\LL_{\rho_X}^{2}}=\{\int_X||f(x)||_Y^2d\rho_X(x)\}^{1/2}$ and $E_\zz(\xi)=\int_{Z^m}\xi d\rho(z_1)\ldots d\rho(z_m)$.
\section{Convergence analysis}
In this section, we analyze the convergence issues of the learning algorithms on reproducing kernel Hilbert space under the smoothness priors in the supervised learning framework. We discuss the upper and lower convergence rates for vector-valued estimators in the standard minimax setting. Therefore the estimates can be utilized particularly for scalar-valued functions and multi-task learning algorithms.

\subsection{Upper rates for Tikhonov regularization}\label{upper rates}
In General, we consider Tikhonov regularization in learning theory. Tikhonov regularization is briefly discussed in the literature \cite{Cucker,DeVito0,Lu.book,Tikhonov}. The error estimates for Tikhonov regularization are discussed theoretically under H\"{o}lder's source condition \cite{Caponnetto,Smale2,Smale3}. We establish the error estimates for Tikhonov regularization scheme under general source condition $f_\HH\in\Omega_{\phi,R}$ for some continuous increasing index function $\phi$ and the polynomial decay of the eigenvalues of the integral operator $L_K$.

In order to estimate the error bounds, we consider the following inequality used in the papers \cite{Bauer,Caponnetto} which is based on the results of Pinelis and Sakhanenko \cite{Pinelis}.
\begin{prop}\label{pinels_lemma}
Let $\xi$ be a random variable on the probability space $(\Omega,\mathcal{B},P)$ with values in real separable Hilbert space $\HH$. If there exist two constants $Q$ and $S$ satisfying
\begin{equation}\label{pinels_ineq}
E\left\{||\xi-E(\xi)||_{\HH}^n\right\} \leq \frac{1}{2}n!S^2Q^{n-2}~~~\forall n \geq 2,
\end{equation}
then for any $0<\eta<1$ and for all $m \in \NN$,
$$Prob\left\{(\omega_1,\ldots,\omega_m) \in \Omega^m : \left|\left|\frac{1}{m}\sum\limits_{i=1}^m [\xi(\omega_i)-E(\xi(\omega_i))]\right|\right|_{\HH}\leq 2\left(\frac{Q}{m}+\frac{S}{\sqrt{m}}\right)\log\left(\frac{2}{\eta}\right)\right\} \geq 1-\eta.$$
In particular, the inequality (\ref{pinels_ineq}) holds if
$$||\xi(\omega)||_{\HH}\leq Q \text{ and } E(||\xi(\omega)||_\HH^2)\leq S^2.$$
\end{prop}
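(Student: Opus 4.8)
The plan is to cite the Bernstein-type inequality of Pinelis–Sakhanenko \cite{Pinelis} for Hilbert-space valued random variables and then verify its hypotheses under the two moment bounds. First I would set $\zeta_i = \xi(\omega_i) - E(\xi(\omega_i))$, so that $\zeta_1,\ldots,\zeta_m$ are i.i.d., centered, $\HH$-valued, and inherit from (\ref{pinels_ineq}) the bound $E\|\zeta_i\|_\HH^n \le \tfrac12 n!\, S^2 Q^{n-2}$ for all $n\ge 2$. This is exactly the moment condition required by the vector-valued Bernstein inequality, which yields, for every $t>0$,
\begin{equation*}
Prob\left\{\left\|\frac{1}{m}\sum_{i=1}^m \zeta_i\right\|_\HH \ge t\right\} \le 2\exp\left(-\frac{m t^2}{2(S^2 + Qt)}\right).
\end{equation*}
Setting the right-hand side equal to $\eta$ and solving the resulting quadratic in $t$ gives a threshold that one bounds above by $2\left(\frac{Q}{m} + \frac{S}{\sqrt m}\right)\log\left(\frac{2}{\eta}\right)$; this is the standard algebraic simplification (using $\sqrt{a+b}\le\sqrt a+\sqrt b$ and $\log(2/\eta)\ge 1$) that converts the implicit Bernstein threshold into the clean two-term form stated, and completes the first assertion.

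For the last sentence, I would show that the pointwise bounds $\|\xi(\omega)\|_\HH \le Q$ and $E\|\xi(\omega)\|_\HH^2 \le S^2$ imply (\ref{pinels_ineq}). By Jensen's inequality $\|E(\xi)\|_\HH \le Q$, so $\|\xi - E(\xi)\|_\HH \le 2Q$ almost surely; hence for $n\ge 2$,
\begin{equation*}
E\|\xi - E(\xi)\|_\HH^n \le (2Q)^{n-2}\, E\|\xi - E(\xi)\|_\HH^2 \le (2Q)^{n-2} E\|\xi\|_\HH^2 \le (2Q)^{n-2} S^2,
\end{equation*}
where the middle step uses $E\|\xi - E(\xi)\|_\HH^2 \le E\|\xi\|_\HH^2$ (variance is dominated by the second moment). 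Since $2^{n-2}\le n!$ for all $n\ge 2$ (indeed $n!\ge 2^{n-1}$), one absorbs the constant to obtain $E\|\xi-E(\xi)\|_\HH^n \le \tfrac12 n!\, S^2 Q^{n-2}$ after adjusting $Q$ by the harmless constant factor, which is precisely (\ref{pinels_ineq}).

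The only genuine obstacle is obtaining the vector-valued exponential inequality itself, but since this is quoted verbatim from \cite{Pinelis} (and already used in \cite{Bauer,Caponnetto}), the work here reduces to the bookkeeping above: verifying the moment hypothesis and performing the quadratic-solve that turns the Bernstein exponent into the stated $\frac{Q}{m}+\frac{S}{\sqrt m}$ rate. I would therefore present the argument as: (1) invoke Pinelis–Sakhanenko, (2) simplify the threshold, (3) check that bounded range plus bounded second moment implies the factorial moment condition.
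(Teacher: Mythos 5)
Your proposal is correct and takes essentially the same route as the paper, which does not prove this proposition at all but simply quotes it from Pinelis--Sakhanenko as used in Bauer et al. and Caponnetto--De Vito; your added verifications (the moment condition and the quadratic-solve for the threshold) are exactly the routine bookkeeping behind that citation. Two small touch-ups: the threshold simplification needs only $\log(2/\eta)\geq\log 2>1/2$ (your stated justification $\log(2/\eta)\geq 1$ fails for $\eta$ near $1$, though the bound still goes through), and no ``adjustment of $Q$'' is needed, since $2^{n-2}\leq\tfrac{1}{2}n!$ already gives $E\left\{\|\xi-E(\xi)\|_{\mathcal{H}}^{n}\right\}\leq\tfrac{1}{2}n!\,S^{2}Q^{n-2}$ with the same constants $Q,S$, which is what the statement as written requires.
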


We estimate the error bounds for the regularized estimators by measuring the effect of random sampling and the complexity of $f_\HH$. The quantities described in Proposition \ref{main.bound} express the probabilistic estimates of the perturbation measure due to random sampling. The expressions of Proposition \ref{approx.err} describe the complexity of the target function $f_\HH$ which are usually referred as the approximation errors. The approximation errors are independent of the samples $\zz$.
\begin{prop}\label{main.bound}
Let $\zz$ be i.i.d. samples drawn according to the probability measure $\rho$ satisfying the assumptions (\ref{Y.leq.M.1}), (\ref{Y.leq.M.2}) and $\kappa=\sqrt{\sup\limits_{x\in X}Tr(K_x^*K_x)}$. Then for all $0<\eta<1$, with the confidence $1-\eta$, we have
\begin{equation}\label{LK.I.app}
||(L_K+\la I)^{-1/2}\{S_{\xx}^*\yy-S_{\xx}^*S_{\xx}f_\HH\}||_\HH \leq 2\left(\frac{\kappa M}{m\sqrt{\la}}+\sqrt{\frac{\Sigma^2\mathcal{N}(\la)}{m}}\right)\log\left(\frac{4}{\eta}\right)
\end{equation}
and
\begin{equation}\label{Sx.Sx.LK}
||S_{\xx}^*S_{\xx}-L_K||_{\mathcal{L}(\HH)}\leq 2\left(\frac{\kappa^2}{m}+\frac{\kappa^2}{\sqrt{m}}\right)\log\left(\frac{4}{\eta}\right).
\end{equation}
\end{prop}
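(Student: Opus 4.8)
The plan is to apply Proposition \ref{pinels_lemma} twice, once for each of the two estimates, by writing the relevant random quantity as an empirical average of i.i.d. Hilbert-space valued random variables and then verifying the moment bound (\ref{pinels_ineq}) via the sufficient condition stated at the end of that proposition.

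\textbf{Estimate (\ref{LK.I.app}).} First I would observe that, by the definition of the sampling operator and its adjoint, $S_{\xx}^*\yy - S_{\xx}^*S_{\xx}f_\HH = \frac{1}{m}\sum_{i=1}^m K_{x_i}(y_i - f_\HH(x_i))$, so that
$$(L_K+\la I)^{-1/2}\{S_{\xx}^*\yy-S_{\xx}^*S_{\xx}f_\HH\} = \frac{1}{m}\sum_{i=1}^m \xi(z_i),\qquad \xi(z) := (L_K+\la I)^{-1/2}K_x(y - f_\HH(x)).$$
Because $f_\HH$ is the minimizer in (\ref{target.fun}), the expectation $E(\xi) = (L_K+\la I)^{-1/2}(S_{\xx}^*S_{\xx}f_\HH$-image in the population sense) vanishes; more precisely $E_x(K_x(f_\rho(x)-f_\HH(x))) = 0$, so $\xi$ is centered and Proposition \ref{pinels_lemma} applies to $\frac{1}{m}\sum_i[\xi(z_i)-E\xi(z_i)] = \frac1m\sum_i \xi(z_i)$. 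It then remains to bound $\|\xi(z)\|_\HH$ and $E\|\xi(z)\|_\HH^2$. For the almost-sure bound I would use $\|(L_K+\la I)^{-1/2}\|_{\mathcal L(\HH)}\le \la^{-1/2}$ together with $\|K_x\|\le\kappa$ and the moment condition (\ref{Y.leq.M.2}), which controls $\|y-f_\HH(x)\|_Y$; this yields the $Q = \kappa M/\sqrt{\la}$ term (modulo the usual care in passing from the exponential-moment hypothesis (\ref{Y.leq.M.2}) to a pointwise-type control, which is really where (\ref{Y.leq.M.2}) is designed to feed directly into (\ref{pinels_ineq})). For the second moment I would compute $E\|\xi\|_\HH^2 = E_x\langle (L_K+\la I)^{-1}K_x \sigma^2(x) K_x^*\rangle$ where $\sigma^2(x) = E_{y|x}\|y-f_\HH(x)\|_Y^2 \le \Sigma^2$, and recognize the resulting trace as $\Sigma^2 Tr((L_K+\la I)^{-1}L_K) = \Sigma^2\mathcal N(\la)$ after using $E_x(K_xK_x^*) = L_K$ (the integral-operator identity on $\HH$). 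This gives $S^2 = \Sigma^2\mathcal N(\la)$, and Proposition \ref{pinels_lemma} then produces exactly (\ref{LK.I.app}), with the $\log(4/\eta)$ coming from $\log(2/\eta)$ after a harmless constant adjustment (or by splitting the confidence).

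\textbf{Estimate (\ref{Sx.Sx.LK}).} Here I would write $S_{\xx}^*S_{\xx} = \frac1m\sum_{i=1}^m K_{x_i}K_{x_i}^*$ and set $\zeta(z) := K_xK_x^* \in \mathcal L_2(\HH)$, a Hilbert-Schmidt-valued random variable with $E\zeta = L_K$. Applying Proposition \ref{pinels_lemma} in the Hilbert space $\mathcal L_2(\HH)$ and using $\|S_{\xx}^*S_{\xx}-L_K\|_{\mathcal L(\HH)}\le\|S_{\xx}^*S_{\xx}-L_K\|_{\mathcal L_2(\HH)}$, it suffices to bound $\|\zeta(z)\|_{\mathcal L_2(\HH)}$ and $E\|\zeta(z)\|_{\mathcal L_2(\HH)}^2$. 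From the inequalities recalled in the excerpt, $\|K_xK_x^*\|_{\mathcal L_2(\HH)}\le Tr(K_xK_x^*)\le\kappa^2$, which serves as both $Q = \kappa^2$ and, after $E\|\zeta\|^2\le\kappa^4$, as $S = \kappa^2$; feeding these into Proposition \ref{pinels_lemma} gives (\ref{Sx.Sx.LK}).

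\textbf{Main obstacle.} The routine parts are the operator-norm estimates and the trace identities; the one genuinely delicate point is justifying that the single exponential-moment hypothesis (\ref{Y.leq.M.2}) on $y-f_\HH(x)$ is exactly what is needed to verify the Bennett-type moment bound (\ref{pinels_ineq}) for $\xi$ — i.e. extracting, for every $n\ge2$, the bound $E\|\xi\|_\HH^n\le\frac12 n!\,S^2 Q^{n-2}$ with the claimed $Q$ and $S$. This is the step where one must expand the exponential in (\ref{Y.leq.M.2}) term by term, identify $\frac{\Sigma^2}{2M^2}\ge\sum_{n\ge2}\frac{E\|y-f_\HH(x)\|_Y^n}{n!M^n}$, and thereby read off $E_{y|x}\|y-f_\HH(x)\|_Y^n\le\frac12 n!\,\Sigma^2 M^{n-2}$; combining this with $\|(L_K+\la I)^{-1/2}K_x\|\le\kappa/\sqrt\la$ and integrating in $x$ (using $E_x(K_x(L_K+\la I)^{-1}K_x^*)$ to recover the $\mathcal N(\la)$ factor in the leading $S^2$ term rather than a crude $\kappa^2/\la$) is what yields the stated constants. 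I would present that computation carefully and treat everything else as bookkeeping.
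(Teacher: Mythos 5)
Your proposal is correct and follows essentially the same route as the paper: both estimates come from the Pinelis--Sakhanenko bound (Proposition \ref{pinels_lemma}), with $\xi(z)=(L_K+\la I)^{-1/2}K_x(y-f_\HH(x))$ centered via the optimality of $f_\HH$, the moment condition (\ref{pinels_ineq}) verified directly from (\ref{Y.leq.M.2}) with $S^2=\Sigma^2\mathcal N(\la)$ and $Q=\kappa M/\sqrt\la$, and the second bound obtained from $\zeta(x)=K_xK_x^*$ in $\mathcal L_2(\HH)$ with $Q=S=\kappa^2$, each at level $\eta/2$ to produce the $\log(4/\eta)$. The delicate step you isolate (extracting $E_{y|x}\|y-f_\HH(x)\|_Y^n\le\frac{n!}{2}\Sigma^2M^{n-2}$ and recovering $\mathcal N(\la)$ through the trace identity rather than the crude $\kappa^2/\la$) is precisely the computation the paper carries out, so no genuinely different ideas are involved.
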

\begin{proof}
To estimate the first expression, we consider the random variable $\xi_1(z)=(L_K+\la I)^{-1/2}K_x(y-f_\HH(x))$ from $(Z,\rho)$ to reproducing kernel Hilbert space $\HH$ with
$$E_z(\xi_1)=\int_Z(L_K+\la I)^{-1/2}K_x(y-f_\HH(x))d\rho(x,y)=0,$$
$$\frac{1}{m}\sum\limits_{i=1}^m\xi_1(z_i)=(L_K+\la I)^{-1/2}(S_{\xx}^*\yy-S_{\xx}^*S_{\xx}f_\HH)$$
and
\begin{eqnarray*}
E_z(||\xi_1-E_z(\xi_1)||_{\HH}^n)&=&E_z\left(||(L_K+\la I)^{-1/2}K_x(y-f_\HH(x))||_{\HH}^n\right) \\
&\leq& E_z\left(||K_x^*(L_K+\la I)^{-1}K_x||_{\mathcal{L}(Y)}^{n/2}||y-f_\HH(x)||_{Y}^n\right) \\
&\leq& E_x\left(Tr\left((L_K+\la I)^{-1}K_x K_x^*\right)||K_x^*(L_K+\la I)^{-1}K_x||_{\mathcal{L}(Y)}^{\frac{n}{2}-1}E_y\left(||y-f_\HH(x)||_{Y}^n\right)\right).
\end{eqnarray*}
Under the assumption (\ref{Y.leq.M.2}) we get,
$$E_z\left(||\xi_1-E_z(\xi_1)||_{\HH}^n\right) \leq \frac{n!}{2}\left(\Sigma\sqrt{\mathcal{N}(\la)}\right)^2\left(\frac{\kappa M}{\sqrt{\la}}\right)^{n-2},~~\forall n\geq 2.$$
On applying Proposition \ref{pinels_lemma} we conclude that
\begin{equation*}
||(L_K+\la I)^{-1/2}\{S_{\xx}^*\yy-S_{\xx}^*S_{\xx}f_\HH\}||_\HH \leq 2\left(\frac{\kappa M}{m\sqrt{\la}}+\sqrt{\frac{\Sigma^2\mathcal{N}(\la)}{m}}\right)\log\left(\frac{4}{\eta}\right)
\end{equation*}
with confidence $1-\eta/2$.

The second expression can be estimated easily by considering the random variable $\xi_2(x)=K_xK_x^*$ from $(X,\rho_X)$ to $\mathcal{L}(\HH)$. The proof can also be found in De Vito et al. \cite{DeVito0}.
\end{proof}

\begin{prop}\label{approx.err}
Suppose $f_\HH\in\Omega_{\phi,R}$. Then,
\begin{enumerate}[(i)]
\item Under the assumption that $\phi(t)\sqrt{t}$ and $\sqrt{t}/{\phi(t)}$ are nondecreasing functions, we have
\begin{equation}\label{fla.frho.p}
||f_{\la}-f_\HH||_{\rho} \leq R\phi(\la)\sqrt{\la}.
\end{equation}
\item Under the assumption that $\phi(t)$ and $t/{\phi(t)}$ are nondecreasing functions, we have
\begin{equation}\label{fla.frho.p2}
||f_{\la}-f_\HH||_{\rho} \leq R\kappa\phi(\la)
\end{equation}
and
\begin{equation}\label{fla.frho.K}
||f_{\la}-f_\HH||_\HH \leq R\phi(\la).
\end{equation}
\end{enumerate}
\end{prop}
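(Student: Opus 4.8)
## Proof proposal for Proposition \ref{approx.err}

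The plan is to work directly with the closed-form expression for $f_\la$ established in the excerpt, namely $f_\la=(L_K+\la I)^{-1}L_Kf_\HH$, and hence
\[
f_\la-f_\HH=\big[(L_K+\la I)^{-1}L_K-I\big]f_\HH=-\la(L_K+\la I)^{-1}f_\HH.
\]
Substituting the source condition $f_\HH=\phi(L_K)g$ with $\|g\|_\HH\le R$ turns every quantity into a spectral expression in $L_K$, so each bound reduces to estimating the $\mathcal{L}(\HH)$-norm of a function of the self-adjoint operator $L_K$ whose spectrum lies in $[0,\kappa^2]$. For part (ii), the $\HH$-norm bound (\ref{fla.frho.K}) follows from
\[
\|f_\la-f_\HH\|_\HH=\big\|\la(L_K+\la I)^{-1}\phi(L_K)g\big\|_\HH\le R\,\sup_{0\le t\le\kappa^2}\frac{\la\,\phi(t)}{t+\la},
\]
and the elementary pointwise inequality $\dfrac{\la\phi(t)}{t+\la}\le\phi(\la)$, valid whenever $\phi$ and $t/\phi(t)$ are nondecreasing (split at $t=\la$: for $t\le\la$ use monotonicity of $\phi$ and $\la/(t+\la)\le1$; for $t\ge\la$ use $\la/(t+\la)\le\la/t$ together with $\phi(t)/t\le\phi(\la)/\la$). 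For the $\rho$-norm bound (\ref{fla.frho.p2}), I will use the isometry property of $L_K^{1/2}$ from $\LL_{\rho_X}^2$ to $\HH$ stated in the excerpt, which gives $\|h\|_\rho=\|L_K^{1/2}h\|_\HH$ for $h\in\HH$; thus
\[
\|f_\la-f_\HH\|_\rho=\big\|L_K^{1/2}\la(L_K+\la I)^{-1}\phi(L_K)g\big\|_\HH\le R\,\sup_{0\le t\le\kappa^2}\frac{\sqrt t\,\la\,\phi(t)}{t+\la}\le\kappa\,R\,\sup_{0\le t\le\kappa^2}\frac{\la\,\phi(t)}{t+\la}\le R\kappa\phi(\la),
\]
using $\sqrt t\le\kappa$ and the same pointwise bound as before.

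For part (i), the bound (\ref{fla.frho.p}) is sharper and requires the stronger hypotheses that $\phi(t)\sqrt t$ and $\sqrt t/\phi(t)$ be nondecreasing. Here I would estimate
\[
\|f_\la-f_\HH\|_\rho\le R\,\sup_{0\le t\le\kappa^2}\frac{\la\,\sqrt t\,\phi(t)}{t+\la},
\]
and prove the pointwise inequality $\dfrac{\la\sqrt t\,\phi(t)}{t+\la}\le\phi(\la)\sqrt\la$ by again splitting at $t=\la$: for $t\le\la$, monotonicity of $\sqrt t\,\phi(t)$ gives $\sqrt t\,\phi(t)\le\sqrt\la\,\phi(\la)$ while $\la/(t+\la)\le1$; for $t\ge\la$, write $\dfrac{\la\sqrt t\,\phi(t)}{t+\la}\le\dfrac{\la\sqrt t\,\phi(t)}{t}=\dfrac{\la\,\phi(t)}{\sqrt t}$ and use that $\sqrt t/\phi(t)$ nondecreasing means $\phi(t)/\sqrt t\le\phi(\la)/\sqrt\la$, so this is at most $\la\,\phi(\la)/\sqrt\la=\sqrt\la\,\phi(\la)$.

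The only genuine content is the two pointwise scalar inequalities; the operator-theoretic passage is routine via the spectral theorem for the bounded self-adjoint operator $L_K$ on $\HH$, and the translation between $\|\cdot\|_\rho$ and $\|\cdot\|_\HH$ is exactly the isometry of $L_K^{1/2}$ quoted earlier. The main point to be careful about is the case analysis in the scalar inequalities and ensuring the monotonicity hypotheses are used in the correct regime ($t\le\la$ versus $t\ge\la$); I also need to note that $\phi(0)=0$ makes the suprema well-defined at $t=0$. No estimate here depends on the sample $\zz$, consistent with the remark preceding the statement that these are approximation errors.
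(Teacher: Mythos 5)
Your proposal is correct and follows essentially the same route as the paper: write $f_\la-f_\HH=\{(L_K+\la I)^{-1}L_K-I\}\phi(L_K)g$, pass to the spectral supremum of $\la\phi(t)/(t+\la)$ (times $\sqrt{t}$ for the $\rho$-norm via $L_K^{1/2}$), and bound it by the scalar inequalities under the stated monotonicity assumptions. Your explicit case analysis at $t=\la$ merely fills in the elementary step the paper leaves implicit, so nothing further is needed.
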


\begin{proof}
The hypothesis  $f_\HH \in \Omega_{\phi,R}$ implies $f_\HH=\phi(L_K)g$ for some $g\in \HH$ with $||g||_\HH \leq R$. To estimate the approximation error bounds, we consider
$$f_{\la}-f_\HH=\{(L_K+\la I)^{-1}L_K-I\}\phi(L_K) g.$$
Therefore,
$$||f_{\la}-f_\HH||_\rho \leq ||L_K^{1/2}\{(L_K+\la I)^{-1}L_K-I\}\phi(L_K)||_{\mathcal{L}(\HH)}~||g||_\HH.$$
Using the functional calculus we get,
\begin{equation*}
||f_{\la}-f_\HH||_\rho \leq R\sup\limits_{\al \in (0,\kappa^2]}|1-(\al+\la)^{-1}\al|\phi(\al)\sqrt{\al}.
\end{equation*}
Then under the assumptions on $\phi$ described in (i), we obtain
\begin{equation*}
||f_{\la}-f_\HH||_\rho \leq R\phi(\la)\sqrt{\la}
\end{equation*}
and under the assumptions on $\phi$ described in (ii), we have
\begin{equation*}
||f_{\la}-f_\HH||_\rho \leq R\kappa\phi(\la).
\end{equation*}
In the same manner with the assumptions on $\phi$ described in (ii), we get
\begin{equation*}
||f_{\la}-f_\HH||_\HH = ||\{(L_K+\la I)^{-1}L_K-I\}\phi(L_K)g||_\HH \leq R\sup\limits_{\al \in (0,\kappa^2]}|1-(\al+\la)^{-1}\al|\phi(\al)
\leq R\phi(\la).
\end{equation*}
Hence we achieve the required estimates.
\end{proof}

\begin{thm}\label{err.upper.bound.k}
Let $\zz$ be i.i.d. samples drawn according to the probability measure $\rho\in \PP_\phi$ where $\phi$ is the index function satisfying the conditions that $\phi(t)$, $t/\phi(t)$ are nondecreasing functions. Then for all $0<\eta<1$, with confidence $1-\eta$, for the regularized estimator $f_{\zz,\la}$ (\ref{fzl}) the following upper bound holds:
$$||f_{\zz,\la}-f_\HH||_{\HH} \leq 2\left\{R\phi(\la)+\frac{2\kappa M}{m\la}+\sqrt{\frac{4\Sigma^2\mathcal{N}(\la)}{m\la}}\right\}\log\left(\frac{4}{\eta}\right)$$
provided that
\begin{equation}\label{l.la.condition.k}
\sqrt{m}\la \geq 8\kappa^2\log(4/\eta).
\end{equation}
\end{thm}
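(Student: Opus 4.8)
The plan is to decompose the total error $f_{\zz,\la}-f_\HH$ into a sample error and an approximation error via the triangle inequality, writing $\|f_{\zz,\la}-f_\HH\|_\HH \le \|f_{\zz,\la}-f_\la\|_\HH + \|f_\la-f_\HH\|_\HH$. The second summand is controlled directly by Proposition \ref{approx.err}(ii), which under the hypotheses on $\phi$ gives $\|f_\la-f_\HH\|_\HH \le R\phi(\la)$; this accounts for the $R\phi(\la)$ term in the bound. The bulk of the work is therefore estimating the sample error term $\|f_{\zz,\la}-f_\la\|_\HH$ in probability.

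For the sample error, I would use the explicit formulas $f_{\zz,\la}=(S_\xx^*S_\xx+\la I)^{-1}S_\xx^*\yy$ from Theorem \ref{optimizer} and $f_\la=(L_K+\la I)^{-1}L_K f_\HH$ from (\ref{fl}). Adding and subtracting appropriately, the difference can be written as $(S_\xx^*S_\xx+\la I)^{-1}\big[(S_\xx^*\yy - S_\xx^*S_\xx f_\HH) + (S_\xx^*S_\xx - L_K)(f_\HH - f_\la)\big]$ — the standard algebraic manipulation in this literature. Then I would insert $(L_K+\la I)^{1/2}(L_K+\la I)^{-1/2}$ as a factor and estimate in pieces: first $\|(S_\xx^*S_\xx+\la I)^{-1}(L_K+\la I)^{1/2}\|_{\mathcal L(\HH)}$, then $\|(L_K+\la I)^{-1/2}(S_\xx^*\yy - S_\xx^*S_\xx f_\HH)\|_\HH$ directly from (\ref{LK.I.app}) of Proposition \ref{main.bound}, and the cross term using (\ref{Sx.Sx.LK}) together with $\|f_\HH-f_\la\|_\HH\le R\phi(\la)$. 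The condition (\ref{l.la.condition.k}), namely $\sqrt m\,\la \ge 8\kappa^2\log(4/\eta)$, is exactly what makes the perturbation $\|S_\xx^*S_\xx-L_K\|_{\mathcal L(\HH)}$ small relative to $\la$ via (\ref{Sx.Sx.LK}), so that a Neumann-series argument bounds $\|(S_\xx^*S_\xx+\la I)^{-1}(L_K+\la I)\|_{\mathcal L(\HH)}$ by a constant (here $2$); taking square roots via the operator monotonicity of $t\mapsto\sqrt t$ then controls the mixed factor $\|(S_\xx^*S_\xx+\la I)^{-1}(L_K+\la I)^{1/2}\|$ by $2/\sqrt\la$ up to the relevant power.

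Assembling: the main sampling contribution becomes $\frac{2}{\sqrt\la}\cdot 2\big(\frac{\kappa M}{m\sqrt\la}+\sqrt{\frac{\Sigma^2\mathcal N(\la)}{m}}\big)\log(4/\eta)$, which after simplification produces the terms $\frac{2\kappa M}{m\la}$ and $\sqrt{\frac{4\Sigma^2\mathcal N(\la)}{m\la}}$, and the cross term is absorbed into the $R\phi(\la)$ contribution because of the gain of a factor $\la^{-1}\|S_\xx^*S_\xx-L_K\|\le 1/2$ under (\ref{l.la.condition.k}). Collecting the failure probabilities from the two applications of Proposition \ref{main.bound} (each of size $\eta/2$ after the split inside the proof of that proposition) gives total confidence $1-\eta$, and gathering all constants yields the stated factor of $2$ in front of the brace.

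The step I expect to be the main obstacle is the careful bookkeeping of the operator inequality $\|(S_\xx^*S_\xx+\la I)^{-1}(L_K+\la I)^{1/2}\|_{\mathcal L(\HH)}\le 2\la^{-1/2}$: one must verify that the event on which (\ref{Sx.Sx.LK}) holds, combined with (\ref{l.la.condition.k}), indeed forces $\|(L_K+\la I)^{1/2}(S_\xx^*S_\xx+\la I)^{-1}(L_K+\la I)^{1/2}\|\le 2$, and then pass from this to the half-power bound without losing constants — this requires the fact that $A\preceq cB$ for positive $A,B$ implies control of $B^{-1/2}AB^{-1/2}$, used in the right order. Everything else is triangle inequalities, the explicit resolvent formulas, and substitution of the two probabilistic bounds already proved.
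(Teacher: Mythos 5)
Your proposal is correct and follows essentially the same route as the paper's proof: the split $\|f_{\zz,\la}-f_\la\|_\HH+\|f_\la-f_\HH\|_\HH$, the identity $f_{\zz,\la}-f_\la=(S_\xx^*S_\xx+\la I)^{-1}\{S_\xx^*\yy-S_\xx^*S_\xx f_\HH+(S_\xx^*S_\xx-L_K)(f_\HH-f_\la)\}$, the bounds on $I_1,I_2,I_3$ via Proposition \ref{main.bound} and the condition (\ref{l.la.condition.k}), and Proposition \ref{approx.err}(ii) for the approximation term. The only cosmetic difference is that the paper bounds $I_1=\|(S_\xx^*S_\xx+\la I)^{-1}(L_K+\la I)^{1/2}\|\le 2/\sqrt{\la}$ directly by a Neumann series for $\{I-(L_K+\la I)^{-1}(L_K-S_\xx^*S_\xx)\}^{-1}$ rather than through the symmetrized sandwich you describe, and both yield the same estimate.
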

\begin{proof}
The error of regularized solution $f_{\zz,\la}$ can be estimated in terms of the sample error and the approximation error as follows:
\begin{equation}
||f_{\zz,\la}-f_\HH||_\HH \leq ||f_{\zz,\la}-f_{\la}||_\HH+||f_{\la}-f_\HH||_\HH.
\end{equation}
Now $f_{\zz,\la}-f_{\la}$ can be expressed as
$$f_{\zz,\la}-f_{\la}=(S_{\xx}^*S_{\xx}+\la I)^{-1}\{S_{\xx}^*\yy-S_{\xx}^*S_{\xx}f_{\la}-\la f_{\la}\}.$$
Then $f_{\la}=(L_K+\la I)^{-1}L_Kf_\HH$ implies
$$L_Kf_\HH=L_Kf_{\la}+\la f_{\la}.$$
Therefore,
$$f_{\zz,\la}-f_\la=(S_{\xx}^*S_{\xx}+\la I)^{-1}\{S_{\xx}^*\yy-S_{\xx}^*S_{\xx}f_\la-L_K(f_\HH-f_\la)\}.$$
Employing RKHS-norm we get,
\begin{eqnarray}\label{k.error}
||f_{\zz,\la}-f_{\la}||_\HH &\leq& ||(S_{\xx}^*S_{\xx}+\la I)^{-1}\{S_{\xx}^*\yy-S_{\xx}^*S_{\xx}f_\HH+(S_{\xx}^*S_{\xx}-L_K)(f_\HH-f_{\la})\}||_\HH \\  \nonumber
&\leq& I_1I_2+I_3||f_\la-f_\HH||_\HH/\la,
\end{eqnarray}
where $I_1=||(S_{\xx}^*S_{\xx}+\la I)^{-1}(L_K+\la I)^{1/2}||_{\mathcal{L}(\HH)}$, $I_2=||(L_K+\la I)^{-1/2}(S_{\xx}^*\yy-S_{\xx}^*S_{\xx}f_\HH)||_{\HH}$ and $I_3=||S_{\xx}^*S_{\xx}-L_K||_{\mathcal{L}(\HH)}$.

The estimates of $I_2$, $I_3$ can be obtained from Proposition \ref{main.bound} and the only task is to bound $I_1$. For this we consider
$$(S_{\xx}^*S_{\xx}+\la I)^{-1}(L_K+\la I)^{1/2}=\{I-(L_K+\la I)^{-1}(L_K-S_{\xx}^*S_{\xx})\}^{-1}(L_K+\la I)^{-1/2}$$
which implies
\begin{equation}\label{SS.LK}
I_1\leq \sum\limits_{n=0}^\infty||(L_K+\la I)^{-1}(L_K-S_{\xx}^*S_{\xx})||_{\mathcal{L}(\HH)}^n||(L_K+\la I)^{-1/2}||_{\mathcal{L}(\HH)}
\end{equation}
provided that $||(L_K+\la I)^{-1}(L_K-S_{\xx}^*S_{\xx})||_{\mathcal{L}(\HH)}< 1$. To verify this condition, we consider
$$||(L_K+\la I)^{-1}(S_{\xx}^*S_{\xx}-L_K)||_{\mathcal{L}(\HH)}\leq I_3/\la.$$
Now using Proposition \ref{main.bound} we get with confidence $1-\eta/2$,
$$||(L_K+\la I)^{-1}(S_{\xx}^*S_{\xx}-L_K)||_{\mathcal{L}(\HH)}\leq \frac{4\kappa^2}{\sqrt{m}\la}\log\left(\frac{4}{\eta}\right).$$
From the condition (\ref{l.la.condition.k}) we get with confidence $1-\eta/2$,
\begin{equation}\label{LK.I.LK.Sx}
||(L_K+\la I)^{-1}(S_{\xx}^*S_{\xx}-L_K)||_{\mathcal{L}(\HH)}\leq \frac{1}{2}.
\end{equation}
Consequently, using (\ref{LK.I.LK.Sx}) in the inequality (\ref{SS.LK}) we obtain with probability $1-\eta/2$,
\begin{equation}\label{Sx.Sx.LK1.2}
I_1=||(S_{\xx}^*S_{\xx}+\la I)^{-1}(L_K+\la I)^{1/2}||_{\mathcal{L}(\HH)}\leq 2||(L_K+\la I)^{-1/2}||_{\mathcal{L}(\HH)}\leq \frac{2}{\sqrt{\la}}.
\end{equation}
From Proposition \ref{main.bound} we have with confidence $1-\eta/2$,
$$||S_{\xx}^*S_{\xx}-L_K||_{\mathcal{L}(\HH)}\leq 2\left(\frac{\kappa^2}{m}+\frac{\kappa^2}{\sqrt{m}}\right)\log\left(\frac{4}{\eta}\right).$$
Again from the condition (\ref{l.la.condition.k}) we get with probability $1-\eta/2$,
\begin{equation}\label{I3}
I_3=||S_{\xx}^*S_{\xx}-L_K||_{\mathcal{L}(\HH)}\leq \frac{\la}{2}.
\end{equation}
Therefore, the inequality (\ref{k.error}) together with (\ref{LK.I.app}), (\ref{fla.frho.K}), (\ref{Sx.Sx.LK1.2}), (\ref{I3}) provides the desired bound.
\end{proof}

\begin{thm}\label{err.upper.bound.p}
Let $\zz$ be i.i.d. samples drawn according to the probability measure $\rho\in \PP_\phi$ and $f_{\zz,\la}$ is the regularized solution (\ref{fzl}) corresponding to Tikhonov regularization. Then for all $0<\eta<1$, with confidence $1-\eta$, the following upper bounds holds:
\begin{enumerate}[(i)]
\item Under the assumption that $\phi(t)$, $\sqrt{t}/\phi(t)$ are nondecreasing functions,
$$||f_{\zz,\la}-f_\HH||_\rho \leq 2\left\{R\phi(\la)\sqrt{\la}+\frac{2\kappa M}{m\sqrt{\la}}+\sqrt{\frac{4\Sigma^2\mathcal{N}(\la)}{m}}\right\}\log\left(\frac{4}{\eta}\right)$$
\item  Under the assumption that $\phi(t)$, $t/\phi(t)$ are nondecreasing functions,
$$||f_{\zz,\la}-f_\HH||_\rho \leq \left\{R(\kappa+\sqrt{\la})\phi(\la)+\frac{4\kappa M}{m\sqrt{\la}}+\sqrt{\frac{16\Sigma^2\mathcal{N}(\la)}{m}}\right\}\log\left(\frac{4}{\eta}\right)$$
\end{enumerate}
provided that
\begin{equation}\label{l.la.condition.p}
\sqrt{m}\la \geq 8\kappa^2\log(4/\eta).
\end{equation}
\end{thm}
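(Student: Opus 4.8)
The plan is to mirror the argument of Theorem \ref{err.upper.bound.k}, but to measure the error in the $\LL^2_{\rho_X}$-norm rather than in the RKHS norm, which is why the approximation-error input changes from (\ref{fla.frho.K}) to (\ref{fla.frho.p}) in case (i) and to (\ref{fla.frho.p2}) (together with (\ref{fla.frho.K})) in case (ii). First I would split
\begin{equation*}
\|f_{\zz,\la}-f_\HH\|_\rho \leq \|f_{\zz,\la}-f_\la\|_\rho + \|f_\la-f_\HH\|_\rho,
\end{equation*}
and bound the second term directly by Proposition \ref{approx.err}. For the sample term I would use $\|f_{\zz,\la}-f_\la\|_\rho = \|L_K^{1/2}(f_{\zz,\la}-f_\la)\|_\HH \leq \|(L_K+\la I)^{1/2}(f_{\zz,\la}-f_\la)\|_\HH$, so the task reduces to bounding $\|(L_K+\la I)^{1/2}(f_{\zz,\la}-f_\la)\|_\HH$.

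Next, reusing the identity from the proof of Theorem \ref{err.upper.bound.k},
\begin{equation*}
f_{\zz,\la}-f_\la = (S_\xx^*S_\xx+\la I)^{-1}\{S_\xx^*\yy - S_\xx^*S_\xx f_\HH + (S_\xx^*S_\xx-L_K)(f_\HH-f_\la)\},
\end{equation*}
I would insert $(L_K+\la I)^{1/2}(L_K+\la I)^{-1/2}$ and $(L_K+\la I)^{1/2}(L_K+\la I)^{-1/2}$ appropriately and take norms to get
\begin{equation*}
\|(L_K+\la I)^{1/2}(f_{\zz,\la}-f_\la)\|_\HH \leq I_1' I_2 + I_1' I_3 \|f_\HH-f_\la\|_\HH/\la,
\end{equation*}
where $I_1' = \|(L_K+\la I)^{1/2}(S_\xx^*S_\xx+\la I)^{-1}(L_K+\la I)^{1/2}\|_{\mathcal L(\HH)}$, and $I_2$, $I_3$ are exactly as in Theorem \ref{err.upper.bound.k}, controlled by Proposition \ref{main.bound}. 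The new quantity $I_1'$ is handled by the same Neumann series trick: writing $(L_K+\la I)^{1/2}(S_\xx^*S_\xx+\la I)^{-1}(L_K+\la I)^{1/2} = \{I-(L_K+\la I)^{-1}(L_K-S_\xx^*S_\xx)\}^{-1}$ and using (\ref{LK.I.LK.Sx}) (valid with confidence $1-\eta/2$ under (\ref{l.la.condition.p})) gives $I_1' \leq 2$. Combining with $I_2 \leq 2(\kappa M/(m\sqrt\la)+\sqrt{\Sigma^2\mathcal N(\la)/m})\log(4/\eta)$, with $I_3/\la \leq 1/2$ from (\ref{I3}), and with the appropriate approximation bound from Proposition \ref{approx.err}, then collecting constants over the union of the relevant $\eta/2$-events yields the stated inequalities; in case (i) the $f_\HH - f_\la$ term contributes $R\phi(\la)\sqrt\la/2$ via (\ref{fla.frho.K}) already being available since its hypotheses ($\phi$, $t/\phi(t)$ nondecreasing) are stronger, while in case (i) one only assumes $\phi$, $\sqrt t/\phi(t)$ nondecreasing, so I would instead bound that residual term using $I_3 \|f_\HH-f_\la\|_\rho/\la$ in the $\rho$-norm...

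Here the subtlety I expect to be the main obstacle is precisely this bookkeeping of which approximation-error estimate is legitimately available under which monotonicity hypothesis: in case (i), only (\ref{fla.frho.p}) holds (giving $R\phi(\la)\sqrt\la$), and the cross term $I_1' I_3\|f_\HH-f_\la\|/\la$ must be estimated so that it, too, is of order $\phi(\la)\sqrt\la$ and not larger — this forces one to keep the bound on $f_\HH - f_\la$ in the $\rho$-norm and route it through $(L_K+\la I)^{-1/2}$ rather than dividing by $\la$, or alternatively to absorb it into the leading $R\phi(\la)\sqrt\la$ constant using $I_3/\la \le 1/2$. In case (ii), since both (\ref{fla.frho.p2}) and (\ref{fla.frho.K}) are available, the residual term contributes order $\phi(\la)$, producing the factor $(\kappa+\sqrt\la)\phi(\la)$, and the constants $4$ and $16$ in the statement come from not halving the confidence again. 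Everything else is a routine repetition of the estimates already carried out in Theorem \ref{err.upper.bound.k}, so I would present the proof compactly by referring back to those steps and only spelling out the modified approximation inputs and the resulting constants.
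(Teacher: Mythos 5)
Your route is essentially the paper's own: split the error as $\|f_{\zz,\la}-f_\la\|_\rho+\|f_\la-f_\HH\|_\rho$, write $f_{\zz,\la}-f_\la=(S_\xx^*S_\xx+\la I)^{-1}\{S_\xx^*\yy-S_\xx^*S_\xx f_\HH+(S_\xx^*S_\xx-L_K)(f_\HH-f_\la)\}$, control the sandwiched inverse by a Neumann series under (\ref{l.la.condition.p}), and feed in Proposition \ref{main.bound} and Proposition \ref{approx.err}; the only cosmetic difference is that you pass to $\|(L_K+\la I)^{1/2}(f_{\zz,\la}-f_\la)\|_\HH$ and bound $I_1'$, where the paper bounds $I_4=\|L_K^{1/2}(S_\xx^*S_\xx+\la I)^{-1}(L_K+\la I)^{1/2}\|_{\mathcal{L}(\HH)}\le 2$ directly. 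However, two steps as written need repair. First, the identity you invoke for $I_1'$ is false: $\{I-(L_K+\la I)^{-1}(L_K-S_\xx^*S_\xx)\}^{-1}=(S_\xx^*S_\xx+\la I)^{-1}(L_K+\la I)$, which is not the symmetric quantity $I_1'$; you should instead use the symmetric sandwich $\{I-(L_K+\la I)^{-1/2}(L_K-S_\xx^*S_\xx)(L_K+\la I)^{-1/2}\}^{-1}$ (this is exactly the paper's (\ref{LK.I.up})--(\ref{LK1.2})), or argue via self-adjointness and spectral radius, to get $I_1'\le 2$.

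Second, and more consequentially, the cross term should be $I_1'I_3\|f_\HH-f_\la\|_\HH/\sqrt{\la}$, not $/\la$: after extracting $I_1'$ only one factor $(L_K+\la I)^{-1/2}$, of norm at most $\la^{-1/2}$, acts on $(S_\xx^*S_\xx-L_K)(f_\HH-f_\la)$. With $/\la$ and $I_3\le\la/2$ the residual is of order $\phi(\la)$, which yields neither the stated $R\phi(\la)\sqrt{\la}$ term in case (i) nor the $(\kappa+\sqrt{\la})\phi(\la)$ constant in case (ii); with the correct $/\sqrt{\la}$ one gets $\sqrt{\la}\,\|f_\la-f_\HH\|_\HH$, which is precisely the paper's intermediate bound (\ref{fzl.fl.p}). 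You flag this issue and list possible fixes, but the workable one is not the ``$\rho$-norm routing'' (no available estimate controls $(L_K+\la I)^{-1/2}(S_\xx^*S_\xx-L_K)$ relative to $L_K^{1/2}$); it is the corrected $\sqrt{\la}$ factor combined with the RKHS bound $\|f_\la-f_\HH\|_\HH\le R\phi(\la)$, which is legitimately available in case (i) as well because $\sqrt{t}/\phi(t)$ nondecreasing forces $t/\phi(t)=\sqrt{t}\cdot\bigl(\sqrt{t}/\phi(t)\bigr)$ to be nondecreasing, so the hypotheses of (\ref{fla.frho.K}) are met. With these two corrections your argument coincides with the paper's proof and reproduces the stated constants.
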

\begin{proof}
In order to establish the error bounds of $f_{\zz,\la}-f_\HH$ in $\LL^2$-norm, we first estimate $f_{\zz,\la}-f_{\la}$ in $\LL^2$-norm:
$$f_{\zz,\la}-f_{\la}=(S_{\xx}^*S_{\xx}+\la I)^{-1}\left\{S_{\xx}^*\yy- S_{\xx}^*S_{\xx}f_{\la}-L_K(f_\HH-f_{\la})\right\}.$$
Employing $\LL^2$-norm, we get
\begin{eqnarray}\label{p.error}
||f_{\zz,\la}-f_{\la}||_\rho &\leq& ||L_K^{1/2}(S_{\xx}^*S_{\xx}+\la I)^{-1}\{S_{\xx}^*\yy-S_{\xx}^*S_{\xx}f_{\HH}+(S_{\xx}^*S_{\xx}-L_K)(f_\HH-f_{\la})\}||_\HH \\  \nonumber
&\leq& I_4\{I_2+I_3||f_\la-f_\HH||_\HH/\sqrt{\la}\},
\end{eqnarray}
where $I_2=||(L_K+\la I)^{-1/2}(S_{\xx}^*\yy-S_{\xx}^*S_{\xx}f_\HH)||_{\HH}$, $I_3=||S_{\xx}^*S_{\xx}-L_K||_{\mathcal{L}(\HH)}$ and $I_4=||L_K^{1/2}(S_{\xx}^*S_{\xx}+\la I)^{-1}(L_K+\la I)^{1/2}||_{\mathcal{L}(\HH)}$.

The estimates of $I_2$ and $I_3$ can be obtained from Proposition \ref{main.bound}. To get the estimate for the sample error, we consider the following expression to bound $I_4$,
\begin{equation*}
L_K^{1/2}(S_{\xx}^*S_{\xx}+\la I)^{-1}(L_K+\la I)^{1/2}=L_K^{1/2}(L_K+\la I)^{-1/2}\{I-(L_K+\la I)^{-1/2}(L_K-S_{\xx}^*S_{\xx})(L_K+\la I)^{-1/2}\}^{-1},
\end{equation*}
which implies
\begin{equation}\label{1.x^n}
I_4 \leq ||L_K^{1/2}(L_K+\la I)^{-1/2}||_{\mathcal{L}(\HH)}||\{I-(L_K+\la I)^{-1/2}(L_K-S_{\xx}^*S_{\xx})(L_K+\la I)^{-1/2}\}^{-1}||_{\mathcal{L}(\HH)}.
\end{equation}
To analyze the second term we consider the expression,
$$||(L_K+\la I)^{-1/2}(L_K-S_{\xx}^*S_{\xx})(L_K+\la I)^{-1/2}||_{\mathcal{L}(\HH)} \leq \frac{1}{\la}||S_{\xx}^*S_{\xx}-L_K||_{\mathcal{L}(\HH)}.$$
From Proposition \ref{main.bound} we obtain with probability $1-\eta/2$,
$$||(L_K+\la I)^{-1/2}(L_K-S_{\xx}^*S_{\xx})(L_K+\la I)^{-1/2}||_{\mathcal{L}(\HH)} \leq \frac{4\kappa^2}{\sqrt{m}\la}\log\left(\frac{4}{\eta}\right).$$
Using the condition (\ref{l.la.condition.p}) we get with confidence $1-\eta/2$,
\begin{equation}\label{LK.I.up}
||(L_K+\la I)^{-1/2}(L_K-S_{\xx}^*S_{\xx})(L_K+\la I)^{-1/2}||_{\mathcal{L}(\HH)} \leq \frac{1}{2}.
\end{equation}
Therefore eqn. (\ref{1.x^n}) with (\ref{LK.I.up}) implies that with probability $1-\eta/2$,
\begin{equation}\label{LK1.2}
I_4=||L_K^{1/2}(S_{\xx}^*S_{\xx}+\la I)^{-1}(L_K+\la I)^{1/2}||_{\mathcal{L}(\HH)} \leq 2||L_K^{1/2}(L_K+\la I)^{-1/2}||_{\mathcal{L}(\HH)} \leq 2.
\end{equation}
Using (\ref{LK.I.app}), (\ref{I3}), (\ref{LK1.2}) in (\ref{p.error}), we obtain with probability $1-\eta$,
\begin{equation}\label{fzl.fl.p}
||f_{\zz,\la}-f_{\la}||_\rho \leq 4\left(\frac{\kappa M}{m\sqrt{\la}}+\sqrt{\frac{\Sigma^2\mathcal{N}(\la)}{m}}\right)\log\left(\frac{4}{\eta}\right)+\sqrt{\la}||f_\la-f_\HH||_\HH.
\end{equation}
Using the approximation error bounds from Proposition \ref{approx.err}, we get the required estimates.
\end{proof}

We derive the convergence rates of Tikhonov regularizer based on data-driven strategy of the parameter choice of $\la$ for the class of probability measure $\mathcal{P}_{\phi,b}$.
\begin{thm}\label{err.upper.bound.p.para}
Under the same assumptions of Theorem \ref{err.upper.bound.p} and hypothesis (\ref{poly.decay}), the convergence of the estimator $f_{\zz,\la}$ (\ref{fzl}) to the target function $f_\HH$ can be described as:
\begin{enumerate}[(i)]
\item  If $\phi(t)$ and $\sqrt{t}/\phi(t)$ are nondecreasing functions. Then under the parameter choice $\la\in(0,1],~\la=\Psi^{-1}(m^{-1/2})$ where $\Psi(t)=t^{\frac{1}{2}+\frac{1}{2b}}\phi(t)$, we have
$$Prob_\zz\left\{||f_{\zz,\la}-f_\HH||_\rho\leq C(\Psi^{-1}(m^{-1/2}))^{1/2}\phi(\Psi^{-1}(m^{-1/2}))\log\left(\frac{4}{\eta}\right)\right\}\geq 1-\eta$$
and
$$\lim\limits_{\tau\rightarrow\infty}\limsup\limits_{m\rightarrow\infty}\sup\limits_{\rho\in \PP_{\phi,b}} Prob_{\zz}\left\{||f_{\zz,\la}-f_\HH||_\rho>\tau(\Psi^{-1}(m^{-1/2}))^{1/2}\phi(\Psi^{-1}(m^{-1/2}))\right\}=0,$$
\item  If $\phi(t)$ and $t/\phi(t)$ are nondecreasing functions. Then under the parameter choice $\la\in(0,1],~\la=\Theta^{-1}(m^{-1/2})$ where $\Theta(t)=t^{\frac{1}{2b}}\phi(t)$, we have
$$Prob_\zz\left\{||f_{\zz,\la}-f_\HH||_\rho\leq C'\phi(\Theta^{-1}(m^{-1/2}))\log\left(\frac{4}{\eta}\right)\right\}\geq 1-\eta$$
and
$$\lim\limits_{\tau\rightarrow\infty}\limsup\limits_{m\rightarrow\infty}\sup\limits_{\rho\in \PP_{\phi,b}} Prob_{\zz}\left\{||f_{\zz,\la}-f_\HH||_\rho>\tau\phi(\Theta^{-1}(m^{-1/2}))\right\}=0.$$
\end{enumerate}
\end{thm}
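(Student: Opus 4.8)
The plan is to obtain both rates directly from Theorem \ref{err.upper.bound.p}, using the polynomial decay (\ref{poly.decay}) only through the effective-dimension estimate (\ref{N(l).bound}). First I would fix $0<\eta<1$ and invoke Theorem \ref{err.upper.bound.p} under its proviso (\ref{l.la.condition.p}), $\sqrt{m}\,\la\geq 8\kappa^2\log(4/\eta)$ (for the parameter about to be chosen this becomes a joint constraint on $m$ and $\eta$). Substituting $\mathcal{N}(\la)\leq\frac{\beta b}{b-1}\la^{-1/b}$ into part (i) leaves, with confidence $1-\eta$, a sum of a bias term $R\phi(\la)\sqrt\la$, a ``fast'' sampling term $\tfrac{2\kappa M}{m\sqrt\la}$, and a ``variance'' term of order $\Sigma\sqrt{\beta b/((b-1)\,m\,\la^{1/b})}$; part (ii) is the same with bias term $R(\kappa+\sqrt\la)\phi(\la)$ and the constants adjusted accordingly. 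Before proceeding I would record that $\Psi(t)=t^{1/2+1/(2b)}\phi(t)$ and $\Theta(t)=t^{1/(2b)}\phi(t)$ are continuous, strictly increasing functions on $[0,\kappa^2]$ vanishing at $0$, so that $\Psi^{-1},\Theta^{-1}$ are defined and the prescribed $\la$ lies in $(0,1]$ for all large $m$.

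The crux is that the prescribed $\la$ equalizes these terms. In case (i) the equation $\Psi(\la)=m^{-1/2}$ is exactly $m^{-1}=\la^{1+1/b}\phi(\la)^2$, equivalently $\tfrac{1}{m\,\la^{1/b}}=\la\,\phi(\la)^2$; hence the variance term is $\sqrt\la\,\phi(\la)$ up to a constant depending on $\Sigma,b,\beta$, i.e.\ of the same order as the bias term. For the fast term I would write $\tfrac{1}{m\sqrt\la}=\tfrac{1}{\sqrt{m}\,\la}\cdot\tfrac{\sqrt\la}{\sqrt{m}}$, bound $\tfrac{1}{\sqrt{m}\,\la}\leq\tfrac{1}{8\kappa^2\log 4}$ by the proviso, and use $\tfrac{1}{\sqrt{m}}=\Psi(\la)$ together with $\la\leq1$ to see that it too is $O(\sqrt\la\,\phi(\la))$. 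Adding the three contributions gives $||f_{\zz,\la}-f_\HH||_\rho\leq C\sqrt\la\,\phi(\la)\log(4/\eta)$ with $\la=\Psi^{-1}(m^{-1/2})$ and $C$ depending only on $R,\kappa,M,\Sigma,b,\beta,\phi(\kappa^2)$, which is the first assertion of (i). Case (ii) is handled the same way: $\Theta(\la)=m^{-1/2}$ reads $\tfrac{1}{m\,\la^{1/b}}=\phi(\la)^2$, so the variance term is $O(\phi(\la))$, the bias term is $\leq R(\kappa+1)\phi(\la)$, the fast term is again absorbed via the proviso, and one gets $||f_{\zz,\la}-f_\HH||_\rho\leq C'\phi(\la)\log(4/\eta)$.

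For the minimax assertions I would use that $C,C'$ do not depend on the particular $\rho\in\PP_{\phi,b}$. Given $\tau>0$, choose $\eta=\eta_m$ by $C\log(4/\eta_m)=\tau$ (respectively $C'\log(4/\eta_m)=\tau$), valid once $m$ is large enough that this $\eta_m$ still meets the proviso $\sqrt{m}\,\la(m)\geq 8\kappa^2\log(4/\eta_m)$; then the high-probability bound just proved gives, for all such $m$,
\[
\sup_{\rho\in\PP_{\phi,b}}Prob_\zz\left\{||f_{\zz,\la}-f_\HH||_\rho>\tau\,(\Psi^{-1}(m^{-1/2}))^{1/2}\phi(\Psi^{-1}(m^{-1/2}))\right\}\leq 4e^{-\tau/C},
\]
so the $\limsup_{m\to\infty}$ of the left-hand side is at most $4e^{-\tau/C}$, which tends to $0$ as $\tau\to\infty$; case (ii) is identical with $\Theta$ in place of $\Psi$.

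I expect the real difficulty to be that last coordination rather than the estimates. The term-by-term comparison is essentially bookkeeping once the identities $\Psi(\la)=m^{-1/2}$ and $\Theta(\la)=m^{-1/2}$ are written out, since $\Psi$ and $\Theta$ were reverse-engineered precisely so that bias and variance have the same order. What needs genuine care is making the limsup rigorous: the confidence requirement $C\log(4/\eta)\leq\tau$ and the sampling proviso $\sqrt{m}\,\la(m)\geq 8\kappa^2\log(4/\eta)$ must hold simultaneously, which only happens for $m$ past a threshold depending on $\tau$ and on how fast $\phi$ decays relative to $b$; being explicit about this ``for $m$ large'' qualification (and about the existence of $\Psi^{-1},\Theta^{-1}$) is the part of the write-up that must be handled carefully.
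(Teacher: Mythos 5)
Your proposal is correct and follows essentially the same route as the paper: plug the effective-dimension bound (\ref{N(l).bound}) into Theorem \ref{err.upper.bound.p}, use the defining identities $\Psi(\la)=m^{-1/2}$ and $\Theta(\la)=m^{-1/2}$ to match the variance term with the bias term, absorb the $\tfrac{1}{m\sqrt\la}$ term via the proviso (\ref{l.la.condition.p}) (the paper uses a ``sufficiently large $m$'' argument in case (i) and the proviso in case (ii), a negligible difference), and convert the resulting high-probability bound into the minimax limsup statement by setting $\tau=C\log(4/\eta)$, i.e.\ $\eta_\tau=4e^{-\tau/C}$. Your extra care about the threshold in $m$ and the invertibility of $\Psi,\Theta$ is a harmless refinement of what the paper leaves implicit.
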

\begin{proof}
(i) Let $\Psi(t)=t^{\frac{1}{2}+\frac{1}{2b}}\phi(t)$. Then it follows,
$$\lim\limits_{t \to 0}\frac{\Psi(t)}{\sqrt{t}}=\lim\limits_{t \to 0}\frac{t^2}{\Psi^{-1}(t)}=0.$$
Under the parameter choice $\la=\Psi^{-1}(m^{-1/2})$ we have,
$$\lim\limits_{m\to\infty}m\la=\infty.$$
Therefore for sufficiently large $m$,
$$\frac{1}{m\la}=\frac{\la^{\frac{1}{2b}}\phi(\la)}{\sqrt{m\la}}\leq \la^{\frac{1}{2b}}\phi(\la).$$
Under the fact $\la \leq 1$ from Theorem \ref{err.upper.bound.p} and eqn. (\ref{N(l).bound}) follows that with confidence $1-\eta$,
\begin{equation}\label{fzl.fl.inter}
||f_{\zz,\la}-f_\HH||_\rho\leq C(\Psi^{-1}(m^{-1/2}))^{1/2}\phi(\Psi^{-1}(m^{-1/2}))\log\left(\frac{4}{\eta}\right),
\end{equation}
where $C=2R+4\kappa M+4\sqrt{\beta b\Sigma^2/(b-1)}$.

Now defining $\tau:=C\log\left(\frac{4}{\eta}\right)$ gives
$$\eta=\eta_\tau=4e^{-\tau/C}.$$
The estimate (\ref{fzl.fl.inter}) can be reexpressed as
\begin{equation}\label{minimax.p1}
Prob_{\zz}\{||f_{\zz,\la}-f_\HH||_\rho>\tau(\Psi^{-1}(m^{-1/2}))^{1/2}\phi(\Psi^{-1}(m^{-1/2}))\} \leq \eta_{\tau}.
\end{equation}
(ii) Suppose $\Theta(t)=t^{\frac{1}{2b}}\phi(t).$ Then the condition (\ref{l.la.condition.p}) follows that
$$\sqrt{m\la}\geq \frac{8\kappa^2\log\left(4/\eta\right)}{\sqrt{\la}}\geq \frac{8\kappa^2}{\sqrt{\la}}.$$
Hence under the parameter choice $\la\in(0,1],~\la=\Theta^{-1}(m^{-1/2})$ we have
$$\frac{1}{m\sqrt{\la}}\leq\frac{\sqrt{\la}}{8\kappa^2\sqrt{m}} \leq\frac{\la^{\frac{1}{2}+\frac{1}{2b}}\phi(\la)}{8\kappa^2}\leq\frac{\phi(\la)}{8\kappa^2}.$$
From Theorem \ref{err.upper.bound.p} and eqn. (\ref{N(l).bound}), it follows that with confidence $1-\eta$,
\begin{equation}\label{fzl.fp.Theta.p}
||f_{\zz,\la}-f_\HH||_\rho \leq C'\phi(\Theta^{-1}(m^{-1/2}))\log\left(\frac{4}{\eta}\right),
\end{equation}
where $C':=R(\kappa+1)+M/2\kappa+4\sqrt{\beta b\Sigma^2/(b-1)}.$

Now defining $\tau:=C'\log\left(\frac{4}{\eta}\right)$ gives
$$\eta=\eta_{\tau}=4e^{-\tau/C'}.$$
The estimate (\ref{fzl.fp.Theta.p}) can be reexpressed as
\begin{equation}\label{minimax.p2}
Prob_{\zz}\left\{||f_{\zz,\la}-f_\HH||_\rho>\tau\phi(\Theta^{-1}(m^{-1/2}))\right\}\leq \eta_{\tau}.
\end{equation}
Then from eqn. (\ref{minimax.p1}) and (\ref{minimax.p2}) our conclusions follow.
\end{proof}

\begin{thm}\label{err.upper.bound.k.para}
Under the same assumptions of Theorem \ref{err.upper.bound.k} and hypothesis (\ref{poly.decay}) with the parameter choice $\la\in(0,1],~\la=\Psi^{-1}(m^{-1/2})$ where $\Psi(t)=t^{\frac{1}{2}+\frac{1}{2b}}\phi(t)$, the convergence of the estimator $f_{\zz,\la}$ (\ref{fzl}) to the target function $f_\HH$ can be described as
$$Prob_\zz\left\{||f_{\zz,\la}-f_\HH||_\HH \leq C\phi(\Psi^{-1}(m^{-1/2}))\log\left(\frac{4}{\eta}\right)\right\}\geq 1-\eta$$
and
$$\lim\limits_{\tau\rightarrow\infty}\limsup\limits_{m\rightarrow\infty}\sup\limits_{\rho\in \PP_{\phi,b}} Prob_{\zz}\left\{||f_{\zz,\la}-f_\HH||_{\HH}>\tau\phi(\Psi^{-1}(m^{-1/2}))\right\}=0.$$
\end{thm}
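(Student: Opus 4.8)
The plan is to obtain the result directly from the non-asymptotic bound of Theorem~\ref{err.upper.bound.k}, in complete analogy with the proof of Theorem~\ref{err.upper.bound.p.para}(i), but starting from the $\HH$-norm bound rather than the $\rho$-norm one. First I would record the consequences of the parameter choice $\la=\Psi^{-1}(m^{-1/2})$ with $\Psi(t)=t^{\frac12+\frac1{2b}}\phi(t)$. Since $\Psi$ is continuous, increasing and vanishes at $0$, $\la\to0$ as $m\to\infty$. Squaring $\Psi(\la)=m^{-1/2}$ gives $\la^{1+1/b}\phi(\la)^2=m^{-1}$, hence
\[
m\la=\la^{-1/b}\phi(\la)^{-2},\qquad \frac{1}{m\la}=\la^{1/b}\phi(\la)^2,\qquad \frac{1}{\sqrt{m\la}}=\la^{1/(2b)}\phi(\la).
\]
In particular $m\la\to\infty$, so $\sqrt{m\la}\ge1$ for $m$ large, and then $\frac{1}{m\la}=\la^{1/(2b)}\phi(\la)\cdot\frac{1}{\sqrt{m\la}}\le\la^{1/(2b)}\phi(\la)\le\phi(\la)$ because $\la\in(0,1]$. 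For $m$ large the sample-size condition~(\ref{l.la.condition.k}) also holds (as in the proof of Theorem~\ref{err.upper.bound.p.para}), so Theorem~\ref{err.upper.bound.k} applies.

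Next I would feed these into the bound of Theorem~\ref{err.upper.bound.k}. Inside its braces the first summand is $R\phi(\la)$; the second obeys $\frac{2\kappa M}{m\la}\le2\kappa M\,\phi(\la)$ by the previous paragraph; and for the third, the polynomial decay~(\ref{poly.decay}) and~(\ref{N(l).bound}) give $\mathcal N(\la)\le\frac{\beta b}{b-1}\la^{-1/b}$, so that, since $\frac{\la^{-1/b}}{m\la}=\phi(\la)^2$,
\[
\sqrt{\frac{4\Sigma^2\mathcal N(\la)}{m\la}}\ \le\ 2\Sigma\sqrt{\frac{\beta b}{b-1}}\cdot\sqrt{\frac{\la^{-1/b}}{m\la}}\ =\ 2\sqrt{\frac{\beta b\Sigma^2}{b-1}}\,\phi(\la).
\]
Thus the braces are at most $\big(R+2\kappa M+2\sqrt{\beta b\Sigma^2/(b-1)}\big)\phi(\la)$, and the full bound becomes, with confidence $1-\eta$,
\[
||f_{\zz,\la}-f_\HH||_\HH\ \le\ C\,\phi\big(\Psi^{-1}(m^{-1/2})\big)\log\Big(\frac{4}{\eta}\Big),\qquad C:=2R+4\kappa M+4\sqrt{\beta b\Sigma^2/(b-1)},
\]
which is the first assertion.

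For the minimax assertion I would reuse the reparametrization from the end of the proof of Theorem~\ref{err.upper.bound.p.para}: setting $\tau:=C\log(4/\eta)$, i.e.\ $\eta=\eta_\tau:=4e^{-\tau/C}$, the previous bound reads
\[
Prob_{\zz}\Big\{||f_{\zz,\la}-f_\HH||_\HH>\tau\,\phi\big(\Psi^{-1}(m^{-1/2})\big)\Big\}\ \le\ 4e^{-\tau/C}.
\]
Since $C$ and $\Psi$ depend only on the constants $R,\kappa,M,\Sigma,\al,\beta,b$ fixing the class $\PP_{\phi,b}$ and not on the individual $\rho$, this estimate is uniform over $\rho\in\PP_{\phi,b}$; taking $\sup_{\rho\in\PP_{\phi,b}}$, then $\limsup_{m\to\infty}$, and finally $\tau\to\infty$ sends the right-hand side to $0$, which is the second assertion.

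Given Theorem~\ref{err.upper.bound.k}, this statement is essentially a corollary, so I do not expect a serious obstacle; the only point needing care is the bookkeeping above — verifying that the particular choice $\la=\Psi^{-1}(m^{-1/2})$ balances all three error terms to the common order $\phi(\la)$ (this is exactly what the identities for $m\la$ encode) and that the sample-size condition~(\ref{l.la.condition.k}) is in force for $m$ large — together with the observation that the constants $C$ and $\Psi$ are uniform over $\PP_{\phi,b}$, which is what makes the passage to the minimax formulation legitimate. As in Theorem~\ref{err.upper.bound.p.para}, for very small $\eta$ the first inequality becomes informative only once $m$ is large enough that~(\ref{l.la.condition.k}) holds, and it is the $\limsup$ in the second assertion that renders the asymptotic rate statement clean.
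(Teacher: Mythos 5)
Your argument is correct and is essentially the paper's own (omitted) proof: the paper states this theorem without proof as the exact analogue of its proof of Theorem \ref{err.upper.bound.p.para}(i), and your steps — plugging $\la=\Psi^{-1}(m^{-1/2})$ into Theorem \ref{err.upper.bound.k}, using $\frac{1}{m\la}\leq\la^{1/(2b)}\phi(\la)\leq\phi(\la)$, bounding $\mathcal{N}(\la)$ via (\ref{N(l).bound}), and reparametrizing $\tau=C\log(4/\eta)$ with the same constant $C=2R+4\kappa M+4\sqrt{\beta b\Sigma^2/(b-1)}$ — reproduce that argument verbatim. The one caveat, shared with the paper which glosses over it in the same way, is that the proviso (\ref{l.la.condition.k}) does not follow automatically for large $m$ under this parameter choice for every admissible $\phi$ (e.g.\ for $\phi(t)=t^{r}$ with $r<\frac{b-1}{2b}$ one has $\sqrt{m}\la\to 0$), so it should be carried along as a standing assumption rather than asserted to hold eventually.
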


We obtain the following corollary as a consequence of Theorem \ref{err.upper.bound.p.para}, \ref{err.upper.bound.k.para}.
\begin{corollary}\label{err.upper.bound.cor}
Under the same assumptions of Theorem \ref{err.upper.bound.p.para}, \ref{err.upper.bound.k.para} for Tikhonov regularization with H\"{o}lder's source condition $f_\HH\in\Omega_{\phi,R},~\phi(t)=t^r$, for all $0<\eta<1$, with confidence $1-\eta$, for the parameter choice $\la=m^{-\frac{b}{2br+b+1}}$, we have
$$||f_{\zz,\la}-f_\HH||_{\HH}\leq Cm^{-\frac{br}{2br+b+1}}\log\left(\frac{4}{\eta}\right) \text{   for }0\leq r\leq 1,$$
$$||f_{\zz,\la}-f_\HH||_\rho\leq  Cm^{-\frac{2br+b}{4br+2b+2}}\log\left(\frac{4}{\eta}\right)\text{   for }0\leq r\leq \frac{1}{2}$$
and for the parameter choice $\la=m^{-\frac{b}{2br+1}}$, we have
$$||f_{\zz,\la}-f_\HH||_\rho\leq C'm^{-\frac{br}{2br+1}}\log\left(\frac{4}{\eta}\right)\text{   for }0\leq r\leq 1.$$
\end{corollary}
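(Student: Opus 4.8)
The plan is to obtain Corollary \ref{err.upper.bound.cor} simply by specializing Theorems \ref{err.upper.bound.p.para} and \ref{err.upper.bound.k.para} to the power index function $\phi(t)=t^r$, and then evaluating the abstract data-driven parameter choices and the resulting convergence rates in closed form. So the whole argument splits into two routine steps: (a) verify that $\phi(t)=t^r$ is admissible and that the monotonicity hypotheses of the two theorems cut out exactly the stated ranges of $r$; (b) invert the auxiliary functions $\Psi$ and $\Theta$ for this $\phi$, plug in $m^{-1/2}$, and simplify the exponents.

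For step (a): when $r\ge 0$ the map $t\mapsto t^r$ is continuous and nondecreasing on $[0,\kappa^2]$ with $\phi(0)=0$, hence an index function (and the degenerate case $r=0$, i.e. $f_\HH=g\in\HH$, is still covered by the estimates). The quotient conditions reduce to elementary monotonicity of powers: $t/\phi(t)=t^{1-r}$ is nondecreasing iff $r\le 1$; $\sqrt t/\phi(t)=t^{1/2-r}$ is nondecreasing iff $r\le 1/2$; and $\phi(t)\sqrt t=t^{r+1/2}$ is automatically nondecreasing for $r\ge 0$. Thus the $\HH$-norm bound and the second $\LL^2$-norm bound (from Theorem \ref{err.upper.bound.p.para}(ii) and Theorem \ref{err.upper.bound.k.para}) require $0\le r\le 1$, while the first $\LL^2$-norm bound (Theorem \ref{err.upper.bound.p.para}(i)) requires $0\le r\le 1/2$, matching the claim.

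For step (b): with $\phi(t)=t^r$ we get $\Psi(t)=t^{\frac12+\frac1{2b}+r}$, so $\Psi^{-1}(u)=u^{1/(\frac12+\frac1{2b}+r)}$; taking $u=m^{-1/2}$ and simplifying the exponent yields the parameter choice $\la=\Psi^{-1}(m^{-1/2})=m^{-\frac{b}{2br+b+1}}$. Likewise $\Theta(t)=t^{\frac1{2b}+r}$ gives $\Theta^{-1}(m^{-1/2})=m^{-\frac{b}{2br+1}}$. Substituting into the rates supplied by the two theorems: the $\HH$-rate is $\phi(\la)=\la^r=m^{-\frac{br}{2br+b+1}}$; the first $\LL^2$-rate is $\la^{1/2}\phi(\la)=\la^{r+1/2}=m^{-\frac{2br+b}{4br+2b+2}}$; and the second $\LL^2$-rate is $\phi(\Theta^{-1}(m^{-1/2}))=\la^r=m^{-\frac{br}{2br+1}}$ with $\la=m^{-\frac{b}{2br+1}}$. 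These are precisely the three displayed bounds, with $C$ and $C'$ the constants already exhibited in the proofs of Theorems \ref{err.upper.bound.p.para} and \ref{err.upper.bound.k.para}.

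There is no genuinely hard step: the content is exponent bookkeeping plus checking that the chosen $\la$ lies in $(0,1]$ and eventually satisfies the side condition $\sqrt m\,\la\ge 8\kappa^2\log(4/\eta)$ of Theorems \ref{err.upper.bound.p} and \ref{err.upper.bound.k}, which holds for large $m$ because $\la\to 0$ while $m\la\to\infty$ under both parameter rules. The only places warranting a little care are the endpoint values of $r$ (where the relevant quotient is constant rather than strictly increasing, still permissible since only nondecreasing index functions are demanded) and the trivial $r=0$ case; I would dispatch these in a sentence each.
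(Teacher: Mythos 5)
Your derivation is correct and is essentially the paper's own route: the corollary is stated there as an immediate specialization of Theorems \ref{err.upper.bound.p.para} and \ref{err.upper.bound.k.para} to $\phi(t)=t^r$, and your inversion of $\Psi$ and $\Theta$ (giving $\la=m^{-\frac{b}{2br+b+1}}$ and $\la=m^{-\frac{b}{2br+1}}$) together with the matching of the monotonicity hypotheses to the ranges $0\le r\le 1$ and $0\le r\le \frac{1}{2}$ is exactly the intended bookkeeping, with $C$, $C'$ the constants from those theorems.

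One caveat: your closing remark that the side condition $\sqrt{m}\la\ge 8\kappa^2\log(4/\eta)$ ``holds for large $m$ because $\la\to 0$ while $m\la\to\infty$'' is not a valid inference. Under $\la=\Psi^{-1}(m^{-1/2})=m^{-\frac{b}{2br+b+1}}$ one has $\sqrt{m}\,\la=m^{\frac{2br-b+1}{2(2br+b+1)}}$, which tends to $0$ whenever $r<\frac{b-1}{2b}$; similarly, for the $\Theta$-choice $\la=m^{-\frac{b}{2br+1}}$ one gets $\sqrt{m}\,\la=m^{\frac{2br+1-2b}{2(2br+1)}}\to 0$ whenever $r<\frac{2b-1}{2b}$. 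So for small $r$ the side condition eventually fails and cannot be ``dispatched in a sentence.'' This does not invalidate your derivation of the corollary from Theorems \ref{err.upper.bound.p.para} and \ref{err.upper.bound.k.para} as stated, since their conclusions are formulated without that side condition (the paper absorbs it, somewhat loosely, into ``sufficiently large $m$'' in their proofs), but the gap you would be papering over is inherited from those theorems rather than resolved by the $m\la\to\infty$ observation.
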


\subsection{Upper rates for general regularization schemes}
Bauer et al. \cite{Bauer} discussed the error estimates for general regularization schemes under general source condition. Here we study the convergence issues for general regularization schemes under general source condition and the polynomial decay of the eigenvalues of the integral operator $L_K$. We define the regularization in learning theory framework similar to considered for ill-posed inverse problems (See Section 3.1 \cite{Bauer}).

\begin{defi}\label{regularization}
A family of functions $g_\la:[0,\kappa^2]\to\RR$, $0<\la\leq \kappa^2$, is said to be the regularization if it satisfies the following conditions:
\begin{itemize}
\item $\exists D:\sup\limits_{\sigma\in(0,\kappa^2]}|\sigma g_\la(\sigma)|\leq D$.
\item $\exists B:\sup\limits_{\sigma\in(0,\kappa^2]}|g_\la(\sigma)|\leq\frac{B}{\la}$.
\item $\exists \gamma:\sup\limits_{\sigma\in(0,\kappa^2]}|1- g_\la(\sigma)\sigma|\leq \gamma$.
\item The maximal $p$ satisfying the condition:
$$\sup\limits_{\sigma\in(0,\kappa^2]}|1-g_\la(\sigma)\sigma|\sigma^p\leq\gamma_p\la^p$$
is called the qualification of the regularization $g_\la$, where $\gamma_p$ does not depend on $\la$.
\end{itemize}
\end{defi}
The properties of general regularization are satisfied by the large class of learning algorithms which are essentially all the linear regularization schemes. We refer to Section 2.2 \cite{Lu.book} for brief discussion of the regularization schemes. Here we consider general regularized solution corresponding to the above regularization:
\begin{equation}\label{gen.reg.solu}
f_{\zz,\la}=g_\la(S_{\xx}^*S_{\xx})S_{\xx}^*\yy.
\end{equation}
Here we are discussing the connection between the qualification of the regularization and general source condition \cite{Mathe}.
\begin{defi}
The qualification $p$ covers the index function $\phi$ if the function $t\to\frac{t^p}{\phi(t)}$ on $t\in(0,\kappa^2]$ is nondecreasing.
\end{defi}
The following result is a restatement of Proposition 3 \cite{Mathe}.
\begin{prop}
Suppose $\phi$ is a nondecreasing index function and the qualification of the regularization $g_\la$ covers $\phi$. Then
$$\sup\limits_{\sigma\in(0,\kappa^2]}|1-g_\la(\sigma)\sigma|\phi(\sigma)\leq c_g\phi(\la),~~c_g=\max(\gamma,\gamma_p).$$
\end{prop}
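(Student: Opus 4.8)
The plan is to bound the supremum by splitting the interval $(0,\kappa^2]$ at the regularization parameter $\la$ and estimating the two resulting pieces separately: one piece by pure monotonicity of $\phi$ together with the third defining property of a regularization, and the other by the qualification inequality combined with the covering hypothesis.

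First I would treat the range $0<\sigma\le\la$. On this part the bias factor is controlled by the property $\sup_{\sigma\in(0,\kappa^2]}|1-g_\la(\sigma)\sigma|\le\gamma$ from Definition \ref{regularization}, while monotonicity of the index function gives $\phi(\sigma)\le\phi(\la)$. Multiplying the two estimates yields $|1-g_\la(\sigma)\sigma|\phi(\sigma)\le\gamma\phi(\la)$ for every such $\sigma$.

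Next I would treat the range $\la<\sigma\le\kappa^2$. Here the covering hypothesis is the key ingredient: since $t\mapsto t^p/\phi(t)$ is nondecreasing on $(0,\kappa^2]$, where $p$ is the qualification of $g_\la$, evaluating this ratio at $\la$ and at $\sigma>\la$ gives $\phi(\sigma)\le(\sigma/\la)^p\phi(\la)$. Substituting this bound and then invoking the qualification inequality $\sup_{\sigma\in(0,\kappa^2]}|1-g_\la(\sigma)\sigma|\sigma^p\le\gamma_p\la^p$ produces
$$|1-g_\la(\sigma)\sigma|\phi(\sigma)\le\frac{\phi(\la)}{\la^p}\,|1-g_\la(\sigma)\sigma|\sigma^p\le\frac{\phi(\la)}{\la^p}\,\gamma_p\la^p=\gamma_p\phi(\la).$$

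Taking the supremum over all of $(0,\kappa^2]$ and combining the two cases gives the bound $\max(\gamma,\gamma_p)\phi(\la)=c_g\phi(\la)$, which is the claim. There is essentially no serious obstacle here; the only points requiring care are that $p$ must be precisely the qualification exponent that appears in the covering definition, so that the qualification inequality with constant $\gamma_p$ is genuinely available, and that $0<\la\le\kappa^2$ so that both $\la$ and $\sigma$ remain in the domain on which the monotonicity of $\phi$ and of $t^p/\phi(t)$ are assumed.
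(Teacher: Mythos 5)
Your argument is correct. The paper itself offers no proof of this proposition --- it is simply quoted as a restatement of Proposition 3 of the cited work of Math\'{e} and Pereverzev --- and your split of $(0,\kappa^2]$ at $\sigma=\la$, using $|1-g_\la(\sigma)\sigma|\leq\gamma$ with monotonicity of $\phi$ on $\sigma\leq\la$ and the monotonicity of $t^p/\phi(t)$ combined with the qualification bound $|1-g_\la(\sigma)\sigma|\sigma^p\leq\gamma_p\la^p$ on $\sigma>\la$, is exactly the standard argument behind that result, yielding $c_g=\max(\gamma,\gamma_p)$ as claimed.
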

Generally, the index function $\phi$ is not stable with respect to perturbation in the integral operator $L_K$. In practice, we are only accessible to the perturbed empirical operator $S_{\xx}^*S_{\xx}$ but the source condition can be expressed in terms of $L_K$ only. So we want to control the difference $\phi(L_K)-\phi(S_{\xx}^*S_{\xx})$. In order to obtain the error estimates for general regularization, we further restrict the index functions to operator monotone functions which is defined as
\begin{defi}
A function $\phi_1:[0,d]\to[0,\infty)$ is said to be operator monotone index function if $\phi_1(0)=0$ and for every non-negative pair of self-adjoint operators $A,B$ such that $||A||,||B||\leq d$ and $A\leq B$ we have $\phi_1(A)\leq \phi_1(B)$.
\end{defi}
We consider the class of operator monotone index functions:
$$\mathcal{F}_\mu=\{\phi_1:[0,\kappa^2]\to[0,\infty)~\text{operator monotone}, \phi_1(0)=0,\phi_1(\kappa^2)\leq \mu\}.$$
For the above class of operator monotone functions from Theorem 1 \cite{Bauer}, given $\phi_1\in F_\mu$ there exists $c_{\phi_1}$ such that
$$||\phi_1(S_{\xx}^*S_{\xx})-\phi_1(L_K)||_{\mathcal{L}(\HH)}\leq c_{\phi_1}\phi_1(||S_{\xx}^*S_{\xx}-L_K||_{\mathcal{L}(\HH)}).$$
Here we observe that the rate of convergence of $\phi_1(S_{\xx}^*S_{\xx})$ to $\phi_1(L_K)$ is slower than the convergence rate of $S_{\xx}^*S_{\xx}$ to $L_K$. Therefore we consider the following class of index functions:
$$\mathcal{F}=\{\phi=\phi_2\phi_1:\phi_1\in\mathcal{F}_\mu,\phi_2:[0,\kappa^2]\to[0,\infty) \text{ nondecreasing Lipschitz},\phi_2(0)=0\}.$$
The splitting of $\phi=\phi_2\phi_1$ is not unique. So we can take $\phi_2$ as a Lipschitz function with Lipschitz constant $1$. Now using Corollary 1.2.2 \cite{Peller} we get
\vspace{-.2 cm}
$$||\phi_2(S_{\xx}^*S_{\xx})-\phi_2(L_K)||_{\mathcal{L}_2(\HH)}\leq ||S_{\xx}^*S_{\xx}-L_K||_{\mathcal{L}_2(\HH)}.$$

General source condition $f_\HH\in\Omega_{\phi,R}$ corresponding to index class functions $\mathcal{F}$ covers wide range of source conditions as H\"{o}lder's source condition $\phi(t)=t^r$, logarithm source condition $\phi(t)=t^p\log^{-\nu}\left(\frac{1}{t}\right)$. Following the analysis of Bauer et al. \cite{Bauer} we develop the error estimates of general regularization for the index class function $\mathcal{F}$ under the suitable priors on the probability measure $\rho$.
\begin{thm}\label{err.upper.bound.gen.k}
Let $\zz$ be i.i.d. samples drawn according to the probability measure $\rho\in \PP_\phi$. Suppose $f_{\zz,\la}$ is the regularized solution (\ref{gen.reg.solu}) corresponding to general regularization and the qualification of the regularization covers $\phi$. Then for all $0<\eta<1$, with confidence $1-\eta$, the following upper bound holds:
$$||f_{\zz,\la}-f_\HH||_{\HH} \leq \left\{Rc_g(1+c_{\phi_1})\phi(\la)+\frac{4R\mu\gamma\kappa^2}{\sqrt{m}}+\frac{2\sqrt{2}\nu_1\kappa M}{m\la}+\sqrt{\frac{8\nu_1^2\Sigma^2\mathcal{N}(\la)}{m\la}}\right\}\log\left(\frac{4}{\eta}\right)$$
provided that
\begin{equation}\label{l.la.condition.gen.k}
\sqrt{m}\la \geq 8\kappa^2\log(4/\eta).
\end{equation}
\end{thm}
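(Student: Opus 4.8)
The plan is to follow the skeleton of the proof of Theorem~\ref{err.upper.bound.k}, splitting the total error into a sampling part and an approximation part, the novelty being that for a general regularizer $g_\la$ there is no closed form analogous to (\ref{fl}) to exploit. Writing $f_\HH=\phi(L_K)g$ with $\|g\|_\HH\leq R$ and using $f_{\zz,\la}=g_\la(S_{\xx}^*S_{\xx})S_{\xx}^*\yy$ together with $S_{\xx}^*\yy=S_{\xx}^*S_{\xx}f_\HH+(S_{\xx}^*\yy-S_{\xx}^*S_{\xx}f_\HH)$, I obtain
$$f_{\zz,\la}-f_\HH=g_\la(S_{\xx}^*S_{\xx})\left(S_{\xx}^*\yy-S_{\xx}^*S_{\xx}f_\HH\right)-\left(I-g_\la(S_{\xx}^*S_{\xx})S_{\xx}^*S_{\xx}\right)\phi(L_K)g$$
and bound the two summands separately, writing $r_\la(\sigma):=1-\sigma g_\la(\sigma)$ for the residual.

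The first summand is handled exactly like the factor $I_4$ in the proof of Theorem~\ref{err.upper.bound.p}. Inserting $(S_{\xx}^*S_{\xx}+\la I)^{\pm1/2}$ and $(L_K+\la I)^{\pm1/2}$, one uses that the regularization axioms give $\sup_{\sigma\in(0,\kappa^2]}|g_\la(\sigma)|\sqrt{\sigma+\la}\leq\nu_1/\sqrt\la$ for a constant $\nu_1$ depending only on $B$ and $D$; that $\|(S_{\xx}^*S_{\xx}+\la I)^{-1/2}(L_K+\la I)^{1/2}\|_{\mathcal L(\HH)}\leq\sqrt2$ on the event (\ref{LK.I.up}), which is guaranteed by (\ref{l.la.condition.gen.k}); and that $\|(L_K+\la I)^{-1/2}(S_{\xx}^*\yy-S_{\xx}^*S_{\xx}f_\HH)\|_\HH$ is bounded by (\ref{LK.I.app}). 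This routine computation yields the last two terms $\left(\frac{2\sqrt2\,\nu_1\kappa M}{m\la}+\sqrt{\frac{8\nu_1^2\Sigma^2\mathcal N(\la)}{m\la}}\right)\log(4/\eta)$, with confidence $1-\eta/2$.

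The core of the argument is the second summand $T:=\left(I-g_\la(S_{\xx}^*S_{\xx})S_{\xx}^*S_{\xx}\right)\phi(L_K)g$, which I would attack via the factorization $\phi=\phi_2\phi_1$ of the class $\mathcal F$ ($\phi_1\in\mathcal F_\mu$, $\phi_2$ nondecreasing, $1$-Lipschitz, $\phi_2(0)=0$) and the three-term splitting
\begin{align*}
T&=r_\la(S_{\xx}^*S_{\xx})\phi(S_{\xx}^*S_{\xx})g\\
&\quad+r_\la(S_{\xx}^*S_{\xx})\left(\phi_2(L_K)-\phi_2(S_{\xx}^*S_{\xx})\right)\phi_1(L_K)g\\
&\quad+r_\la(S_{\xx}^*S_{\xx})\phi_2(S_{\xx}^*S_{\xx})\left(\phi_1(L_K)-\phi_1(S_{\xx}^*S_{\xx})\right)g.
\end{align*}
The first piece has $\HH$-norm $\leq c_g\phi(\la)R$ by the restatement of Proposition~3 of \cite{Mathe}, applied on $\mathrm{spec}(S_{\xx}^*S_{\xx})\subseteq[0,\kappa^2]$ (the qualification covers $\phi$). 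For the second piece I use $\|r_\la(S_{\xx}^*S_{\xx})\|_{\mathcal L(\HH)}\leq\gamma$ and $\|\phi_1(L_K)\|_{\mathcal L(\HH)}\leq\phi_1(\kappa^2)\leq\mu$, then Corollary~1.2.2 of \cite{Peller} (Lipschitz $\phi_2$) to pass to $\|S_{\xx}^*S_{\xx}-L_K\|_{\mathcal L_2(\HH)}$, then the Hilbert--Schmidt analogue of (\ref{Sx.Sx.LK}) (apply Proposition~\ref{pinels_lemma} to $\xi_2(x)=K_xK_x^*$ taking values in $\mathcal L_2(\HH)$, using $\|K_xK_x^*\|_{\mathcal L_2(\HH)}\leq Tr(K_xK_x^*)\leq\kappa^2$), obtaining $\|S_{\xx}^*S_{\xx}-L_K\|_{\mathcal L_2(\HH)}\leq\frac{4\kappa^2}{\sqrt m}\log(4/\eta)$ with confidence $1-\eta/2$; this gives $\leq\frac{4R\mu\gamma\kappa^2}{\sqrt m}\log(4/\eta)$. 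For the third piece, $r_\la(S_{\xx}^*S_{\xx})$ and $\phi_2(S_{\xx}^*S_{\xx})$ commute and the qualification covers $\phi_2$ too (since $t^p/\phi_2(t)=(t^p/\phi(t))\phi_1(t)$ is nondecreasing), so $\|r_\la(S_{\xx}^*S_{\xx})\phi_2(S_{\xx}^*S_{\xx})\|_{\mathcal L(\HH)}\leq c_g\phi_2(\la)$; Theorem~1 of \cite{Bauer} gives $\|\phi_1(L_K)-\phi_1(S_{\xx}^*S_{\xx})\|_{\mathcal L(\HH)}\leq c_{\phi_1}\phi_1\!\left(\|S_{\xx}^*S_{\xx}-L_K\|_{\mathcal L(\HH)}\right)$, and since on the same event $\|S_{\xx}^*S_{\xx}-L_K\|_{\mathcal L(\HH)}\leq\|S_{\xx}^*S_{\xx}-L_K\|_{\mathcal L_2(\HH)}\leq\la/2\leq\la$ by (\ref{l.la.condition.gen.k}), monotonicity of $\phi_1$ turns this piece into $\leq c_gc_{\phi_1}\phi_2(\la)\phi_1(\la)R=c_gc_{\phi_1}\phi(\la)R$. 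Adding the three, $\|T\|_\HH\leq Rc_g(1+c_{\phi_1})\phi(\la)+\frac{4R\mu\gamma\kappa^2}{\sqrt m}\log(4/\eta)$.

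I expect the main obstacle to be getting the three-term split of $T$ right: the operator-monotone factor $\phi_1$ must be perturbed adjacent to $r_\la(S_{\xx}^*S_{\xx})\phi_2(S_{\xx}^*S_{\xx})$, so that its a priori slow perturbation rate $\phi_1\!\left(\|S_{\xx}^*S_{\xx}-L_K\|\right)$ is converted — using monotonicity and the constraint (\ref{l.la.condition.gen.k}) — into $\phi_1(\la)$ and recombined with the $c_g\phi_2(\la)$ coming from the qualification into the desired $\phi(\la)$; while the Lipschitz factor $\phi_2$ must be perturbed adjacent to the bounded operator $\phi_1(L_K)$, so that Peller's inequality produces the fast $O(m^{-1/2})$ term carrying only the constant $\mu$. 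Any other placement spoils one of these. To conclude, I would intersect the two events of confidence $1-\eta/2$ (one for (\ref{LK.I.app}) and the $(L_K+\la I)$-perturbation, one for the Hilbert--Schmidt bound on $\|S_{\xx}^*S_{\xx}-L_K\|$), so that all estimates hold with confidence $1-\eta$, and use $\log(4/\eta)\geq1$ to factor the logarithm out of the $\phi(\la)$ terms, obtaining the stated bound.
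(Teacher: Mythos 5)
Your proposal is correct and follows essentially the same route as the paper: the same decomposition $f_{\zz,\la}-f_\HH=g_\la(S_{\xx}^*S_{\xx})(S_{\xx}^*\yy-S_{\xx}^*S_{\xx}f_\HH)-r_\la(S_{\xx}^*S_{\xx})\phi(L_K)g$, the same $\nu_1=B+\sqrt{BD}$ bound combined with $I_2$, $I_5\leq\sqrt{2}$ for the sample term, and the identical three-term splitting of the residual term handled via Math\'{e}'s proposition, Theorem 1 of \cite{Bauer} for $\phi_1$, and Peller's inequality for the Lipschitz factor $\phi_2$. Your added justifications (that the qualification covering $\phi$ covers $\phi_2$, and the explicit Hilbert--Schmidt form of the concentration for $\xi_2(x)=K_xK_x^*$) only make explicit what the paper leaves implicit.
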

\begin{proof}
We consider the error expression for general regularized solution (\ref{gen.reg.solu}),
\begin{equation}\label{fzl.fp.gen}
f_{\zz,\la}-f_\HH=g_\la(S_\xx^*S_\xx)(S_{\xx}^*\yy-S_{\xx}^*S_{\xx}f_\HH)-r_\la(S_{\xx}^*S_{\xx})f_\HH,
\end{equation}
where $r_\la(\sigma)=1-g_\la(\sigma)\sigma$.

Now the first term can be expressed as
\begin{equation*}
g_\la(S_{\xx}^*S_{\xx})(S_{\xx}^*\yy-S_{\xx}^*S_{\xx}f_\HH)=g_\la(S_{\xx}^*S_{\xx})(S_{\xx}^*S_{\xx}+\la I)^{1/2}(S_{\xx}^*S_{\xx}+\la I)^{-1/2}(L_K+\la I)^{1/2} (L_K+\la I)^{-1/2}(S_{\xx}^*\yy-S_{\xx}^*S_{\xx}f_\HH).
\end{equation*}
On applying RKHS-norm we get,
\begin{equation}\label{gl.first.H}
||g_\la(S_{\xx}^*S_{\xx})(S_{\xx}^*\yy-S_{\xx}^*S_{\xx}f_\HH)||_\HH \leq I_2I_5||g_\la(S_{\xx}^*S_{\xx})(S_{\xx}^*S_{\xx}+\la I)^{1/2}||_{\mathcal{L}(\HH)},
\end{equation}
where $I_2=||(L_K+\la I)^{-1/2}(S_{\xx}^*\yy-S_{\xx}^*S_{\xx}f_\HH)||_{\HH}$ and $I_5=||(S_{\xx}^*S_{\xx}+\la I)^{-1/2}(L_K+\la I)^{1/2}||_{\mathcal{L}(\HH)}$.

The estimate of $I_2$ can be obtained from the first estimate of Proposition \ref{main.bound} and from the second estimate of Proposition \ref{main.bound} with the condition (\ref{l.la.condition.gen.k}) we obtain with probability $1-\eta/2$,
$$||(L_K+\la I)^{-1/2}(L_K-S_{\xx}^*S_{\xx})(L_K+\la I)^{-1/2}||_{\mathcal{L}(\HH)} \leq \frac{1}{\la}||S_{\xx}^*S_{\xx}-L_K||_{\mathcal{L}(\HH)}\leq \frac{4\kappa^2}{\sqrt{m}\la}\log\left(\frac{4}{\eta}\right)\leq \frac{1}{2}.$$
which implies that with confidence $1-\eta/2$,
\begin{eqnarray}\label{I.LK.lI}
I_5&=&||(S_{\xx}^*S_{\xx}+\la I)^{-1/2}(L_K+\la I)^{1/2}||_{\mathcal{L}(\HH)}=||(L_K+\la I)^{1/2}(S_{\xx}^*S_{\xx}+\la I)^{-1}(L_K+\la I)^{1/2}||_{\mathcal{L}(\HH)}^{1/2} \\ \nonumber
&=&||\{I-(L_K+\la I)^{-1/2}(L_K-S_{\xx}^*S_{\xx})(L_K+\la I)^{-1/2}\}^{-1}||_{\mathcal{L}(\HH)}^{1/2} \leq \sqrt{2}.
\end{eqnarray}
From the properties of the regularization we have,
\begin{equation}\label{g.s.r.s}
||g_\la(S_{\xx}^*S_{\xx})(S_{\xx}^*S_{\xx})^{1/2}||_{\mathcal{L}(\HH)} \leq \sup\limits_{0<\sigma\leq \kappa^2}|g_\la
(\sigma)\sqrt{\sigma}|=\left(\sup_{0<\sigma\leq \kappa^2}|g_\la(\sigma)\sigma|\sup_{0<\sigma\leq \kappa^2}|g_\la(\sigma)|\right)^{1/2} \leq \sqrt{\frac{BD}{\la}}.
\end{equation}
Hence it follows,
\begin{equation}\label{g.s.r.s.1}
||g_\la(S_{\xx}^*S_{\xx})(S_{\xx}^*S_{\xx}+\la I)^{1/2}||_{\mathcal{L}(\HH)} \leq \sup\limits_{0<\sigma\leq \kappa^2}|g_\la
(\sigma)(\sigma+\la)^{1/2}| \leq \sup\limits_{0<\sigma\leq \kappa^2}|g_\la(\sigma)\sqrt{\sigma}|+\sqrt{\la}\sup\limits_{0<\sigma\leq\kappa^2}|g_\la
(\sigma)|\leq\frac{\nu_1}{\sqrt{\la}},
\end{equation}
where $\nu_1:=B+\sqrt{BD}$.

Therefore using (\ref{LK.I.app}), (\ref{I.LK.lI}) and (\ref{g.s.r.s.1}) in eqn. (\ref{gl.first.H}) we conclude that with probability $1-\eta$,
\begin{equation}\label{gl.first}
||g_\la(S_{\xx}^*S_{\xx})(S_{\xx}^*\yy-S_{\xx}^*S_{\xx}f_\HH)||_\HH \leq 2\sqrt{2}\nu_1\left\{\frac{\kappa M}{m\la}+\sqrt{\frac{\Sigma^2\mathcal{N}(\la)}{m\la}}\right\}\log\left(\frac{4}{\eta}\right).
\end{equation}
Now we consider the second term,
\begin{eqnarray*}
r_\la(S_{\xx}^*S_{\xx})f_\HH&=& r_\la(S_{\xx}^*S_{\xx})\phi(L_K)v=r_\la(S_{\xx}^*S_{\xx})\phi(S_{\xx}^*S_{\xx})v +r_\la(S_{\xx}^*S_{\xx}) \phi_2(S_{\xx}^*S_{\xx})(\phi_1(L_K)-\phi_1(S_{\xx}^*S_{\xx}))v  \\
&&+r_\la(S_{\xx}^*S_{\xx})(\phi_2(L_K)-\phi_2(S_{\xx}^*S_{\xx}))\phi_1(L_K)v.
\end{eqnarray*}
Employing RKHS-norm we get
$$||r_\la(S_{\xx}^*S_{\xx})f_\HH||_\HH \leq Rc_g\phi(\la)+Rc_gc_{\phi_1}\phi_2(\la) \phi_1(||L_K-S_{\xx}^*S_{\xx}||_{\mathcal{L}(\HH)})+R\mu\gamma||L_K-S_{\xx}^*S_{\xx}||_{\mathcal{L}_2(\HH)}.$$
Here we used the fact that if the qualification of the regularization covers $\phi=\phi_1\phi_2$, then the qualification also covers $\phi_1$ and $\phi_2$ both separately.

From eqn. (\ref{Sx.Sx.LK}) and (\ref{l.la.condition.gen.k}) we have with probability $1-\eta/2$,
\begin{equation}\label{Sx.Sx.LK.l.2}
||S_{\xx}^*S_{\xx}-L_K||_{\mathcal{L}(\HH)} \leq\frac{4\kappa^2}{\sqrt{m}}\log\left(\frac{4}{\eta}\right)\leq \la/2.
\end{equation}
Therefore with probability $1-\eta/2$,
\begin{equation}\label{rl.second}
||r_\la(S_{\xx}^*S_{\xx})f_\HH||_{\HH} \leq Rc_g(1+c_{\phi_1})\phi(\la)+\frac{4R\mu\gamma\kappa^2}{\sqrt{m}}\log\left(\frac{4}{\eta}\right).
\end{equation}
Combining the bounds (\ref{gl.first}) and (\ref{rl.second}) we get the desired result.
\end{proof}

\begin{thm}\label{err.upper.bound.gen.p}
Let $\zz$ be i.i.d. samples drawn according to the probability measure $\rho\in \PP_\phi$ and $f_{\zz,\la}$ is the regularized solution (\ref{gen.reg.solu}) corresponding to general regularization. Then for all $0<\eta<1$, with confidence $1-\eta$, the following upper bounds holds:
\begin{enumerate}[(i)]
\item  If the qualification of the regularization covers $\phi$,
$$||f_{\zz,\la}-f_\HH||_\rho \leq \left\{Rc_g(1+c_{\phi_1})(\kappa+\sqrt{\la})\phi(\la)+\frac{4R\mu\gamma\kappa^2(\kappa+\sqrt{\la})}{\sqrt{m}} +\frac{2\sqrt{2}\nu_2\kappa M}{m\sqrt{\la}}+\sqrt{\frac{8\nu_2^2\Sigma^2\mathcal{N}(\la)}{m}}\right\}\log\left(\frac{4}{\eta}\right),$$
\item  If the qualification of the regularization covers $\phi(t)\sqrt{t}$,
$$||f_{\zz,\la}-f_\HH||_\rho \leq \left\{2Rc_g(1+c_{\phi_1})\phi(\la)\sqrt{\la}+\frac{4R\mu(\gamma+c_g)\kappa^2\sqrt{\la}}{\sqrt{m}}+\frac{2\sqrt{2}\nu_2\kappa M}{m\sqrt{\la}}+\sqrt{\frac{8\nu_2^2\Sigma^2\mathcal{N}(\la)}{m}}\right\}\log\left(\frac{4}{\eta}\right)$$
\end{enumerate}
provided that
\begin{equation}\label{l.la.condition.gen.p}
\sqrt{m}\la \geq 8\kappa^2\log(4/\eta).
\end{equation}
\end{thm}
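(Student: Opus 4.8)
The plan is to transplant the proof of Theorem~\ref{err.upper.bound.gen.k} to the $\LL_{\rho_X}^2$-norm, using that $\|f\|_\rho=\|L_K^{1/2}f\|_\HH$ for $f\in\HH$; the extra factor $L_K^{1/2}$ is precisely what buys the $\sqrt\la$ improvement over the $\HH$-norm bound. I would start from the error decomposition~(\ref{fzl.fp.gen}), $f_{\zz,\la}-f_\HH=g_\la(S_\xx^*S_\xx)(S_\xx^*\yy-S_\xx^*S_\xx f_\HH)-r_\la(S_\xx^*S_\xx)f_\HH$ with $r_\la(\sigma)=1-\sigma g_\la(\sigma)$, apply $L_K^{1/2}$, and split by the triangle inequality into a sample-error term $\|L_K^{1/2}g_\la(S_\xx^*S_\xx)(S_\xx^*\yy-S_\xx^*S_\xx f_\HH)\|_\HH$ and a residual term $\|L_K^{1/2}r_\la(S_\xx^*S_\xx)f_\HH\|_\HH$. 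All estimates are made on the event $\{\|S_\xx^*S_\xx-L_K\|_{\mathcal{L}(\HH)}\le\la/2\}$, which holds with confidence $1-\eta/2$ under~(\ref{l.la.condition.gen.p}) by the second estimate of Proposition~\ref{main.bound} (cf.~(\ref{Sx.Sx.LK.l.2})); on this event one has the Neumann-series bound $I_5:=\|(S_\xx^*S_\xx+\la I)^{-1/2}(L_K+\la I)^{1/2}\|_{\mathcal{L}(\HH)}\le\sqrt2$ of~(\ref{I.LK.lI}), and moreover $\|L_K^{1/2}h\|_\HH^2=\|(S_\xx^*S_\xx)^{1/2}h\|_\HH^2+\langle(L_K-S_\xx^*S_\xx)h,h\rangle_\HH\le\big(\|(S_\xx^*S_\xx)^{1/2}h\|_\HH+\sqrt\la\,\|h\|_\HH\big)^2$ for every $h\in\HH$.

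For the sample-error term I would apply the last inequality with $h=g_\la(S_\xx^*S_\xx)(S_\xx^*\yy-S_\xx^*S_\xx f_\HH)$ and, in each of the two resulting summands, insert the factors $(S_\xx^*S_\xx+\la I)^{\pm1/2}$ and $(L_K+\la I)^{\pm1/2}$ exactly as was done for the first term in the proof of Theorem~\ref{err.upper.bound.gen.k}. The regularization axioms of Definition~\ref{regularization} give $\sup_\sigma|\sqrt\sigma\,g_\la(\sigma)\sqrt{\sigma+\la}|\le D+\sqrt{BD}$ and $\sup_\sigma|g_\la(\sigma)\sqrt{\sigma+\la}|\le\nu_1/\sqrt\la$ with $\nu_1=B+\sqrt{BD}$; combined with $I_5\le\sqrt2$ and the first estimate~(\ref{LK.I.app}) of Proposition~\ref{main.bound} for $I_2:=\|(L_K+\la I)^{-1/2}(S_\xx^*\yy-S_\xx^*S_\xx f_\HH)\|_\HH$, the term collapses to $\sqrt2\,\nu_2\,I_2$ for a suitable constant $\nu_2$ (one may take $\nu_2=(\sqrt B+\sqrt D)^2$). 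Since $I_2\le2\big(\tfrac{\kappa M}{m\sqrt\la}+\sqrt{\tfrac{\Sigma^2\mathcal{N}(\la)}{m}}\big)\log(4/\eta)$, this is exactly the last two summands $\tfrac{2\sqrt2\,\nu_2\kappa M}{m\sqrt\la}+\sqrt{\tfrac{8\nu_2^2\Sigma^2\mathcal{N}(\la)}{m}}$ of both displayed bounds.

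For the residual term I would again use $\|L_K^{1/2}h\|_\HH\le\|(S_\xx^*S_\xx)^{1/2}h\|_\HH+\sqrt\la\,\|h\|_\HH$ with $h=r_\la(S_\xx^*S_\xx)f_\HH$. In case~(i) I bound $\|(S_\xx^*S_\xx)^{1/2}h\|_\HH\le\kappa\,\|r_\la(S_\xx^*S_\xx)f_\HH\|_\HH$ (since $\|(S_\xx^*S_\xx)^{1/2}\|_{\mathcal{L}(\HH)}\le\kappa$), so that $\|L_K^{1/2}r_\la(S_\xx^*S_\xx)f_\HH\|_\HH\le(\kappa+\sqrt\la)\|r_\la(S_\xx^*S_\xx)f_\HH\|_\HH$, and the bound~(\ref{rl.second}) already established inside the proof of Theorem~\ref{err.upper.bound.gen.k} (which uses only that the qualification covers $\phi$) yields the claimed estimate. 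In case~(ii) the factor $\kappa$ is too lossy, so instead I would expand $(S_\xx^*S_\xx)^{1/2}r_\la(S_\xx^*S_\xx)f_\HH$ through $f_\HH=\phi(L_K)v=\phi_1(L_K)\phi_2(L_K)v$ by the same three-term splitting used for $r_\la(S_\xx^*S_\xx)f_\HH$ in Theorem~\ref{err.upper.bound.gen.k}, i.e.\ (writing $r_\la$ for $r_\la(S_\xx^*S_\xx)$)
$$(S_\xx^*S_\xx)^{1/2}r_\la\phi(S_\xx^*S_\xx)v+(S_\xx^*S_\xx)^{1/2}r_\la\phi_2(S_\xx^*S_\xx)\big(\phi_1(L_K)-\phi_1(S_\xx^*S_\xx)\big)v+(S_\xx^*S_\xx)^{1/2}r_\la\big(\phi_2(L_K)-\phi_2(S_\xx^*S_\xx)\big)\phi_1(L_K)v,$$
and estimate the pieces respectively by: (a) $\sup_\sigma|\sqrt\sigma\,r_\la(\sigma)\phi(\sigma)|\le c_g\phi(\la)\sqrt\la$ — this is where the hypothesis ``the qualification covers $\phi(t)\sqrt t$'' enters, and, read as $\phi_1\cdot(\sqrt t\,\phi_2)$, it also gives $\sup_\sigma|\sqrt\sigma\,r_\la(\sigma)\phi_2(\sigma)|\le c_g\sqrt\la\,\phi_2(\la)$ and $\sup_\sigma|\sqrt\sigma\,r_\la(\sigma)|\le c_g\sqrt\la$; (b) the operator-monotone estimate $\|\phi_1(L_K)-\phi_1(S_\xx^*S_\xx)\|_{\mathcal{L}(\HH)}\le c_{\phi_1}\phi_1(\|L_K-S_\xx^*S_\xx\|_{\mathcal{L}(\HH)})\le c_{\phi_1}\phi_1(\la)$ of Theorem~1 in \cite{Bauer}; (c) the Lipschitz estimate $\|\phi_2(L_K)-\phi_2(S_\xx^*S_\xx)\|_{\mathcal{L}_2(\HH)}\le\|L_K-S_\xx^*S_\xx\|_{\mathcal{L}_2(\HH)}$ of Corollary~1.2.2 in \cite{Peller}, together with $\|\phi_1(L_K)\|_{\mathcal{L}(\HH)}\le\mu$ and the bound~(\ref{Sx.Sx.LK}) on $\|S_\xx^*S_\xx-L_K\|$. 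Adding $\sqrt\la$ times~(\ref{rl.second}) then reproduces the coefficients $2Rc_g(1+c_{\phi_1})\phi(\la)\sqrt\la$ and $\tfrac{4R\mu(\gamma+c_g)\kappa^2\sqrt\la}{\sqrt m}$.

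It then remains to merge the (at most three) events of probability $1-\eta/2$ into a single event of probability $1-\eta$ and read off the two stated inequalities. I expect the main obstacle to be the case~(ii) residual estimate: one must keep the extra factor $(S_\xx^*S_\xx)^{1/2}$ attached while controlling the noncommuting pair $\phi(L_K)$ versus $\phi(S_\xx^*S_\xx)$, and verify that ``the qualification covers $\phi(t)\sqrt t$'' is simultaneously strong enough to dominate $\sqrt\sigma\,r_\la(\sigma)\phi(\sigma)$, $\sqrt\sigma\,r_\la(\sigma)\phi_2(\sigma)$ and $\sqrt\sigma\,r_\la(\sigma)$; everything else is the routine bookkeeping of inserting resolvent factors and invoking Propositions~\ref{pinels_lemma} and~\ref{main.bound}.
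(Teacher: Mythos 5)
Your proposal is correct and follows essentially the same route as the paper's proof: the same decomposition (\ref{fzl.fp.gen}), the same insertion of $(S_\xx^*S_\xx+\la I)^{\pm1/2}$ and $(L_K+\la I)^{\pm1/2}$ factors giving $I_2$, $I_5\le\sqrt2$ and $\nu_2=B+D+2\sqrt{BD}$, and the same three-term splitting of the residual with the operator-monotone bound for $\phi_1$ and the Lipschitz/Hilbert--Schmidt bound for $\phi_2$, yielding exactly (\ref{rl.second.k}) and (\ref{rl.second.k1}); your derivation of $\|L_K^{1/2}h\|_\HH\le\|(S_\xx^*S_\xx)^{1/2}h\|_\HH+\sqrt\la\|h\|_\HH$ via the quadratic form is just a cosmetic variant of the paper's use of operator monotonicity of $\sqrt t$. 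Only note that there are really just two probabilistic events (the bounds (\ref{LK.I.app}) and (\ref{Sx.Sx.LK})), everything else being deterministic on the second event under (\ref{l.la.condition.gen.p}), so the union bound does give confidence $1-\eta$ as claimed.
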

\begin{proof}
Here we establish $\LL^2$-norm estimate for the error expression:
\begin{equation*}
f_{\zz,\la}-f_\HH=g_\la(S_\xx^*S_\xx)(S_{\xx}^*\yy-S_{\xx}^*S_{\xx}f_\HH)-r_\la(S_{\xx}^*S_{\xx})f_\HH.
\end{equation*}
On applying $\LL^2$-norm in the first term we get,
\begin{equation}\label{gl.initial}
||g_\la(S_{\xx}^*S_{\xx})(S_{\xx}^*\yy-S_{\xx}^*S_{\xx}f_\HH)||_\rho \leq I_2I_5||L_K^{1/2}g_\la(S_{\xx}^*S_{\xx})(S_{\xx}^*S_{\xx}+\la I)^{1/2}||_{\mathcal{L}(\HH)},
\end{equation}
where $I_2=||(L_K+\la I)^{-1/2}(S_{\xx}^*\yy-S_{\xx}^*S_{\xx}f_\HH)||_\HH$ and $I_5=||(S_{\xx}^*S_{\xx}+\la I)^{-1/2}(L_K+\la I)^{1/2}||_{\mathcal{L}(\HH)}$.

The estimates of $I_2$ and $I_5$ can be obtained from Proposition \ref{main.bound} and Theorem \ref{err.upper.bound.gen.k} respectively. Now we consider
\begin{eqnarray*}
||L_K^{1/2}g_\la(S_{\xx}^*S_{\xx})(S_{\xx}^*S_{\xx}+\la I)^{1/2}||_{\mathcal{L}(\HH)} &\leq& ||L_K^{1/2}-(S_{\xx}^*S_{\xx})^{1/2}||_{\mathcal{L}(\HH)}||g_\la(S_{\xx}^*S_{\xx})(S_{\xx}^*S_{\xx}+\la I)^{1/2}||_{\mathcal{L}(\HH)} \\
&&+||(S_{\xx}^*S_{\xx})^{1/2}g_\la(S_{\xx}^*S_{\xx})(S_{\xx}^*S_{\xx}+\la I)^{1/2}||_{\mathcal{L}(\HH)}.
\end{eqnarray*}
Since $\phi(t)=\sqrt{t}$ is operator monotone function. Therefore from eqn. (\ref{Sx.Sx.LK.l.2}) with probability $1-\eta/2$, we get
$$||L_K^{1/2}-(S_{\xx}^*S_{\xx})^{1/2}||_{\mathcal{L}(\HH)} \leq (||L_K-S_{\xx}^*S_{\xx}||_{\mathcal{L}(\HH)})^{1/2} \leq \sqrt{\la}.$$
Then using the properties of the regularization and eqn. (\ref{g.s.r.s}) we conclude that with probability $1-\eta/2$,
\begin{eqnarray}\label{Lk.gl}
||L_K^{1/2}g_\la(S_{\xx}^*S_{\xx})(S_{\xx}^*S_{\xx}+\la I)^{1/2}||_{\mathcal{L}(\HH)} &\leq& \sqrt{\la}\sup\limits_{0<\sigma\leq \kappa^2}|g_\la
(\sigma)(\sigma+\la)^{1/2}|+\sup\limits_{0<\sigma\leq \kappa^2}|g_\la(\sigma)(\sigma^2+\la\sigma)^{1/2}|             \\ \nonumber
&\leq& \sup\limits_{0<\sigma\leq \kappa^2}|g_\la(\sigma)\sigma|+\la\sup\limits_{0<\sigma\leq\kappa^2}|g_\la(\sigma)| +2\sqrt{\la}\sup\limits_{0<\sigma\leq \kappa^2}|g_\la(\sigma)\sqrt{\sigma}|                                           \\ \nonumber
&\leq& B+D+2\sqrt{BD}=\nu_2 \text{(let)}.
\end{eqnarray}
From eqn. (\ref{gl.initial}) with (\ref{LK.I.app}), (\ref{I.LK.lI}) and (\ref{Lk.gl}) we obtain with probability $1-\eta$,
\begin{equation}\label{gl.first.k}
||g_\la(S_{\xx}^*S_{\xx})(S_{\xx}^*\yy-S_{\xx}^*S_{\xx}f_\HH)||_\rho \leq 2\sqrt{2}\nu_2\left\{\frac{\kappa M}{m\sqrt{\la}}+\sqrt{\frac{\Sigma^2\mathcal{N}(\la)}{m}}\right\}\log\left(\frac{4}{\eta}\right).
\end{equation}
The second term can be expressed as
\begin{eqnarray*}
||r_\la(S_{\xx}^*S_{\xx})f_\HH||_\rho &\leq& ||L_K^{1/2}-(S_{\xx}^*S_{\xx})^{1/2}||_{\mathcal{L}(\HH)}||r_\la(S_{\xx}^*S_{\xx})f_\HH||_\HH +||(S_{\xx}^*S_{\xx})^{1/2}r_\la(S_{\xx}^*S_{\xx})f_\HH||_\HH\\ \nonumber
&\leq& ||L_K-S_{\xx}^*S_{\xx}||_{\mathcal{L}(\HH)}^{1/2}||r_\la(S_{\xx}^*S_{\xx})f_\HH||_\HH
+||r_{\la}(S_{\xx}^*S_{\xx})(S_{\xx}^*S_{\xx})^{1/2}\phi(S_{\xx}^*S_{\xx})v||_\HH \\     \nonumber
&&+||r_\la(S_{\xx}^*S_{\xx})(S_{\xx}^*S_{\xx})^{1/2}\phi_2(S_{\xx}^*S_{\xx}) (\phi_1(S_{\xx}^*S_{\xx})-\phi_1(L_K))v||_{\HH} \\ \nonumber
&&+||r_\la(S_{\xx}^*S_{\xx})(S_{\xx}^*S_{\xx})^{1/2}(\phi_2(S_{\xx}^*S_{\xx})-\phi_2(L_K)) \phi_1(L_K)v||_{\HH}.
\end{eqnarray*}
Here two cases arises:\\ \\
{\bf Case 1.} If the qualification of the regularization covers $\phi$. Then we get with confidence $1-\eta/2$,
$$||r_\la(S_{\xx}^*S_{\xx})f_\HH||_\rho \leq (\kappa+\sqrt{\la})\left(R c_g(1+c_{\phi_1})\phi(\la)+R\mu\gamma||S_{\xx}^*S_{\xx}-L_K||_{\mathcal{L}_2(\HH)}\right).$$
Therefore using eqn. (\ref{Sx.Sx.LK}) we obtain with probability $1-\eta/2$,
\begin{equation}\label{rl.second.k}
||r_\la(S_{\xx}^*S_{\xx})f_\HH||_\rho \leq (\kappa+\sqrt{\la})\left(R c_g(1+c_{\phi_1})\phi(\la)+\frac{4R\mu\gamma\kappa^2}{\sqrt{m}}\log\left(\frac{4}{\eta}\right)\right).
\end{equation}
{\bf Case 2.} If the qualification of the regularization covers $\phi(t)\sqrt{t}$, we get with probability $1-\eta/2$,
\begin{equation}\label{rl.second.k1}
||r_\la(S_{\xx}^*S_{\xx})f_\HH||_\rho \leq 2R c_g(1+c_{\phi_1})\phi(\la)\sqrt{\la}+4R\mu(\gamma+c_g)\kappa^2\sqrt{\frac{\la}{m}}\log\left(\frac{4}{\eta}\right).
\end{equation}

Combining the error estimates (\ref{gl.first.k}), (\ref{rl.second.k}) and (\ref{rl.second.k1}) we get the desired results.
\end{proof}

We discuss the convergence rates of general regularizer based on data-driven strategy of the parameter choice of $\la$ for the class of probability measure $\mathcal{P}_{\phi,b}$.
\begin{thm}\label{err.upper.bound.gen.k.para}
Under the same assumptions of Theorem \ref{err.upper.bound.gen.k} and hypothesis (\ref{poly.decay}) with the parameter choice $\la\in(0,1],~\la=\Psi^{-1}(m^{-1/2})$ where $\Psi(t)=t^{\frac{1}{2}+\frac{1}{2b}}\phi(t)$, the convergence of the estimator $f_{\zz,\la}$ (\ref{gen.reg.solu}) to the target function $f_\HH$ can be described as
$$Prob_\zz\left\{||f_{\zz,\la}-f_\HH||_{\HH}\leq \widetilde{C}\phi(\Psi^{-1}(m^{-1/2}))\log\left(\frac{4}{\eta}\right)\right\}\geq 1-\eta,$$
where $\widetilde{C}=Rc_g(1+c_{\phi_1})+4R\mu\gamma\kappa^2+2\sqrt{2}\nu_1\kappa M+\sqrt{8\beta b\nu_1^2\Sigma^2/(b-1)}$ and
$$\lim\limits_{\tau\rightarrow\infty}\limsup\limits_{m\rightarrow\infty}\sup\limits_{\rho\in \PP_{\phi,b}} Prob_{\zz}\left\{||f_{\zz,\la}-f_\HH||_{\HH}>\tau\phi(\Psi^{-1}(m^{-1/2}))\right\}=0.$$
\end{thm}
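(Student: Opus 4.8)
The plan is to combine the high-probability bound of Theorem \ref{err.upper.bound.gen.k} with the specific parameter choice $\la=\Psi^{-1}(m^{-1/2})$, $\Psi(t)=t^{\frac{1}{2}+\frac{1}{2b}}\phi(t)$, in exactly the same way Theorem \ref{err.upper.bound.p.para} was derived from Theorem \ref{err.upper.bound.p}. First I would record that, since $\phi$ is a nondecreasing index function and $t\mapsto t^{\frac{1}{2}+\frac{1}{2b}}$ is increasing and vanishes at $0$, the map $\Psi$ is a strictly increasing index function on $(0,1]$, so $\Psi^{-1}$ is well defined near $0$ and $\Psi^{-1}(m^{-1/2})\to 0$ as $m\to\infty$. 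The defining relation $\Psi(\la)=m^{-1/2}$ reads $\la^{\frac12+\frac1{2b}}\phi(\la)=m^{-1/2}$, equivalently $m^{-1/2}=\la^{1/2}\cdot\la^{1/(2b)}\phi(\la)$, which is the algebraic identity that will convert each sampling-error term into a multiple of $\phi(\la)$.

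Next I would verify the side condition (\ref{l.la.condition.gen.k}): since $\sqrt m\la=\la^{-1/(2b)}\phi(\la)^{-1}\cdot\la$ grows without bound (because $\la^{1-1/(2b)}/\phi(\la)\to\infty$ as $\la\to0$, using that $\phi$ is an index function with $\phi(0)=0$ and $1-\frac1{2b}>0$), the inequality $\sqrt m\la\ge 8\kappa^2\log(4/\eta)$ holds for all sufficiently large $m$, so Theorem \ref{err.upper.bound.gen.k} applies. Then I would bound the three $m$-dependent terms in that theorem individually under the parameter choice. For the term $\frac{1}{m\la}$: using $\sqrt m\la\ge 8\kappa^2$ for large $m$ one has $\frac{1}{m\la}=\frac{1}{\sqrt m\sqrt m\la}\le\frac{1}{8\kappa^2\sqrt m}\le\frac{m^{-1/2}}{\,8\kappa^2\,}\la^{-1/2}=\frac{\la^{1/(2b)}\phi(\la)}{8\kappa^2}\le\phi(\la)$ for large $m$ since $\la\le1$. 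For the effective-dimension term, apply (\ref{N(l).bound}) to get $\mathcal N(\la)\le\frac{\beta b}{b-1}\la^{-1/b}$, so $\frac{\mathcal N(\la)}{m\la}\le\frac{\beta b}{b-1}\cdot\frac{\la^{-1/b-1}}{m}=\frac{\beta b}{b-1}\cdot\frac{\la^{-1/b-1}}{m\la}\cdot\la$; more directly, $\sqrt{\frac{\mathcal N(\la)}{m\la}}\le\sqrt{\frac{\beta b}{b-1}}\cdot\frac{\la^{-1/(2b)}}{\sqrt{m\la}}\cdot\la^{-1/2}\cdot\la^{1/2}$, and substituting $m^{-1/2}=\la^{1/2+1/(2b)}\phi(\la)$ collapses this to $\sqrt{\frac{\beta b}{b-1}}\,\phi(\la)$. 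The term $\frac{\kappa^2}{\sqrt m}$ is likewise $\le\kappa^2\,\la^{1/2+1/(2b)}\phi(\la)\le\kappa^2\phi(\la)$ since $\la\le1$. Collecting the constants yields $\|f_{\zz,\la}-f_\HH\|_\HH\le\widetilde C\,\phi(\Psi^{-1}(m^{-1/2}))\log(4/\eta)$ with $\widetilde C=Rc_g(1+c_{\phi_1})+4R\mu\gamma\kappa^2+2\sqrt2\,\nu_1\kappa M+\sqrt{8\beta b\nu_1^2\Sigma^2/(b-1)}$, which is the asserted confidence bound.

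Finally, for the minimax statement I would follow the device of Theorem \ref{err.upper.bound.p.para}: set $\tau:=\widetilde C\log(4/\eta)$, so $\eta=\eta_\tau=4e^{-\tau/\widetilde C}$, and the confidence bound rewrites as $Prob_{\zz}\{\|f_{\zz,\la}-f_\HH\|_\HH>\tau\,\phi(\Psi^{-1}(m^{-1/2}))\}\le\eta_\tau$ for all $m$ large enough (large enough that (\ref{l.la.condition.gen.k}) holds, uniformly over $\rho\in\PP_{\phi,b}$ since the relevant constants $\kappa,b,\beta$ are fixed by the class). Taking $\sup_{\rho\in\PP_{\phi,b}}$, then $\limsup_{m\to\infty}$, then $\lim_{\tau\to\infty}$ gives $\lim_{\tau\to\infty}\eta_\tau=0$, which is the claimed limit. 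The only mild subtlety—the main thing to check carefully—is that the threshold in (\ref{l.la.condition.gen.k}) depends on $\eta$ (hence on $\tau$), so one must confirm that for each fixed $\tau$ the condition is satisfied for all sufficiently large $m$ uniformly in $\rho$; this is immediate because $\sqrt m\la=\sqrt m\,\Psi^{-1}(m^{-1/2})\to\infty$ at a rate determined only by $\phi$ and $b$, which are common to the whole class $\PP_{\phi,b}$.
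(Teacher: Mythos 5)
Your overall strategy is exactly the intended one: the paper states Theorem \ref{err.upper.bound.gen.k.para} without proof, and the argument is meant to be the verbatim analogue of the proof of Theorem \ref{err.upper.bound.p.para} — plug $\la=\Psi^{-1}(m^{-1/2})$ into Theorem \ref{err.upper.bound.gen.k}, use $m^{-1/2}=\la^{\frac12+\frac1{2b}}\phi(\la)$ together with (\ref{N(l).bound}) to turn each sampling term into a multiple of $\phi(\la)$, and then reparametrize $\tau=\widetilde C\log(4/\eta)$ for the minimax statement. Your term-by-term reduction reproduces the stated constant $\widetilde C$ correctly.

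However, your verification of the side condition (\ref{l.la.condition.gen.k}) contains a genuine gap. First, the algebra: from $\Psi(\la)=m^{-1/2}$ one gets $\sqrt m\,\la=\la/\Psi(\la)=\la^{\frac12-\frac1{2b}}/\phi(\la)$, not $\la^{1-\frac1{2b}}/\phi(\la)$. Second, and more importantly, divergence of such a ratio does not follow from $\phi(0)=0$: $\la^{a}/\phi(\la)$ with $a>0$ and $\phi(\la)\to0$ is indeterminate, and for $\phi(t)=t^{r}$ with $r<\frac{b-1}{2b}$ one actually has $\sqrt m\,\la=\la^{\frac{b-1}{2b}-r}\to 0$, so $\sqrt m\,\la\geq 8\kappa^{2}\log(4/\eta)$ \emph{fails} for all large $m$ under this parameter choice. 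What is true under the parameter choice — and what the paper's proof of Theorem \ref{err.upper.bound.p.para} actually uses — is only $m\la=\la^{-1/b}/\phi^{2}(\la)\to\infty$; that suffices for the bound $\frac{1}{m\la}=\frac{\la^{1/(2b)}\phi(\la)}{\sqrt{m\la}}\le\phi(\la)$ (which is also the cleaner route than your chain through $8\kappa^{2}$, which silently needs $8\kappa^{2}\ge 1$), but it does not imply (\ref{l.la.condition.gen.k}). So the side condition must be retained as a hypothesis (it is part of ``the same assumptions of Theorem \ref{err.upper.bound.gen.k}'', as the paper implicitly does), or one must additionally assume $\phi(t)/t^{(b-1)/(2b)}\to0$ as $t\to0$; your claim that it holds automatically for all large $m$ — which you also invoke for the uniform-in-$\rho$ statement in the final limit — is not justified for a general index function $\phi$.
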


\begin{thm}\label{err.upper.bound.gen.p.para.choice}
Under the same assumptions of Theorem \ref{err.upper.bound.gen.p} and hypothesis (\ref{poly.decay}), the convergence of the estimator $f_{\zz,\la}$ (\ref{gen.reg.solu}) to the target function $f_\HH$ can be described as
\begin{enumerate}[(i)]
\item  If the qualification of the regularization covers $\phi$. Then under the parameter choice $\la\in(0,1],~\la=\Theta^{-1}(m^{-1/2})$ where $\Theta(t)=t^{\frac{1}{2b}}\phi(t)$, we have
$$Prob_\zz\left\{||f_{\zz,\la}-f_\HH||_\rho\leq \widetilde{C}_1\phi(\Theta^{-1}(m^{-1/2}))\log\left(\frac{4}{\eta}\right)\right\}\geq 1-\eta,$$
where $\widetilde{C}_1=Rc_g(1+c_{\phi_1})(\kappa+1)+4R\mu\gamma\kappa^2(\kappa+1)+\nu_2 M/2\sqrt{2}\kappa+\sqrt{8\beta b\nu_2^2\Sigma^2/(b-1)}$ and
$$\lim\limits_{\tau\rightarrow\infty}\limsup\limits_{m\rightarrow\infty}\sup\limits_{\rho\in \PP_{\phi,b}} Prob_{\zz}\left\{||f_{\zz,\la}-f_\HH||_\rho>\tau\phi(\Theta^{-1}(m^{-1/2}))\right\}=0,$$
\item  If the qualification of the regularization covers $\phi(t)\sqrt{t}$. Then under the parameter choice $\la\in(0,1],~\la=\Psi^{-1}(m^{-1/2})$ where $\Psi(t)=t^{\frac{1}{2}+\frac{1}{2b}}\phi(t)$, we have
$$Prob_\zz\left\{||f_{\zz,\la}-f_\HH||_\rho\leq \widetilde{C}_2(\Psi^{-1}(m^{-1/2}))^{1/2}\phi(\Psi^{-1}(m^{-1/2}))\log\left(\frac{4}{\eta}\right)\right\}\geq 1-\eta,$$
where $\widetilde{C}_2=2Rc_g(1+c_{\phi_1})+4R\mu(\gamma+c_g)\kappa^2+2\sqrt{2}\nu_2\kappa M+\sqrt{8\beta b\nu_2^2\Sigma^2/(b-1)}$ and
$$\lim\limits_{\tau\rightarrow\infty}\limsup\limits_{m\rightarrow\infty}\sup\limits_{\rho\in \PP_{\phi,b}} Prob_{\zz}\left\{||f_{\zz,\la}-f_\HH||_\rho>\tau(\Psi^{-1}(m^{-1/2}))^{1/2}\phi(\Psi^{-1}(m^{-1/2}))\right\}=0.$$
\end{enumerate}
\end{thm}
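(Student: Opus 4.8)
My plan is to follow the template of the proof of Theorem~\ref{err.upper.bound.p.para}, now starting from the estimates of Theorem~\ref{err.upper.bound.gen.p} in place of those of Theorem~\ref{err.upper.bound.p}. As a preliminary step I would observe that $\Theta(t)=t^{\frac{1}{2b}}\phi(t)$ and $\Psi(t)=t^{\frac{1}{2}+\frac{1}{2b}}\phi(t)$ are, as products of a power function with the index function $\phi$, strictly increasing index functions vanishing at $0$; hence $\Theta^{-1}$ and $\Psi^{-1}$ are defined on a neighbourhood of $0$, the prescribed choices $\la=\Theta^{-1}(m^{-1/2})$ and $\la=\Psi^{-1}(m^{-1/2})$ lie in $(0,1]$ for $m$ large and tend to $0$, and, exactly as in Theorem~\ref{err.upper.bound.p.para}(ii), dividing the hypothesis (\ref{l.la.condition.gen.p}) by $\sqrt\la$ gives $\sqrt{m\la}\ge 8\kappa^2/\sqrt\la$, i.e.\ $\frac{1}{m\sqrt\la}\le\frac{\sqrt\la}{8\kappa^2\sqrt m}$, which is the only consequence of the side condition I shall use.

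For part (i) I would substitute the defining relation $m^{-1/2}=\la^{\frac{1}{2b}}\phi(\la)$ and the effective-dimension bound $\mathcal{N}(\la)\le\frac{\beta b}{b-1}\la^{-1/b}$ from (\ref{N(l).bound}) into the estimate of Theorem~\ref{err.upper.bound.gen.p}(i), and then check, using $\la\le1$, that each of the four summands is at most a constant multiple of $\phi(\la)$: the bias term contributes $Rc_g(1+c_{\phi_1})(\kappa+1)\phi(\la)$; the sampling term $\frac{4R\mu\gamma\kappa^2(\kappa+\sqrt\la)}{\sqrt m}=4R\mu\gamma\kappa^2(\kappa+\sqrt\la)\la^{\frac{1}{2b}}\phi(\la)$ contributes $4R\mu\gamma\kappa^2(\kappa+1)\phi(\la)$; the term $\frac{2\sqrt2\nu_2\kappa M}{m\sqrt\la}\le\frac{2\sqrt2\nu_2\kappa M\sqrt\la}{8\kappa^2\sqrt m}=\frac{\nu_2 M}{2\sqrt2\kappa}\la^{\frac{1}{2}+\frac{1}{2b}}\phi(\la)$ contributes $\frac{\nu_2 M}{2\sqrt2\kappa}\phi(\la)$; and $\sqrt{\frac{8\nu_2^2\Sigma^2\mathcal{N}(\la)}{m}}\le\sqrt{\frac{8\beta b\nu_2^2\Sigma^2}{b-1}}\,\frac{1}{\sqrt m\,\la^{\frac{1}{2b}}}=\sqrt{\frac{8\beta b\nu_2^2\Sigma^2}{b-1}}\,\phi(\la)$. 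Summing yields the stated bound $\|f_{\zz,\la}-f_\HH\|_\rho\le\widetilde{C}_1\phi(\Theta^{-1}(m^{-1/2}))\log(4/\eta)$. Part (ii) is the identical computation starting from Theorem~\ref{err.upper.bound.gen.p}(ii) with $m^{-1/2}=\la^{\frac{1}{2}+\frac{1}{2b}}\phi(\la)$; all the powers of $\la$ that are produced have exponent $\ge\frac12$, so $\la\le1$ again collapses every summand to a constant times $\sqrt\la\,\phi(\la)$, giving $\|f_{\zz,\la}-f_\HH\|_\rho\le\widetilde{C}_2(\Psi^{-1}(m^{-1/2}))^{1/2}\phi(\Psi^{-1}(m^{-1/2}))\log(4/\eta)$.

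Finally, to pass to the two minimax statements I would repeat the closing step of the proof of Theorem~\ref{err.upper.bound.p.para}: set $\tau=\widetilde{C}_i\log(4/\eta)$, so that $\eta=\eta_\tau=4e^{-\tau/\widetilde{C}_i}$ and the confidence estimate becomes $Prob_\zz\{\|f_{\zz,\la}-f_\HH\|_\rho>\tau a_m\}\le\eta_\tau$, where $a_m$ denotes the displayed rate; since $\widetilde{C}_i$ depends only on the constants of the class $\PP_{\phi,b}$ and $\eta_\tau\to0$ as $\tau\to\infty$, first letting $m\to\infty$ and then $\tau\to\infty$ gives $\lim_{\tau\to\infty}\limsup_{m\to\infty}\sup_{\rho\in\PP_{\phi,b}}Prob_\zz\{\cdots\}=0$. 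I expect the only genuinely delicate point, everything else being the bookkeeping above, to be controlling the error terms special to general regularization, namely those carrying the operator-monotone constants $\mu,\gamma,c_g$ together with the factor $m^{-1/2}$ arising from $\|S_\xx^*S_\xx-L_K\|$; these remain below the main rate precisely because the parameter is calibrated so that $m^{-1/2}=\Theta(\la)\le\phi(\la)$ in part (i) and $m^{-1/2}=\Psi(\la)\le\sqrt\la\,\phi(\la)$ in part (ii) whenever $\la\le1$, which is exactly where the polynomial-decay hypothesis (\ref{poly.decay}) (entering only through $\mathcal{N}(\la)$) and the restriction $\la\in(0,1]$ are used.
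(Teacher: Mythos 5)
Your proposal is correct and is essentially the paper's (omitted) argument: it specializes the bounds of Theorem~\ref{err.upper.bound.gen.p} via the effective-dimension estimate (\ref{N(l).bound}), the calibrations $m^{-1/2}=\Theta(\la)$ resp.\ $m^{-1/2}=\Psi(\la)$ with $\la\le1$, and the side condition (\ref{l.la.condition.gen.p}), exactly as in the proof of Theorem~\ref{err.upper.bound.p.para}, and then converts the confidence bound into the minimax limit by setting $\tau=\widetilde{C}_i\log(4/\eta)$. The only (immaterial) deviation is in part (ii), where your use of the side condition on the term $\frac{2\sqrt{2}\nu_2\kappa M}{m\sqrt{\la}}$ produces the coefficient $\nu_2 M/2\sqrt{2}\kappa$ rather than the stated $2\sqrt{2}\nu_2\kappa M$, which the paper obtains by the asymptotic argument $m\la\to\infty$ as in Theorem~\ref{err.upper.bound.p.para}(i); this affects only the constant, not the rate.
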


We obtain the following corollary as a consequence of Theorem \ref{err.upper.bound.gen.k.para}, \ref{err.upper.bound.gen.p.para.choice}.
\begin{corollary}\label{err.upper.bound.gen.cor}
Under the same assumptions of Theorem \ref{err.upper.bound.gen.k.para}, \ref{err.upper.bound.gen.p.para.choice} for Tikhonov regularization with H\"{o}lder's source condition $f_\HH\in\Omega_{\phi,R},~\phi(t)=t^r$, for all $0<\eta<1$, with confidence $1-\eta$, for the parameter choice $\la=m^{-\frac{b}{2br+b+1}}$, we have
$$||f_{\zz,\la}-f_\HH||_{\HH}\leq \widetilde{C}m^{-\frac{br}{2br+b+1}}\log\left(\frac{4}{\eta}\right) \text{   for }0\leq r\leq 1,$$
$$||f_{\zz,\la}-f_\HH||_\rho\leq  \widetilde{C}_2m^{-\frac{2br+b}{4br+2b+2}}\log\left(\frac{4}{\eta}\right)\text{   for }0\leq r\leq \frac{1}{2}$$
and for the parameter choice $\la=m^{-\frac{b}{2br+1}}$, we have
$$||f_{\zz,\la}-f_\HH||_\rho\leq \widetilde{C}_1m^{-\frac{br}{2br+1}}\log\left(\frac{4}{\eta}\right)\text{   for }0\leq r\leq 1.$$
\end{corollary}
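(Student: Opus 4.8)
The plan is to specialize Theorem~\ref{err.upper.bound.gen.k.para} and Theorem~\ref{err.upper.bound.gen.p.para.choice} to the power index function $\phi(t)=t^r$ and to make the implicitly defined parameter choices explicit. First I would record that for Tikhonov regularization $g_\la(\sigma)=(\sigma+\la)^{-1}$ one has $r_\la(\sigma)=1-g_\la(\sigma)\sigma=\la/(\sigma+\la)$, so that $\sup_{\sigma\in(0,\kappa^2]}|1-g_\la(\sigma)\sigma|\sigma\le\la$, while for $p>1$ the quantity $\sup_{\sigma}\la\sigma^p/(\sigma+\la)$ behaves like $\la\kappa^{2(p-1)}$ as $\la\to 0$ and cannot be bounded by $\gamma_p\la^p$; hence the qualification of Tikhonov regularization is exactly $p=1$. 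Consequently ``the qualification covers $\phi=t^r$'' means $t\mapsto t^{1-r}$ is nondecreasing on $(0,\kappa^2]$, i.e.\ $0\le r\le 1$, and ``the qualification covers $\phi(t)\sqrt t=t^{r+1/2}$'' means $t\mapsto t^{1/2-r}$ is nondecreasing, i.e.\ $0\le r\le 1/2$; these are precisely the ranges of $r$ in the statement. In the same ranges $\phi(t)=t^r$ and $t/\phi(t)=t^{1-r}$ (resp.\ $\phi(t)\sqrt t$ and $\sqrt t/\phi(t)$) are nondecreasing, so the hypotheses of the two parent theorems hold.

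Next I would compute the inverse index functions. With $\phi(t)=t^r$ one has $\Psi(t)=t^{\frac12+\frac1{2b}+r}=t^{\frac{2br+b+1}{2b}}$, hence $\Psi^{-1}(s)=s^{\frac{2b}{2br+b+1}}$ and $\Psi^{-1}(m^{-1/2})=m^{-\frac{b}{2br+b+1}}$, which is the stated choice of $\la$; since the exponent is positive, $\la\in(0,1]$. Likewise $\Theta(t)=t^{\frac1{2b}+r}=t^{\frac{2br+1}{2b}}$ gives $\Theta^{-1}(s)=s^{\frac{2b}{2br+1}}$, so $\Theta^{-1}(m^{-1/2})=m^{-\frac{b}{2br+1}}\in(0,1]$, again the stated choice.

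Finally I would substitute these values into the rate expressions of the two theorems and collect exponents. For the $\HH$-norm, Theorem~\ref{err.upper.bound.gen.k.para} gives $\|f_{\zz,\la}-f_\HH\|_\HH\le \widetilde C\,\phi(\Psi^{-1}(m^{-1/2}))\log(4/\eta)=\widetilde C\,m^{-\frac{br}{2br+b+1}}\log(4/\eta)$ for $0\le r\le1$. For the $\rho$-norm with the qualification covering $\phi(t)\sqrt t$, Theorem~\ref{err.upper.bound.gen.p.para.choice}(ii) (same parameter choice $\la=\Psi^{-1}(m^{-1/2})$) involves $(\Psi^{-1}(m^{-1/2}))^{1/2}\phi(\Psi^{-1}(m^{-1/2}))=\big(m^{-\frac{b}{2br+b+1}}\big)^{r+1/2}=m^{-\frac{2br+b}{4br+2b+2}}$, whence the rate $\widetilde C_2\,m^{-\frac{2br+b}{4br+2b+2}}\log(4/\eta)$ for $0\le r\le\frac12$. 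For the $\rho$-norm with the qualification covering $\phi$, Theorem~\ref{err.upper.bound.gen.p.para.choice}(i) with $\la=\Theta^{-1}(m^{-1/2})$ gives $\phi(\Theta^{-1}(m^{-1/2}))=m^{-\frac{br}{2br+1}}$, and hence the rate $\widetilde C_1\,m^{-\frac{br}{2br+1}}\log(4/\eta)$ for $0\le r\le1$; the ``with confidence $1-\eta$'' qualifiers carry over verbatim from the parent theorems. Everything here reduces to a direct substitution, so I do not anticipate a genuine obstacle: the only points requiring care are the bookkeeping of the admissible range of $r$ (forced by the qualification $p=1$ of Tikhonov regularization) and the exponent arithmetic, in particular simplifying $r+\tfrac12$ against $\tfrac{b}{2br+b+1}$ to obtain $\tfrac{2br+b}{4br+2b+2}$.
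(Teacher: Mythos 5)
Your proposal is correct and follows essentially the same route the paper intends: the corollary is obtained by substituting $\phi(t)=t^r$ into Theorems~\ref{err.upper.bound.gen.k.para} and \ref{err.upper.bound.gen.p.para.choice}, computing $\Psi^{-1}(m^{-1/2})=m^{-\frac{b}{2br+b+1}}$ and $\Theta^{-1}(m^{-1/2})=m^{-\frac{b}{2br+1}}$, and your exponent arithmetic checks out. Your additional verification that Tikhonov regularization has qualification $p=1$, which forces the ranges $0\le r\le 1$ and $0\le r\le \frac12$, is a correct and welcome piece of bookkeeping that the paper leaves implicit.
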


\begin{rem}
It is important to observe from Corollary \ref{err.upper.bound.cor}, \ref{err.upper.bound.gen.cor} that using the concept of operator monotonicity of index function we are able to achieve the same error estimates for general regularization as of Tikhonov regularization up to a constant multiple.
\end{rem}
\begin{rem}(Related work)
Corollary \ref{err.upper.bound.cor} provides the order of convergence same as of Theorem 1 \cite{Caponnetto} for Tikhonov regularization under the H\"{o}lder's source condition $f_\HH\in\Omega_{\phi,R}$ for $\phi(t)=t^r~\left(\frac{1}{2}\leq r\leq 1\right)$ and the polynomial decay of the eigenvalues (\ref{poly.decay}). Under the fact $\mathcal{N}(\la)\leq\frac{\kappa^2}{\la}$ from Theorem \ref{err.upper.bound.gen.k}, \ref{err.upper.bound.gen.p} we obtain the similar estimates as of Theorem 10 \cite{Bauer} for general regularization schemes without the polynomial decay condition of the eigenvalues (\ref{poly.decay}).
\end{rem}
\begin{rem}
For the real valued functions and multi-task algorithms (the output space $Y\subset\RR^m$ for some $m\in\NN$) we can obtain the error estimates from our analysis without imposing any condition on the conditional probability measure (\ref{Y.leq.M.2}) for the bounded output space $Y$.
\end{rem}
\begin{rem}
We can address the convergence issues of binary classification problem \cite{Boucheron} using our error estimates as similar to discussed in Section 3.3 \cite{Bauer} and Section 5 \cite{Smale3}.
\end{rem}
\subsection{Lower rates for general learning algorithms}
In this section, we discuss the estimates of minimum possible error over a subclass of the probability measures $\mathcal{P}_{\phi,b}$ parameterized by suitable functions $f\in\HH$. Throughout this section we assume that $Y$ is finite-dimensional.

Let $\{v_j\}_{j=1}^d$ be a basis of $Y$ and $f \in \Omega_{\phi,R}$. Then we parameterize the probability measure based on the function $f$,
\begin{equation}\label{p.f}
\rho_f(x,y):=\frac{1}{2dL}\sum\limits_{j=1}^d\left(a_j(x)\delta_{y+dLv_j}+b_j(x)\delta_{y-dLv_j}\right)\nu(x),
\end{equation}
where $a_j(x)=L-\langle f,K_xv_j\rangle_{\HH}$, $b_j(x)=L+\langle f,K_xv_j\rangle_{\HH}$, $L=4\kappa\phi(\kappa^2)R$ and $\delta_\xi$ denotes the Dirac measure with unit mass at $\xi$. It is easy to observe that the marginal distribution of $\rho_f$ over $X$ is $\nu$ and the regression function for the probability measure $\rho_f$ is $f$ (see Proposition 4 \cite{Caponnetto}). In addition to this, for the conditional probability measure $\rho_f(y|x)$ we have,
\begin{equation*}
\int_Y\left(e^{||y-f(x)||_Y/M}-\frac{||y-f(x)||_Y}{M}-1\right)d\rho_f(y|x)\leq\left(d^2L^2-||f(x)||_Y^2\right)\sum\limits_{i=2}^\infty \frac{(dL+||f(x)||_Y)^{i-2}}{M^ii!}\leq\frac{\Sigma^2}{2M^2}
\end{equation*}
provided that
$$dL+L/4\leq M \text{ and }\sqrt{2}dL\leq\Sigma.$$
We assume that the eigenvalues of the integral operator $L_K$ follow the polynomial decay (\ref{poly.decay}) for the marginal probability measure $\nu$. Then we conclude that the probability measure $\rho_f$ parameterized by $f$ belongs to the class $\mathcal{P}_{\phi,b}$.

The concept of information theory such as the Kullback-Leibler information and Fano inequalities (Lemma 3.3 \cite{DeVore}) are the main ingredients in the analysis of lower bounds. In the literature \cite{Caponnetto,DeVore}, the closeness of probability measures is described through Kullback-Leibler information: Given two probability measures $\rho_1$ and $\rho_2$, it is defined as
$$\mathcal{K}(\rho_1,\rho_2):=\int_Z\log(g(z))d\rho_1(z),$$
where $g$ is the density of $\rho_1$ with respect to $\rho_2$, that is, $\rho_1(E)=\int_Eg(z)d\rho_2(z)$ for all measurable sets $E$.

Following the analysis of Caponnetto et al. \cite{Caponnetto} and DeVore et al. \cite{DeVore} we establish the lower rates of accuracy that can be attained by any learning algorithm.
\begin{thm}\label{err.lower.bound.p}
Let $\zz$ be i.i.d. samples drawn according to the probability measure $\rho\in\mathcal{P}_{\phi,b}$ under the hypothesis $dim(Y)=d<\infty$. Then for any learning algorithm ($\zz\to f_\zz\in\HH$) there exists a probability measure $\rho_*\in \mathcal{P}_{\phi,b}$ and $f_{\rho_*}\in \HH$ such that for all $0<\varepsilon<\varepsilon_o$, $f_\zz$ can be approximated as
$$Prob_{\zz}\left\{||f_{\zz}-f_{\rho_*}||_{\LL_{\nu}^2(X)} >\varepsilon/2\right\} \geq \min\left\{\frac{1}{1+e^{-\ell_\varepsilon/24}}, \vartheta e^{\left(\frac{\ell_\varepsilon}{48}-\frac{64m\varepsilon^2}{15dL^2}\right)}\right\}$$
where $\vartheta=e^{-3/e}$, $\ell_\varepsilon=\left\lfloor\left(\frac{\al}{\psi^{-1}(\varepsilon/R)}\right)^{1/b}\right\rfloor$ and $\psi(t)=\sqrt{t}\phi(t)$.
\end{thm}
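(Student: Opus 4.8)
The plan is a standard Fano-type lower bound: construct a rich finite family of measures in $\PP_{\phi,b}$ whose regression functions are mutually well separated in $\LL_{\nu}^2(X)$, bound the Kullback--Leibler divergences of the corresponding $m$-fold products, and invoke the information-theoretic inequality of DeVore et al.\ (Lemma 3.3 \cite{DeVore}). For the packing set, view $L_K$ as a compact self-adjoint positive operator on $\HH$ with eigensystem $\{(t_i,u_i)\}_{i\ge1}$, so that $\{u_i\}$ is an orthonormal basis of $\HH$, $\{u_i/\sqrt{t_i}\}$ is orthonormal in $\LL_{\nu}^2(X)$, and the $t_i$ obey (\ref{poly.decay}). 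Fix $N$ of order $\ell_\varepsilon$ and restrict to the \emph{dyadic block} $i\in\{N+1,\dots,2N\}$, on which $t_i\asymp N^{-b}$ and $\phi(t_i)\asymp\phi(N^{-b})$. For $\omega\in\{0,1\}^N$ set $f_\omega=c\sum_{i=N+1}^{2N}\omega_{i-N}u_i$ with $c^2=R^2\big(\sum_{i=N+1}^{2N}\phi(t_i)^{-2}\big)^{-1}$; then $f_\omega=\phi(L_K)g_\omega$ with $g_\omega=c\sum_i\omega_{i-N}\phi(t_i)^{-1}u_i$ and $\|g_\omega\|_\HH^2\le R^2$, so $f_\omega\in\Omega_{\phi,R}$, and with $L=4\kappa\phi(\kappa^2)R$ the weights $a_j,b_j$ in (\ref{p.f}) stay nonnegative. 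Choosing $\Lambda\subset\{0,1\}^N$ a Varshamov--Gilbert code with $\log|\Lambda|\ge N/c_1$ and pairwise Hamming distance $\ge N/c_1$, we obtain for all $\omega\ne\omega'$ in $\Lambda$
$$\|f_\omega-f_{\omega'}\|_{\LL_{\nu}^2(X)}^2=c^2\!\!\sum_{i=N+1}^{2N}\!\!(\omega_{i-N}-\omega'_{i-N})^2\,t_i\asymp c^2 N\,N^{-b}\asymp R^2\psi(N^{-b})^2,$$
and the same quantity bounds the diameter of $\{f_\omega:\omega\in\Lambda\}$ as well. Since $\psi(t)=\sqrt{t}\,\phi(t)$, taking $N$ of order $\ell_\varepsilon=\lfloor(\al/\psi^{-1}(\varepsilon/R))^{1/b}\rfloor$ makes this common order equal to $\varepsilon$; the exact numerical constant inside $\ell_\varepsilon$ is pinned down by requiring the separation to be at least $\varepsilon$.

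For each $\omega\in\Lambda$, membership $f_\omega\in\Omega_{\phi,R}$ together with the exponential-moment computation carried out right after (\ref{p.f}) (and the assumed eigenvalue decay of $L_K$ for $\nu$) gives $\rho_{f_\omega}\in\PP_{\phi,b}$, with regression function $f_\omega$. For $\omega\ne\omega'$ the explicitly discrete conditional law of $\rho_f$ lets one evaluate $\mathcal{K}(\rho_{f_\omega},\rho_{f_{\omega'}})$ directly; with $\log(1+u)\le u$ and a second-order expansion as in Proposition 4 \cite{Caponnetto}, and using $\sum_j\langle f(x),v_j\rangle_Y^2=\|f(x)\|_Y^2$ for an orthonormal basis $\{v_j\}$ of $Y$, this yields
$$\mathcal{K}\big(\rho_{f_\omega}^m,\rho_{f_{\omega'}}^m\big)=m\,\mathcal{K}\big(\rho_{f_\omega},\rho_{f_{\omega'}}\big)\le\frac{C\,m}{dL^2}\,\|f_\omega-f_{\omega'}\|_{\LL_{\nu}^2(X)}^2\le\frac{64\,m\,\varepsilon^2}{15\,dL^2},$$
the last step using the diameter bound of the previous paragraph and a careful count of the constants.

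Applying Lemma 3.3 \cite{DeVore} to $\{\rho_{f_\omega}^m:\omega\in\Lambda\}$ with the balls of radius $\varepsilon/2$ about the $\varepsilon$-separated functions $f_\omega$, one obtains for every algorithm $\zz\mapsto f_\zz$ a word $\omega_*\in\Lambda$ such that
$$Prob_{\zz}\big\{\|f_{\zz}-f_{\omega_*}\|_{\LL_{\nu}^2(X)}>\varepsilon/2\big\}\ge\min\left\{\frac{1}{1+e^{-\ell_\varepsilon/24}},\ \vartheta\,e^{\left(\frac{\ell_\varepsilon}{48}-\frac{64m\varepsilon^2}{15dL^2}\right)}\right\},\qquad\vartheta=e^{-3/e},$$
the first term being the small-packing-number regime and the exponents $\ell_\varepsilon/24$, $\ell_\varepsilon/48$ being $\log|\Lambda|$ and a fixed fraction of it. Setting $\rho_*:=\rho_{f_{\omega_*}}\in\PP_{\phi,b}$ and $f_{\rho_*}:=f_{\omega_*}\in\HH$ proves the claim for $0<\varepsilon<\varepsilon_o$, where $\varepsilon_o$ is small enough that $\ell_\varepsilon$ exceeds the absolute constant needed for the Varshamov--Gilbert bound, that $\varepsilon/R$ lies in the range of $\psi$, and that the side conditions $dL+L/4\le M$ and $\sqrt{2}\,dL\le\Sigma$ of (\ref{p.f}) hold.

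The main obstacle is the quantitative calibration in the first two paragraphs: choosing the dyadic block and the scale $c$ so that the source-condition bound $c^2\sum\phi(t_i)^{-2}\le R^2$ and the prescribed separation jointly force $N$ to have order exactly $\ell_\varepsilon$ with $\psi(t)=\sqrt{t}\,\phi(t)$, while simultaneously keeping the diameter comparable to the separation so that the divergence scales as $m\varepsilon^2/(dL^2)$ rather than a larger power of $N$. The Fano inequality is then used as a black box, but one still has to check that the code rate and the divergence bound fall in the regime producing the stated two-term minimum with the displayed constants $1/24$, $1/48$, $64/15$.
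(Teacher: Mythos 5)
Your overall architecture is the same as the paper's: parameterize measures through (\ref{p.f}), build a finite well-separated family of functions in $\Omega_{\phi,R}$ supported on eigenvectors of $L_K$, bound $\mathcal{K}(\rho_{f_i}^m,\rho_{f_j}^m)=m\mathcal{K}(\rho_{f_i},\rho_{f_j})\le\frac{16m}{15dL^2}\|f_i-f_j\|_{\LL_\nu^2(X)}^2$ via Proposition 4 of \cite{Caponnetto}, and conclude with Lemma 3.3 of \cite{DeVore}. The difference — and the genuine gap — is in the packing construction. You put a \emph{constant} coefficient $c$ on the dyadic block $\{N+1,\dots,2N\}$ and control everything by $t_i\asymp N^{-b}$, a Varshamov--Gilbert code with unspecified rate $1/c_1$, and $\asymp$-equivalences. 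With that construction the ratio between the diameter and the minimal separation of $\{f_\omega\}$ is not $2$ but a factor involving $c_1$, $\beta/\al$ and $2^{b}$ (the eigenvalues and $\phi(t_i)$ vary across the block), so after calibrating the separation to be $\ge\varepsilon$ the KL bound becomes $\frac{16m}{15dL^2}\,C(\al,\beta,b,c_1)\,\varepsilon^2$ rather than $\frac{64m\varepsilon^2}{15dL^2}$; likewise $\log|\Lambda|$ is $N/c_1$, not $\ell_\varepsilon/24$. The "careful count of the constants" you defer is therefore not a bookkeeping step: the stated inequality, with the exact constants $1/24$, $1/48$, $64/15$ and with $\ell_\varepsilon=\lfloor(\al/\psi^{-1}(\varepsilon/R))^{1/b}\rfloor$, does not come out of this construction.

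The paper resolves exactly this calibration by weighting coordinate $n$ as $g_i=\sum_{n\le\ell_\varepsilon}\frac{\varepsilon\sigma_i^n}{\sqrt{\ell_\varepsilon t_n}\,\phi(t_n)}e_n$ with $\sigma_i\in\{-1,+1\}^{\ell_\varepsilon}$: then $\|f_i-f_j\|_{\LL_\nu^2(X)}^2=\frac{\varepsilon^2}{\ell_\varepsilon}\sum_n(\sigma_i^n-\sigma_j^n)^2$ \emph{exactly} (the eigenvalues cancel), so the separation lies in $[\varepsilon,2\varepsilon]$ independently of $\al,\beta,b$, which is what yields the factor $64/15=(16/15)\cdot 4$; the polynomial decay is used only to verify $\|g_i\|_\HH\le R$, which is precisely what produces the definition of $\ell_\varepsilon$ through $\psi(t)=\sqrt{t}\,\phi(t)$. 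The constants $1/24$ and $1/48$ come from the specific packing statement (Proposition 6 of \cite{Caponnetto}: $N_\varepsilon\ge e^{\ell_\varepsilon/24}$ sign vectors with $\sum_n(\sigma_i^n-\sigma_j^n)^2\ge\ell_\varepsilon$) together with the lower bound $\Psi_{N_\varepsilon}(p)\ge-\log p+\log\sqrt{N_\varepsilon}-3/e$, which also produces $\vartheta=e^{-3/e}$ and the branch $\frac{N_\varepsilon}{N_\varepsilon+1}\ge\frac{1}{1+e^{-\ell_\varepsilon/24}}$; you treat this part as a black box, which is acceptable, but you should replace the generic VG code by such an explicit bound and, more importantly, adopt eigenvalue-compensating weights (or otherwise argue the diameter is at most twice the separation) if you want the theorem as stated rather than a version of it with constants depending on $\al$, $\beta$, $b$.
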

\begin{proof}
To estimate the lower rates of learning algorithms, we generate $N_\varepsilon$-functions belonging to $\Omega_{\phi,R}$ for given $\varepsilon>0$ such that (\ref{f.i.j.bound}) holds. Then we construct the probability measures $\rho_i\in\PP_{\phi,b}$ from (\ref{p.f}), parameterized by these functions $f_i$'s $(1\leq i\leq N_\varepsilon)$. On applying Lemma 3.3 \cite{DeVore}, we obtain the lower convergence rates using Kullback-Leibler information.

For given $\varepsilon>0$, we define
$$g=\sum\limits_{n=1}^\ell\frac{\varepsilon\sigma^ne_n}{\sqrt{\ell t_n}\phi(t_n)},$$
where $\sigma=(\sigma^1,\ldots,\sigma^\ell)\in \{-1,+1\}^\ell$, $t_n$'s are the eigenvalues of the integral operator $L_K$, $e_n$'s are the eigenvectors of the integral operator $L_K$ and the orthonormal basis of RKHS $\HH$. Under the decay condition on the eigenvalues $\al \leq n^bt_n$, we get
$$||g||_{\HH}^2=\sum\limits_{n=1}^\ell\frac{\varepsilon^2}{\ell t_n\phi^2(t_n)} \leq \sum\limits_{n=1}^\ell\frac{\varepsilon^2}{\ell\left(\frac{\al}{n^b}\right)\phi^2\left(\frac{\al}{n^b}\right)} \leq \frac{\varepsilon^2}{\left(\frac{\al}{\ell^b}\right)\phi^2\left(\frac{\al}{\ell^b}\right)}.$$
Hence $f=\phi(L_K)g\in\Omega_{\phi,R}$ provided that $||g||_{\HH}\leq R$ or equivalently,
\begin{equation}\label{m.bound}
\ell\leq \left(\frac{\al}{\psi^{-1}(\varepsilon/R)}\right)^{1/b},
\end{equation}
where $\psi(t)=\sqrt{t}\phi(t)$.

For $\ell=\ell_\varepsilon=\left\lfloor\left(\frac{\al}{\psi^{-1}(\varepsilon/R)}\right)^{1/b}\right\rfloor$, choose $\varepsilon_o$ such that $\ell_{\varepsilon_o}>16$. Then according to Proposition 6 \cite{Caponnetto}, for every positive $\varepsilon<\varepsilon_o~(\ell_\varepsilon>\ell_{\varepsilon_o})$ there exists $N_\varepsilon \in \NN$ and $\sigma_1,\ldots,\sigma_{N_\varepsilon} \in \{-1,+1\}^{\ell_\varepsilon}$ such that
\begin{equation}\label{sigma.i.j}
\sum_{n=1}^{\ell_\varepsilon}(\sigma_i^n-\sigma_j^n)^2 \geq \ell_{\varepsilon}, \text{    for all } 1 \leq i,j \leq N_\varepsilon,~i \neq j
\end{equation}
and
\begin{equation}\label{N.epsilon}
N_\varepsilon \geq e^{\ell_\varepsilon/24}.
\end{equation}
Now we suppose $f_i=\phi(L_K)g_i$ and for $\varepsilon>0$,
$$g_i=\sum\limits_{n=1}^{\ell_\varepsilon}\frac{\varepsilon\sigma_i^ne_n}{\sqrt{\ell_\varepsilon t_n}\phi(t_n)}, \text{ for } i=1,\ldots,N_\varepsilon,$$
where $\sigma_i=(\sigma_i^1,\ldots,\sigma_i^{\ell_\varepsilon})\in \{-1,+1\}^{\ell_\varepsilon}$. Then we have,
$$||f_i-f_j||_{\LL^2_{\nu}(X)}^2=||L_K^{1/2}\phi(L_K)(g_i-g_j)||_{\HH}^2 =\sum_{n=1}^{\ell_\varepsilon}\frac{\varepsilon^2}{\ell_{\varepsilon}}(\sigma_i^n-\sigma_j^n)^2,~~~\forall~~1\leq i,j\leq N_\varepsilon.$$
The fact $(\sigma_i^n-\sigma_j^n)^2\leq 4$ together with condition (\ref{sigma.i.j}) implies
\begin{equation}\label{f.i.j.bound}
\varepsilon \leq ||f_i-f_j||_{\LL_{\nu}^2(X)} \leq 2\varepsilon.
\end{equation}
It is easy to see that the probability measures $\rho_{f_i}$'s defined by (\ref{p.f}) belongs to the class $\PP_{\phi,b}$.

We define the sets,
$$A_i=\left\{\zz:||f_{\zz}-f_i||_{\LL_{\nu}^2(X)}<\frac{\varepsilon}{2}\right\}, \text{ for } 1\leq i \leq N_\varepsilon.$$
It is clear from (\ref{f.i.j.bound}) that $A_i$'s are disjoint sets. On applying Lemma 3.3 \cite{DeVore} with probability measures $\rho_{f_i}^m,~1 \leq i \leq N_\varepsilon$, we obtain that either
\begin{equation}\label{p.either.bound}
p:=\max\limits_{1\leq i \leq N_\varepsilon}\rho_{f_i}^m(A_i^c) \geq \frac{N_{\varepsilon}}{N_\varepsilon+1}
\end{equation}
or
\begin{equation}\label{leibler.l.bound}
\min\limits_{1 \leq j \leq N_\varepsilon} \frac{1}{N_\varepsilon}\sum\limits_{i=1, i \neq j}^{N_\varepsilon}\mathcal{K}(\rho_{f_i}^m,\rho_{f_j}^m) \geq \Psi_{N_\varepsilon}(p),
\end{equation}
where $\Psi_{N_\varepsilon}(p)=\log(N_\varepsilon)+(1-p)\log\left(\frac{1-p}{p}\right) -p\log\left(\frac{N_\varepsilon-p}{p}\right)$.
Further,
\begin{equation}\label{psi.p.bound}
\Psi_{N_\varepsilon}(p)\geq(1-p)\log(N_\varepsilon)+(1-p)\log(1-p)-\log(p)+2p\log(p) \geq -\log(p)+\log(\sqrt{N_\varepsilon})-3/e.
\end{equation}
Since minimum value of $x\log(x)$ is $-1/e$ on $[0,1]$. From Proposition 4 \cite{Caponnetto} and the eqn. (\ref{f.i.j.bound}) we have,
\begin{equation}\label{leibler.u.bound}
\mathcal{K}(\rho_{f_i}^m,\rho_{f_j}^m)= m\mathcal{K}(\rho_{f_i},\rho_{f_j}) \leq \frac{16m}{15dL^2}||f_i-f_j||_{\LL_{\nu}^2(X)}^2 \leq \frac{64m\varepsilon^2}{15dL^2}.
\end{equation}
Therefore eqn. (\ref{leibler.l.bound}), together with (\ref{psi.p.bound}) and (\ref{leibler.u.bound}) implies
\begin{equation}\label{p.or.bound}
p\geq \sqrt{N_\varepsilon}e^{-\frac{3}{e}-\frac{64m\varepsilon^2}{15dL^2}}.
\end{equation}
In the view of eqn. (\ref{N.epsilon}), from (\ref{p.either.bound}) and (\ref{p.or.bound}) for the probability measure $\rho_*$ such that $p=\rho_*^m(A_i^c)$ follows the result.
\end{proof}

\begin{thm}\label{err.lower.bound.p.para}
Under the same assumptions of Theorem \ref{err.lower.bound.p} for $\psi(t)=t^{1/2}\phi(t)$ and $\Psi(t)=t^{\frac{1}{2}+\frac{1}{2b}}\phi(t)$, the estimator $f_{\zz}$ corresponding to any learning algorithm converges to the regression function $f_\rho$ with the following lower rate:
$$\lim\limits_{\tau\to 0}\liminf\limits_{m\rightarrow\infty}\inf\limits_{l\in\mathcal{A}}\sup\limits_{\rho\in\mathcal{P}_{\phi,b}} Prob_{\zz}\left\{||f_\zz^l-f_\rho||_{\LL_{\nu}^2(X)}>\tau\psi\left(\Psi^{-1}(m^{-1/2})\right)\right\}=1,$$
where $\mathcal{A}$ denotes the set of all learning algorithms $l:\zz\to f_\zz^l$.
\end{thm}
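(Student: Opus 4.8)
The plan is to specialize the probabilistic lower bound of Theorem~\ref{err.lower.bound.p} along a sequence $\varepsilon=\varepsilon_m$ tied to the sample size, and to let $\tau\to 0$ after $m\to\infty$. Write $\lambda_m:=\Psi^{-1}(m^{-1/2})$ and put $\varepsilon_m:=2\tau\psi(\lambda_m)$, so that $\varepsilon_m/2=\tau\psi(\Psi^{-1}(m^{-1/2}))$ is exactly the threshold in the statement. Since $\Psi$ is a continuous increasing index function with $\Psi(0)=0$, we have $\lambda_m\to 0$, hence $\varepsilon_m\to 0$, as $m\to\infty$; in particular $\varepsilon_m<\varepsilon_o$ for $m\ge m_0(\tau)$, so Theorem~\ref{err.lower.bound.p} applies. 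For each learning algorithm $l\in\mathcal{A}$ it produces some $\rho_*\in\mathcal{P}_{\phi,b}$ (which may depend on $m$ and $l$) with $f_{\rho_*}\in\HH$ and $Prob_\zz\{\|f_\zz^l-f_{\rho_*}\|_{\LL_\nu^2(X)}>\varepsilon_m/2\}\ge\min\{A_m,B_m\}$, where $A_m:=(1+e^{-\ell_{\varepsilon_m}/24})^{-1}$ and $B_m:=\vartheta\,e^{\ell_{\varepsilon_m}/48-64m\varepsilon_m^2/(15dL^2)}$. As the right-hand side is independent of $l$ and $\rho_*\in\mathcal{P}_{\phi,b}$, taking the supremum over $\rho$ and the infimum over $l$ gives
\[
\inf_{l\in\mathcal{A}}\ \sup_{\rho\in\mathcal{P}_{\phi,b}}\ Prob_\zz\bigl\{\|f_\zz^l-f_\rho\|_{\LL_\nu^2(X)}>\tau\psi(\Psi^{-1}(m^{-1/2}))\bigr\}\ \ge\ \min\{A_m,B_m\},\qquad m\ge m_0(\tau).
\]

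The second step is to show $\min\{A_m,B_m\}\to 1$ for each fixed small $\tau$. For $A_m$ I would estimate $\ell_{\varepsilon_m}=\lfloor(\alpha/\psi^{-1}(\varepsilon_m/R))^{1/b}\rfloor$: since $\psi$ is increasing and $\varepsilon_m/R=2(\tau/R)\psi(\lambda_m)\le\psi(\lambda_m)$ whenever $\tau\le R/2$, we get $\psi^{-1}(\varepsilon_m/R)\le\lambda_m$, hence $\ell_{\varepsilon_m}\ge\alpha^{1/b}\lambda_m^{-1/b}-1\to\infty$ and therefore $A_m\to 1$. For $B_m$ the key identity is $\Psi(t)=t^{1/(2b)}\psi(t)$, which together with $\Psi(\lambda_m)=m^{-1/2}$ yields $\psi(\lambda_m)^2=m^{-1}\lambda_m^{-1/b}$, so $m\varepsilon_m^2=4\tau^2m\psi(\lambda_m)^2=4\tau^2\lambda_m^{-1/b}$. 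Combining this with the bound on $\ell_{\varepsilon_m}$,
\[
\frac{\ell_{\varepsilon_m}}{48}-\frac{64m\varepsilon_m^2}{15dL^2}\ \ge\ \Bigl(\frac{\alpha^{1/b}}{48}-\frac{256\tau^2}{15dL^2}\Bigr)\lambda_m^{-1/b}-\frac{1}{48}.
\]
Since $L=4\kappa\phi(\kappa^2)R$ and $d=\dim Y$ are fixed, the bracket exceeds $\alpha^{1/b}/96>0$ once $\tau$ is small enough, and then this exponent $\to+\infty$ as $m\to\infty$ (because $\lambda_m\to 0$); hence $B_m\to\infty$, so $B_m\ge A_m$ for large $m$ and $\min\{A_m,B_m\}=A_m\to 1$.

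To conclude, fix any $\tau\le\tau_0$ with $2\tau_0\le R$ and $256\tau_0^2/(15dL^2)\le\alpha^{1/b}/96$. The two steps give $\liminf_{m\to\infty}\inf_{l\in\mathcal{A}}\sup_{\rho\in\mathcal{P}_{\phi,b}}Prob_\zz\{\cdots\}\ge\liminf_{m\to\infty}\min\{A_m,B_m\}=1$, and since a probability is $\le 1$ this $\liminf$ equals $1$. As this holds for every $\tau\in(0,\tau_0]$, sending $\tau\to 0$ gives the asserted limit $1$. I expect the main obstacle to be the balancing that forces the order of the limits: the ``complexity'' contribution $\ell_{\varepsilon_m}/48\sim\alpha^{1/b}\lambda_m^{-1/b}/48$ must dominate the ``noise'' contribution $256\tau^2\lambda_m^{-1/b}/(15dL^2)$, which is precisely why $\tau$ must be taken small before $m\to\infty$; the scaling identity $\psi(\lambda_m)^2=m^{-1}\lambda_m^{-1/b}$ coming from $\Psi=t^{1/(2b)}\psi$ is the computational heart of the comparison, with the loss from the floor in $\ell_{\varepsilon_m}$ being harmless.
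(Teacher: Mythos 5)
Your proposal is correct and follows essentially the same route as the paper: apply Theorem \ref{err.lower.bound.p} with $\varepsilon_m$ proportional to $\tau\psi(\Psi^{-1}(m^{-1/2}))$, use the scaling identity $m\psi(\lambda_m)^2=\lambda_m^{-1/b}$ coming from $\Psi(t)=t^{1/(2b)}\psi(t)$ to show the exponent $\frac{\ell_{\varepsilon_m}}{48}-\frac{64m\varepsilon_m^2}{15dL^2}$ has a positive coefficient of $\lambda_m^{-1/b}$ once $\tau$ is small, and conclude that both terms of the minimum tend to $1$ before letting $\tau\to 0$. The only differences are cosmetic (you take $\varepsilon_m=2\tau\psi(\lambda_m)$ rather than the paper's $\varepsilon_m=\tau R\psi(\lambda_m)$, which just rescales the admissible range of $\tau$, and you spell out the uniformity in $l$ and the $A_m\to 1$, $B_m\to\infty$ step slightly more explicitly).
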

\begin{proof}
Under the condition (\ref{m.bound}) from Theorem \ref{err.lower.bound.p} we get,
$$Prob_\zz\left\{||f_\zz-f_{\rho_*}||_{\LL_{\nu}^2(X)}> \frac{\varepsilon}{2}\right\}\geq\min\left\{\frac{1}{1+e^{-\ell_\varepsilon/24}},\vartheta e^{-\frac{1}{48}} e^{\left\{\frac{1}{48}\left(\frac{\al}{\psi^{-1}(\varepsilon/R)}\right)^{1/b}-\frac{64m\varepsilon^2}{15dL^2}\right\}}\right\}.$$
Choosing $\varepsilon_m=\tau R\psi(\Psi^{-1}(m^{-1/2}))$, we obtain
$$Prob_\zz\left\{||f_\zz-f_{\rho_*}||_{\LL_{\nu}^2(X)}> \tau \frac{R}{2}\psi(\Psi^{-1}(m^{-1/2}))\right\} \geq\min\left\{\frac{1}{1+e^{-\ell_\varepsilon/24}},\vartheta e^{-\frac{1}{48}}e^{c(\Psi^{-1}(m^{-1/2}))^{-1/b}}\right\},$$
where $c=\left(\frac{\al^{1/b}}{48}-\frac{64R^2\tau^2}{15dL^2}\right)>0$ for $0<\tau<\min\left(\frac{\sqrt{5d}L\al^{\frac{1}{2b}}}{32R},1\right)$.

Now as $m$ goes to $\infty$, $\varepsilon\to 0$ and $\ell_\varepsilon\to\infty.$ Therefore for $c>0$ we conclude that
$$\liminf\limits_{m\to\infty}\inf\limits_{l\in\mathcal{A}}\sup\limits_{\rho\in\mathcal{P}_{\phi,b}} Prob_\zz\left\{||f_\zz^l-f_\rho||_{\LL_{\nu}^2(X)}> \tau \frac{R}{2}\psi(\Psi^{-1}(m^{-1/2}))\right\}=1.$$
\end{proof}

Now we discuss the lower convergence rates in reproducing kernel Hilbert space norm.
\begin{thm}\label{err.lower.bound.k}
Let $\zz$ be i.i.d. samples drawn according to the probability measure $\rho\in\mathcal{P}_{\phi,b}$ under the hypothesis $dim(Y)=d<\infty$. Then for any learning algorithm ($\zz\to f_\zz\in\HH$) there exists a probability measure $\rho_*\in\mathcal{P}_{\phi,b}$ and $f_{\rho_*}\in \HH$ such that for all $0<\varepsilon<\varepsilon_o$, $f_\zz$ can be approximated as
$$Prob_{\zz}\left\{||f_{\zz}-f_{\rho_*}||_{\HH} >\varepsilon/2\right\} \geq \min\left\{\frac{1}{1+e^{-\ell_\varepsilon/24}}, \vartheta e^{\left(\frac{\ell_\varepsilon}{48}-\frac{cm\varepsilon^2}{\ell_\varepsilon^b}\right)}\right\}$$
where $\vartheta=e^{-3/e}$, $c=\frac{\beta 4^{b+2}}{15(b-1)dL^2}\left(1-\frac{1}{5^{b-1}}\right)$ and $\ell_\varepsilon\leq\frac{4}{5}\left(\frac{\al}{\phi^{-1}(\sqrt{5}\varepsilon/2R)}\right)^{1/b}$.
\end{thm}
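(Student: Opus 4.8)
The plan is to mirror the proof of Theorem \ref{err.lower.bound.p}, the one essential change being that the family of test functions must now be separated in the \emph{RKHS norm} rather than in $\LL_{\nu}^2(X)$, which forces the eigenvector block underlying the construction to be pushed away from the top of the spectrum. For a parameter $\ell\in\NN$ I would take
\begin{equation*}
g_\sigma=\sum_{n\in I_\ell}\frac{\varepsilon\,\sigma^n e_n}{\sqrt{\ell}\,\phi(t_n)},\qquad f_\sigma:=\phi(L_K)g_\sigma=\sum_{n\in I_\ell}\frac{\varepsilon\,\sigma^n}{\sqrt{\ell}}\,e_n,
\end{equation*}
where $\sigma=(\sigma^n)\in\{-1,+1\}^{\ell}$, the $t_n$ and $e_n$ are the eigenvalues and eigenvectors of $L_K$ (the $e_n$ forming an orthonormal basis of $\HH$), and $I_\ell$ is the block of $\ell$ consecutive indices $n$ with $\tfrac{\ell}{4}<n\le\tfrac{5\ell}{4}$. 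Since $f_\sigma$ is a pure image of $\phi(L_K)$, its $\HH$-distance to another such function is transparent. Choosing $\varepsilon_o$ so that $\ell_{\varepsilon_o}$ is large enough (e.g. $\ell_{\varepsilon_o}>16$) and applying Proposition 6 \cite{Caponnetto} for $\varepsilon<\varepsilon_o$ yields $N_\varepsilon\ge e^{\ell_\varepsilon/24}$ sign vectors $\sigma_1,\dots,\sigma_{N_\varepsilon}$, identified with elements of $\{-1,+1\}^{I_{\ell_\varepsilon}}$, with $\sum_{n\in I_{\ell_\varepsilon}}(\sigma_i^n-\sigma_j^n)^2\ge\ell_\varepsilon$; hence, with $\ell=\ell_\varepsilon$, the functions $f_i$ obey $\|f_i-f_j\|_{\HH}^2=\tfrac{\varepsilon^2}{\ell_\varepsilon}\sum_{n\in I_{\ell_\varepsilon}}(\sigma_i^n-\sigma_j^n)^2$, so that $\varepsilon\le\|f_i-f_j\|_{\HH}\le 2\varepsilon$ and the sets $A_i=\{\zz:\|f_\zz-f_i\|_{\HH}<\varepsilon/2\}$ are pairwise disjoint.

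Two size bounds then have to be checked. First, $f_i=\phi(L_K)g_i\in\Omega_{\phi,R}$ needs $\|g_i\|_{\HH}\le R$; since $\phi$ is increasing and $t_n\ge\al n^{-b}$ by (\ref{poly.decay}), with the smallest eigenvalue on $I_{\ell_\varepsilon}$ occurring at $n\asymp\tfrac{5\ell_\varepsilon}{4}$, one gets $\|g_i\|_{\HH}^2\le\varepsilon^2/\phi^2\!\big(\al(\tfrac54\ell_\varepsilon)^{-b}\big)$ up to the rounding of the block endpoints, and imposing $\|g_i\|_\HH\le R$ constrains $\ell_\varepsilon$ from above, giving after the bookkeeping the admissible range $\ell_\varepsilon\le\tfrac45\big(\al/\phi^{-1}(\sqrt{5}\varepsilon/2R)\big)^{1/b}$ in the statement. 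Second, and this is where the RKHS rate parts company with the $\LL^2$ rate, the Kullback--Leibler distance between the measures $\rho_{f_i}\in\PP_{\phi,b}$ built from (\ref{p.f}) is still controlled by the $\LL_{\nu}^2(X)$-norm, which on a block with $n\asymp\ell_\varepsilon$ is much smaller than the $\HH$-norm:
$$\|f_i-f_j\|_{\LL_{\nu}^2(X)}^2=\|L_K^{1/2}(f_i-f_j)\|_{\HH}^2=\sum_{n\in I_{\ell_\varepsilon}}t_n\tfrac{\varepsilon^2}{\ell_\varepsilon}(\sigma_i^n-\sigma_j^n)^2\le\tfrac{4\varepsilon^2}{\ell_\varepsilon}\sum_{n\in I_{\ell_\varepsilon}}\beta n^{-b},$$
and the sum is compared with $\int_{\ell_\varepsilon/4}^{5\ell_\varepsilon/4}x^{-b}\,dx=\tfrac{4^{b-1}}{b-1}\big(1-5^{-(b-1)}\big)\ell_\varepsilon^{1-b}$. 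Combined with $\mathcal{K}(\rho_{f_i}^m,\rho_{f_j}^m)=m\,\mathcal{K}(\rho_{f_i},\rho_{f_j})\le\tfrac{16m}{15dL^2}\|f_i-f_j\|_{\LL_{\nu}^2(X)}^2$ from Proposition 4 \cite{Caponnetto}, this yields $\mathcal{K}(\rho_{f_i}^m,\rho_{f_j}^m)\le cm\varepsilon^2/\ell_\varepsilon^{b}$ with exactly $c=\frac{\beta 4^{b+2}}{15(b-1)dL^2}\big(1-\tfrac{1}{5^{b-1}}\big)$.

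From here the argument is word for word that of Theorem \ref{err.lower.bound.p}: Lemma 3.3 \cite{DeVore} applied to $\rho_{f_1}^m,\dots,\rho_{f_{N_\varepsilon}}^m$ on the disjoint sets $A_i$ forces either $p:=\max_i\rho_{f_i}^m(A_i^c)\ge\tfrac{N_\varepsilon}{N_\varepsilon+1}$ or $\min_j\tfrac{1}{N_\varepsilon}\sum_{i\ne j}\mathcal{K}(\rho_{f_i}^m,\rho_{f_j}^m)\ge\Psi_{N_\varepsilon}(p)$; plugging in $\Psi_{N_\varepsilon}(p)\ge-\log p+\log\sqrt{N_\varepsilon}-3/e$, the Kullback--Leibler bound above, and $N_\varepsilon\ge e^{\ell_\varepsilon/24}$ gives $p\ge\sqrt{N_\varepsilon}\,e^{-3/e-cm\varepsilon^2/\ell_\varepsilon^{b}}$, and the probability measure $\rho_*$ realizing $p=\rho_*^m(A_i^c)$ produces the asserted lower bound with $\vartheta=e^{-3/e}$. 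I expect the only real work to be the bookkeeping of the second paragraph: the block $I_\ell$ and the normalisation have to be tuned so that $\|g_i\|_\HH\le R$, $\|f_i-f_j\|_\HH\asymp\varepsilon$ and $\|f_i-f_j\|_{\LL_{\nu}^2(X)}^2=O(\varepsilon^2\ell_\varepsilon^{-b})$ hold simultaneously, and it is exactly this three-way balance --- membership in $\Omega_{\phi,R}$ wants the block high in the spectrum while the Kullback--Leibler bound wants it low --- that pins down the unusual constants $\sqrt{5}/2$, $4/5$, $4^{b+2}$ and $1-5^{-(b-1)}$ appearing in the statement.
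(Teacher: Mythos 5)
Your proposal is correct and essentially reproduces the paper's argument: the paper obtains the same block-supported differences by padding the sign vectors with a fixed block $a\in\{-1,+1\}^{\ell/4}$ on the low indices, so each $g_i$ runs over all $5\ell/4$ eigenvectors with normalization $\sqrt{\ell}$ (which is where the factor $\sqrt{5}/2$ in the constraint on $\ell_\varepsilon$ actually comes from), whereas you support $g_i$ directly on the block $\ell/4<n\le 5\ell/4$ — this only makes your source-condition requirement slightly weaker than the one stated, so the theorem as stated still follows. All the remaining steps (Proposition 6 of Caponnetto--De Vito for the separated sign vectors, the integral estimate giving $c'=\frac{\beta 4^{b}}{b-1}\left(1-5^{-(b-1)}\right)$ and hence $c$, Proposition 4 for the Kullback--Leibler bound, and Lemma 3.3 of DeVore et al.) coincide with the paper's proof.
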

\begin{proof}
The proof of the theorem follows the similar steps as of Theorem \ref{err.lower.bound.p}. From Proposition 6 \cite{Caponnetto}, for every $\ell=4\ell_1>16~(\ell_1\in\NN)$ there exists $N\in\NN$ and $\sigma_1\ldots,\sigma_N \in\{-1,+1\}^\ell$ such that
$$\sum\limits_{n=1}^{\ell}(\sigma_i^n-\sigma_j^n)^2\geq\ell, \text{    for all } 1 \leq i,j \leq N,~i \neq j$$
and
\begin{equation}\label{N.epsilon.k}
N\geq e^{\ell/24},
\end{equation}
where $\sigma_i=(\sigma_i^1,\ldots,\sigma_i^\ell).$

Now we take a fixed sequence $a\in\{-1,+1\}^{\ell_1}.$ Then for $\varsigma_i=(\varsigma_i^1,\ldots,\varsigma_i^{5\ell/4})=(a,\sigma_i)\in\{-1,+1\}^{5\ell/4}$ we have
\begin{equation}\label{varsigma.i.j}
\sum\limits_{n=1}^{5\ell/4}(\varsigma_i^n-\varsigma_j^n)^2\geq\ell, \text{    for all } 1 \leq i,j \leq N,~i \neq j.
\end{equation}
For $\varepsilon>0$, we consider
$$g_i=\sum\limits_{n=1}^{5\ell/4}\frac{\varepsilon\varsigma_i^n e_n}{\sqrt{\ell}\phi(t_n)}, \text{   for }i=1,\ldots,N,$$
where $t_n$'s are the eigenvalues of the integral operator $L_K$, $e_n$'s are the eigenvectors of the integral operator $L_K$ and the orthonormal basis of RKHS $\HH$.

The functions $f_i=\phi(L_K)g_i$ satisfy the source condition $f_i\in\Omega_{\phi,R}~(i=1,\ldots,N)$ for $\ell\leq\frac{4}{5}\left(\frac{\al}{\phi^{-1}(\sqrt{5}\varepsilon/2R)}\right)^{1/b}$. For $\ell>16$, we can choose $\varepsilon_o$ such that for every $0<\varepsilon<\varepsilon_o$ there exists a $\ell_\varepsilon$ satisfying this condition.

From eqn. (\ref{varsigma.i.j}) we get,
$$\varepsilon\leq||f_i-f_j||_\HH, \text{    for all } 1 \leq i,j \leq N,~i \neq j.$$
For $1\leq i,j \leq N_\varepsilon$, we have
$$||f_i-f_j||_{\LL_\nu^2(X)}^2\leq\sum_{n=1}^{5\ell_\varepsilon/4}\frac{\beta\varepsilon^2(\varsigma_i^n-\varsigma_j^n)}{\ell_\varepsilon n^b} \leq \sum\limits_{n=\frac{\ell_\varepsilon}{4}+1}^{5\ell_\varepsilon/4}\frac{4\beta\varepsilon^2}{\ell_\varepsilon n^b}\leq \frac{4\beta\varepsilon^2}{\ell_\varepsilon}\int_{\frac{\ell_\varepsilon}{4}}^{\frac{5\ell_\varepsilon}{4}}\frac{1}{x^b}dx =c'\frac{\varepsilon^2}{\ell_\varepsilon^b},$$
where $c'=\frac{\beta 4^b}{(b-1)}\left(1-\frac{1}{5^{b-1}}\right)$.

Then for the joint probability measures $\rho_{f_i}^m$, $\rho_{f_j}^m$ $(\rho_{f_i},\rho_{f_j}\in\mathcal{P}_{\phi,b},~1\leq i,j\leq N_\varepsilon)$ we get,
$$\mathcal{K}(\rho_{f_i}^m,\rho_{f_j}^m)=m\mathcal{K}(\rho_{f_i},\rho_{f_j})\leq \frac{16m}{15dL^2}||f_i-f_j||_{\LL_\nu^2(X)}^2\leq\frac{cm\varepsilon^2}{\ell_\varepsilon^b},$$
where $c=16c'/15dL^2$.

On applying Lemma 3.3 \cite{DeVore} with the probability measures $\rho_{f_i}^m$, $1\leq i\leq N_\varepsilon$ for the disjoint sets
$$A_i=\left\{\zz:||f_\zz-f_i||_\HH<\frac{\varepsilon}{2}\right\}, \text{   for  }1\leq i \leq N_\varepsilon,$$
we obtain
$$p:=\max\limits_{1\leq i\leq N_\varepsilon}\left(Prob\left\{\zz:||f_\zz-f_i||_\HH>\frac{\varepsilon}{2}\right\}\right) \geq\min\left\{\frac{N_\varepsilon}{N_\varepsilon+1},\sqrt{N_\varepsilon}e^{-\frac{3}{e}-\frac{cm\varepsilon^2}{\ell_\varepsilon^b}}\right\}.$$
From eqn. (\ref{N.epsilon.k}) for the probability measure $\rho_*$ such that $p=\rho_*^m(A_i^c)$ follows the result.
\end{proof}

Choosing $\varepsilon_m=\frac{2R\tau}{\sqrt{5}}\phi(\Psi^{-1}(m^{-1/2}))$ we get the following convergence rate from Theorem \ref{err.lower.bound.k}.
\begin{thm}\label{err.lower.bound.k.para}
Under the same assumptions of Theorem \ref{err.lower.bound.k} for $\Psi(t)=t^{\frac{1}{2}+\frac{1}{2b}}\phi(t)$, the estimator $f_{\zz}$ corresponding to any learning algorithm converges to the regression function $f_\rho$ with the following lower rate:
$$\lim\limits_{\tau\to 0}\liminf\limits_{m\rightarrow\infty}\inf\limits_{l\in\mathcal{A}}\sup\limits_{\rho\in\mathcal{P}_{\phi,b}} Prob_{\zz}\left\{||f_\zz^l-f_\rho||_\HH>\tau\phi\left(\Psi^{-1}( m^{-1/2})\right)\right\}=1.$$
\end{thm}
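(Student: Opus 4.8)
The plan is to transcribe the argument of Theorem \ref{err.lower.bound.p.para}, now feeding a suitably tuned sequence $\varepsilon=\varepsilon_m$ into the two–branch lower estimate of Theorem \ref{err.lower.bound.k} and showing that both branches of the minimum tend to $1$ as $m\to\infty$ whenever $\tau$ is small enough.

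First I would set $\varepsilon_m=\frac{2R\tau}{\sqrt5}\,\phi(\Psi^{-1}(m^{-1/2}))$ and abbreviate $u_m:=\Psi^{-1}(m^{-1/2})$, so that $u_m\downarrow 0$ as $m\to\infty$ and, by the definition $\Psi(t)=t^{\frac12+\frac1{2b}}\phi(t)$,
\[m^{-1/2}=\Psi(u_m)=u_m^{\frac12+\frac1{2b}}\phi(u_m),\qquad\text{i.e.}\qquad m=u_m^{-1-\frac1b}\phi(u_m)^{-2}.\]
This identity is the workhorse: it turns the ``variance'' term into a pure power, $m\varepsilon_m^2=\frac{4R^2\tau^2}{5}\,m\,\phi(u_m)^2=\frac{4R^2\tau^2}{5}\,u_m^{-1-1/b}$. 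Next, since $\phi$ (hence $\phi^{-1}$) is increasing and $\tau\le1$, I would use $\phi^{-1}(\sqrt5\,\varepsilon_m/2R)=\phi^{-1}(\tau\,\phi(u_m))\le u_m$ to conclude that the admissible index in Theorem \ref{err.lower.bound.k} may be taken with $\ell_{\varepsilon_m}\gtrsim\frac45\al^{1/b}u_m^{-1/b}$ (the rounding of $\ell$ to an allowed multiple of $4$ costing only a bounded factor for $m$ large, which is exactly why $\varepsilon_o$ is chosen so that $\ell_{\varepsilon_o}>16$), equivalently $\ell_{\varepsilon_m}^{\,b}\gtrsim(4/5)^b\al\,u_m^{-1}$.

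Plugging both estimates into the exponent of the second branch of Theorem \ref{err.lower.bound.k} would yield
\[\frac{\ell_{\varepsilon_m}}{48}-\frac{c\,m\varepsilon_m^2}{\ell_{\varepsilon_m}^{\,b}}\;\ge\;u_m^{-1/b}\left(\frac{\al^{1/b}}{60}-\frac{4cR^2\tau^2}{5(4/5)^b\,\al}\right)=:c'\,u_m^{-1/b},\]
where $c=\frac{\beta 4^{b+2}}{15(b-1)dL^2}\left(1-\frac{1}{5^{b-1}}\right)$ as in that theorem and $c'>0$ once $\tau$ is below an explicit threshold depending only on $\al,\beta,b,d,L,R$. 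Since $u_m\to0$ this exponent tends to $+\infty$, so $\vartheta\,e^{(\cdots)}\to\infty$; likewise $\ell_{\varepsilon_m}\to\infty$ forces $\frac{1}{1+e^{-\ell_{\varepsilon_m}/24}}\to1$, while $\varepsilon_m\to0$ guarantees $\varepsilon_m<\varepsilon_o$ for all large $m$. Hence Theorem \ref{err.lower.bound.k} produces, for each learning algorithm $l\in\mathcal A$, a measure $\rho_*\in\PP_{\phi,b}$ with regression function $f_{\rho_*}$ for which $Prob_{\zz}\bigl\{\|f^l_{\zz}-f_{\rho_*}\|_\HH>\tfrac{R\tau}{\sqrt5}\phi(u_m)\bigr\}\to1$. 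Because the whole construction is uniform in $l$, replacing $Prob_{\zz}$ by $\sup_{\rho\in\PP_{\phi,b}}Prob_{\zz}$ and then taking $\inf_{l\in\mathcal A}$ keeps the lower bound, and after relabelling the free constant ($\tau\mapsto\frac{\sqrt5}{R}\tau$) one obtains, for every sufficiently small $\tau>0$,
\[\liminf_{m\to\infty}\ \inf_{l\in\mathcal A}\ \sup_{\rho\in\PP_{\phi,b}}Prob_{\zz}\bigl\{\|f^l_{\zz}-f_\rho\|_\HH>\tau\,\phi(\Psi^{-1}(m^{-1/2}))\bigr\}=1.\]
Letting $\tau\to0$ then gives the asserted limit.

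The step I expect to be the main obstacle is the bookkeeping that couples $\ell_{\varepsilon_m}$, $m\varepsilon_m^2$ and $u_m^{-1/b}$: one must check that $\phi^{-1}(\tau\phi(u_m))$ is genuinely comparable to $u_m$ (for the one-sided bound actually needed, $\tau\le1$ and monotonicity of $\phi$ suffice) and that the requirement $\ell_{\varepsilon_o}>16$ really forces $\ell_{\varepsilon_m}$ to grow of order $\al^{1/b}u_m^{-1/b}$ rather than saturate at a bounded value. It is this matching of orders, $\ell_{\varepsilon_m}\asymp m\varepsilon_m^2/\ell_{\varepsilon_m}^{\,b}\asymp u_m^{-1/b}$, that makes $c'$ strictly positive for small $\tau$; everything else is a direct, if somewhat tedious, transcription of the proof of Theorem \ref{err.lower.bound.p.para}.
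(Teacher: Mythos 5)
Your proposal is correct and takes essentially the same route as the paper: the paper's own proof of this theorem consists precisely of feeding the choice $\varepsilon_m=\frac{2R\tau}{\sqrt{5}}\phi(\Psi^{-1}(m^{-1/2}))$ into Theorem \ref{err.lower.bound.k} and repeating the argument of Theorem \ref{err.lower.bound.p.para}. Your bookkeeping — $m\varepsilon_m^2=\frac{4R^2\tau^2}{5}\bigl(\Psi^{-1}(m^{-1/2})\bigr)^{-1-1/b}$, $\ell_{\varepsilon_m}$ of order $\al^{1/b}\bigl(\Psi^{-1}(m^{-1/2})\bigr)^{-1/b}$ via $\phi^{-1}(\tau\phi(u_m))\le u_m$, and positivity of the resulting exponent for small $\tau$ — is exactly the computation the paper leaves implicit.
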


We obtain the following corollary as a consequence of Theorem \ref{err.lower.bound.p.para}, \ref{err.lower.bound.k.para}.
\begin{corollary}
For any learning algorithm under H\"{o}lder's source condition $f_\rho\in\Omega_{\phi,R},~\phi(t)=t^r$ and the polynomial decay condition (\ref{poly.decay}) for $b>1$, the lower convergence rates can be described as
$$\lim\limits_{\tau\to 0}\liminf\limits_{m\rightarrow\infty}\inf\limits_{l\in\mathcal{A}}\sup\limits_{\rho\in\mathcal{P}_{\phi,b}} Prob_{\zz}\left\{||f_\zz^l-f_\rho||_{\LL_\nu^2(X)}>\tau m^{-\frac{2br+b}{4br+2b+2}}\right\}=1$$
and
$$\lim\limits_{\tau\to 0}\liminf\limits_{m\rightarrow\infty}\inf\limits_{l\in\mathcal{A}}\sup\limits_{\rho\in\mathcal{P}_{\phi,b}} Prob_{\zz}\left\{||f_\zz^l-f_\rho||_\HH>\tau m^{-\frac{br}{2br+b+1}}\right\}=1.$$
\end{corollary}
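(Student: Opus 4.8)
The plan is simply to specialize Theorems~\ref{err.lower.bound.p.para} and \ref{err.lower.bound.k.para} to the power-type index function $\phi(t)=t^{r}$ and to simplify the resulting rate functions. First I would record the two auxiliary index functions that appear there: for $\phi(t)=t^{r}$ one has
\[
\psi(t)=t^{1/2}\phi(t)=t^{\,r+\frac12},\qquad \Psi(t)=t^{\frac12+\frac{1}{2b}}\phi(t)=t^{\,r+\frac12+\frac{1}{2b}}.
\]
Both are strictly increasing continuous functions on $[0,\kappa^{2}]$ vanishing at $0$, hence bijective onto their ranges, so the inverses $\psi^{-1}$ and $\Psi^{-1}$ used in those theorems are well defined; likewise $\phi(t)=t^{r}$ is a genuine index function for $r\ge 0$, so all hypotheses of Theorem~\ref{err.lower.bound.p} (respectively Theorem~\ref{err.lower.bound.k}) are in force for the family $\rho_{f_i}$ constructed there, the polynomial decay (\ref{poly.decay}) being assumed on the marginal $\nu$.

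The only computation is to invert $\Psi$. Writing the exponent as $r+\frac12+\frac{1}{2b}=\frac{2br+b+1}{2b}$, we get $\Psi^{-1}(s)=s^{\frac{2b}{2br+b+1}}$, so that $\Psi^{-1}(m^{-1/2})=m^{-\frac{b}{2br+b+1}}$. Substituting this into the rate $\psi\bigl(\Psi^{-1}(m^{-1/2})\bigr)$ of Theorem~\ref{err.lower.bound.p.para} gives
\[
\psi\bigl(\Psi^{-1}(m^{-1/2})\bigr)=m^{-(r+\frac12)\frac{b}{2br+b+1}}=m^{-\frac{2br+b}{4br+2b+2}},
\]
which is the first asserted lower bound (in the $\LL_{\nu}^{2}(X)$-norm). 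Substituting the same quantity into the rate $\phi\bigl(\Psi^{-1}(m^{-1/2})\bigr)$ of Theorem~\ref{err.lower.bound.k.para} gives $m^{-r\frac{b}{2br+b+1}}=m^{-\frac{br}{2br+b+1}}$, which is the second asserted lower bound (in the $\HH$-norm).

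There is no genuine obstacle here; the statement is a direct reading of the two theorems once the exponents are simplified. The only points worth a line of care are already handled inside the proofs of Theorems~\ref{err.lower.bound.p.para} and \ref{err.lower.bound.k.para}: that the thresholds $\ell_{\varepsilon_o}>16$ (resp.\ $\ell>16$) can be met for all small $\varepsilon$ because $\ell_\varepsilon\to\infty$ as $\varepsilon\to0$, and that under the scalings $\varepsilon_m=\tau R\,\psi(\Psi^{-1}(m^{-1/2}))$ (resp.\ $\varepsilon_m=\tfrac{2R\tau}{\sqrt5}\,\phi(\Psi^{-1}(m^{-1/2}))$) the exponent constant $c$ in the tail estimate stays strictly positive for all sufficiently small $\tau>0$, which is exactly what forces both $\liminf$'s to equal $1$. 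Finally I would check consistency of the admissible range of $r$ with the hypothesis $b>1$ and with $\phi$ being an increasing index function on $[0,\kappa^{2}]$ (so $r\ge0$, boundedly, matching the ranges appearing in Corollary~\ref{err.upper.bound.cor}).
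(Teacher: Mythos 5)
Your proposal is correct and follows exactly the route the paper intends: the corollary is stated as a direct consequence of Theorems \ref{err.lower.bound.p.para} and \ref{err.lower.bound.k.para}, obtained by setting $\phi(t)=t^{r}$, inverting $\Psi(t)=t^{(2br+b+1)/(2b)}$ to get $\Psi^{-1}(m^{-1/2})=m^{-b/(2br+b+1)}$, and substituting into $\psi(\Psi^{-1}(m^{-1/2}))$ and $\phi(\Psi^{-1}(m^{-1/2}))$, and your exponent computations match the stated rates.
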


If the minimax lower rate coincides with the upper convergence rate for $\la=\la_m$. Then the choice of parameter is said to be optimal. For the parameter choice $\la=\Psi^{-1}(m^{-1/2})$, Theorem \ref{err.upper.bound.p.para} and Theorem \ref{err.upper.bound.gen.p.para.choice} share the upper convergence rate with the lower convergence rate of Theorem \ref{err.lower.bound.p.para} in $\LL^2$-norm. For the same parameter choice, Theorem \ref{err.upper.bound.k.para} and Theorem \ref{err.upper.bound.gen.k.para} share the upper convergence rate with the lower convergence rate of Theorem \ref{err.lower.bound.k.para} in RKHS-norm. Therefore the choice of the parameter is optimal.

It is important to observe that we get the same convergence rates for $b=1$.
\subsection{Individual lower rates}
In this section, we discuss the individual minimax lower rates that describe the behavior of the error for the class of probability measure $\mathcal{P}_{\phi,b}$ as the sample size $m$ grows.
\begin{defi} A sequence of positive numbers $a_n~(n\in\NN)$ is called the individual lower rate of convergence for the class of probability measure $\mathcal{P}$,
if
$$\inf\limits_{l\in\mathcal{A}} \sup\limits_{\rho\in\mathcal{P}}\limsup\limits_{m\to\infty}\left(\frac{E_\zz\left(||f_\zz^l-f_\HH||^2\right)}{a_m}\right)> 0,$$
where $\mathcal{A}$ denotes the set of all learning algorithms $l:\zz\mapsto f_\zz^l$.
\end{defi}
In order to derive the individual lower rates we recall the following lemma (Lemma 3.2 \cite{Gyorfi}). The proofs of these lower rates are motivated from the ideas of the literature \cite{Caponnetto,Gyorfi}.
\begin{lemma}\label{ind.lemma}
Let $\Gamma\in\RR^m$, $s$ be a random variable taking values in $\{-1,+1\}$ with mean zero and $\eta=(\eta_i)_{i=1}^m$ consists of $m$ normally distributed independent random variables with mean zero and variance $\sigma^2$ which are independent of $s$. Suppose
\begin{equation}\label{y.gamma.eta}
\yy=s\Gamma+\eta.
\end{equation}
Then the error probability of the Bayes decision for $s$ based on $\yy$ is
$$\min\limits_{D:\RR^m\to\{-1,+1\}}Prob\{D(\yy)\neq s\}=\Phi\left(-\frac{||\Gamma||}{\sigma}\right),$$
where $\Phi$ is the standard normal distribution function.
\end{lemma}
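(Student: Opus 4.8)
The plan is to recognize this as the classical signal-detection computation for distinguishing two shifted isotropic Gaussians, and to carry it out explicitly. First I would record that, conditionally on the value of $s$, the observation $\yy = s\Gamma + \eta$ is Gaussian in $\RR^m$ with mean $s\Gamma$ and covariance $\sigma^2 I_m$, since $\eta$ consists of i.i.d. $N(0,\sigma^2)$ coordinates independent of $s$. Write $p_+$ and $p_-$ for the densities of $N(\Gamma,\sigma^2 I_m)$ and $N(-\Gamma,\sigma^2 I_m)$. Because $s$ has mean zero and takes values in $\{-1,+1\}$ it is uniformly distributed, so the Bayes rule minimizing $Prob\{D(\yy)\neq s\}$ is the maximum-likelihood rule: decide $+1$ on the set $\{p_+(\yy)\geq p_-(\yy)\}$ and $-1$ otherwise. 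This Bayes-optimality is the only ``soft'' point; I would justify it in one line by comparing, pointwise in $\yy$, the posterior misclassification probability of an arbitrary decision $D$ with that of the likelihood-ratio test.

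Next I would simplify the likelihood ratio. Expanding the two Gaussian densities gives
$$\frac{p_+(\yy)}{p_-(\yy)} = \exp\left(\frac{-\|\yy-\Gamma\|^2 + \|\yy+\Gamma\|^2}{2\sigma^2}\right) = \exp\left(\frac{2\langle\yy,\Gamma\rangle}{\sigma^2}\right),$$
so the Bayes decision reduces to $D^*(\yy) = \text{sign}\,\langle\yy,\Gamma\rangle$, and the minimal error probability equals that of $D^*$. To evaluate $Prob\{D^*(\yy)\neq s\}$ I would condition on $s$; by the sign symmetry of the setup the two conditional error probabilities coincide, so it suffices to compute the error given $s=+1$. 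In that case $\yy = \Gamma+\eta$, hence $\langle\yy,\Gamma\rangle = \|\Gamma\|^2 + \langle\eta,\Gamma\rangle$, and $\langle\eta,\Gamma\rangle$, being a fixed linear functional of an isotropic Gaussian vector, is $N(0,\sigma^2\|\Gamma\|^2)$. The misclassification event is $\{\langle\eta,\Gamma\rangle < -\|\Gamma\|^2\}$ (the boundary $\{\langle\yy,\Gamma\rangle=0\}$ has probability zero when $\Gamma\neq 0$), whose probability, after standardizing, is $\Phi\!\left(-\|\Gamma\|^2/(\sigma\|\Gamma\|)\right) = \Phi(-\|\Gamma\|/\sigma)$. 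The degenerate case $\Gamma = 0$ yields $\Phi(0) = 1/2$, consistent with the fact that then $\yy$ is independent of $s$.

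There is essentially no serious obstacle: apart from the standard argument that the uniform prior makes the likelihood-ratio test Bayes optimal, everything is a direct Gaussian computation, and the observation that $\langle\eta,\Gamma\rangle/\|\Gamma\|$ is $\sigma$ times a standard normal is immediate from the rotational invariance of the isotropic Gaussian. Since this is precisely Lemma 3.2 of \cite{Gyorfi}, I would keep the write-up to a few lines along the above route.
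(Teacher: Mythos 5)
Your argument is correct: the two conditional laws of $\yy$ are $N(\pm\Gamma,\sigma^2 I_m)$ with a uniform prior (mean zero on $\{-1,+1\}$ forces equal weights), the likelihood-ratio/sign test is Bayes optimal, and the Gaussian computation gives exactly $\Phi(-\|\Gamma\|/\sigma)$, including the degenerate case $\Gamma=0$. The paper itself offers no proof of this lemma, simply citing Lemma 3.2 of Gy\"{o}rfi et al., and your write-up is precisely the standard argument behind that cited result, so there is nothing to reconcile.
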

\begin{thm}\label{err.ind.lower.bound.p}
Let $\zz$ be i.i.d. samples drawn according to the probability measure $\PP_{\phi,b}$ where $\phi$ is the index function satisfying the conditions that $\phi(t)/t^{r_1}$, $t^{r_2}/\phi(t)$ are nondecreasing functions and $dim(Y)=d<\infty$.  Then for every $\varepsilon>0$, the following lower bound holds:
$$\inf\limits_{l\in\mathcal{A}}\sup\limits_{\rho\in \PP_{\phi,b}}\limsup\limits_{m\to\infty} \left(\frac{E_\zz\left(||f_\zz^l-f_\HH||_{\LL_\nu^2(X)}^2\right)}{m^{-(bc_2+\varepsilon)/(bc_1+\varepsilon+1)}}\right)>0,$$
where $c_1=2r_1+1$ and $c_2=2r_2+1$.
\end{thm}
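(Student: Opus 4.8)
The plan is to adapt the reduction-to-hypothesis-testing argument of Lemma \ref{ind.lemma} (Lemma 3.2 of \cite{Gyorfi}) to the present RKHS setting with polynomially decaying eigenvalues, in the spirit of the proof of Theorem \ref{err.lower.bound.p}. First I would fix a sample size $m$ and, for a parameter $\ell=\ell_m\in\NN$ to be optimized at the end, consider the eigenpairs $(t_n,e_n)_{n=1}^\infty$ of $L_K$ (recall $e_n$ are simultaneously an orthonormal basis of $\HH$). For a sign vector $\sigma=(\sigma^1,\dots,\sigma^\ell)\in\{-1,+1\}^\ell$ I would build a candidate target function of the form $f_\sigma=\phi(L_K)g_\sigma$ with $g_\sigma=\sum_{n=1}^\ell c_n\sigma^n e_n$, where the coefficients $c_n$ are chosen so that (a) $\|g_\sigma\|_\HH\le R$, hence $f_\sigma\in\Omega_{\phi,R}$, and (b) the ``per-coordinate signal'' is as large as possible; a natural choice, paralleling the proof of Theorem \ref{err.lower.bound.p}, is $c_n\asymp \varepsilon_m/(\sqrt{\ell}\,\phi(t_n))$ or a variant tuned to the $\LL_\nu^2$-loss. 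Then, exactly as in \eqref{p.f}, I would attach to each $f_\sigma$ the probability measure $\rho_{f_\sigma}\in\PP_{\phi,b}$; the polynomial-decay hypothesis \eqref{poly.decay} on the marginal $\nu$ guarantees membership in $\PP_{\phi,b}$.

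Next I would set up the Bayesian testing bound. Averaging over $\sigma$ chosen uniformly (equivalently, letting the $\sigma^n$ be i.i.d.\ Rademacher), the key observation is that recovering the $n$-th sign $\sigma^n$ from the sample $\zz$ is, conditionally on $x_1,\dots,x_m$, a problem of exactly the form \eqref{y.gamma.eta}: the responses are a sign times a fixed vector $\Gamma$ (built from the values $\langle f_\sigma,K_{x_i}v_j\rangle_\HH$ along direction $e_n$) plus the sub-exponential noise coming from \eqref{Y.leq.M.2}, which for the lower bound can be taken Gaussian with variance of order $dL^2$. By Lemma \ref{ind.lemma}, the Bayes error for the $n$-th coordinate is $\Phi(-\|\Gamma_n\|/\sigma)$, and one shows $\|\Gamma_n\|^2$ is of order $m t_n c_n^2$ on average over the design (this uses $E_x\langle e_n,K_x^*K_x e_n\rangle = t_n$). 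Consequently, for each $n$ the probability that any estimator misidentifies $\sigma^n$ is bounded below by a constant whenever $m\,t_n c_n^2\lesssim \sigma^2$, i.e.\ for all $n$ up to the ``resolution level'' dictated by $m$. Summing the contributions of the correctly-versus-incorrectly identified coordinates to $\|f_\zz - f_\sigma\|_{\LL_\nu^2(X)}^2 = \sum_n t_n\phi(t_n)^2 (\widehat{\sigma}^n-\sigma^n)^2 c_n^2$ and invoking $E_x\|K_x\|^2\le\kappa^2$ and the decay $\al n^{-b}\le t_n\le\beta n^{-b}$, a constant fraction of the $\ell$ coordinates each contribute order $\varepsilon_m^2/\ell$, giving $E_\zz\|f_\zz^l-f_\HH\|_{\LL_\nu^2}^2\gtrsim \varepsilon_m^2$ for at least one $\rho\in\PP_{\phi,b}$.

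Finally I would optimize $\ell_m$ (equivalently $\varepsilon_m$) against $m$. The constraint $\|g_\sigma\|_\HH\le R$ together with $c_n\asymp \varepsilon_m/(\sqrt{\ell}\,\phi(t_n))$ forces, via $t_\ell\asymp \ell^{-b}$, a relation $\varepsilon_m \lesssim \psi(t_\ell)$-type bound as in \eqref{m.bound}; the resolution constraint $m\,t_\ell c_\ell^2\lesssim 1$ gives a second relation. Using the two-sided sandwich $\phi(t)/t^{r_1}$ nondecreasing and $t^{r_2}/\phi(t)$ nondecreasing — which is what lets us trap $\phi(t_\ell)$ between powers $t_\ell^{r_1}$ and $t_\ell^{r_2}$, hence between $\ell^{-br_1}$ and $\ell^{-br_2}$ up to the arbitrarily small slack $\varepsilon$ — one solves for $\ell_m\asymp m^{1/(bc_1+\varepsilon+1)}$ with $c_1=2r_1+1$, and reads off $\varepsilon_m^2 \asymp m^{-(bc_2+\varepsilon)/(bc_1+\varepsilon+1)}$ with $c_2=2r_2+1$. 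Taking $\limsup_{m\to\infty}$ and then the $\inf$ over algorithms and $\sup$ over $\rho\in\PP_{\phi,b}$ yields the claimed strict positivity.

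I expect the main obstacle to be the book-keeping in the second step: making the reduction to Lemma \ref{ind.lemma} rigorous coordinate-by-coordinate while the noise is only sub-exponential (not exactly Gaussian) and the design $x_1,\dots,x_m$ is random, so that the ``effective variance'' and the norm $\|\Gamma_n\|$ must be controlled in expectation over the design and then combined across coordinates without the errors in different coordinates interacting badly. Handling the $\varepsilon$-slack cleanly — i.e.\ showing the two-sided index-function sandwich really does propagate through the optimization to give the stated exponents for \emph{every} $\varepsilon>0$ — is the other delicate point, but it is essentially the same computation as converting \eqref{m.bound} into an explicit rate in Theorem \ref{err.lower.bound.p.para}.
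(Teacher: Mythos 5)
There is a genuine gap, and it lies in the order of quantifiers that distinguishes the \emph{individual} lower rate from the minimax one. In Theorem \ref{err.ind.lower.bound.p} the supremum over $\rho\in\PP_{\phi,b}$ sits \emph{outside} $\limsup_{m\to\infty}$: for a given algorithm you must exhibit a single, $m$-independent probability measure whose normalized error stays bounded away from zero along a subsequence of sample sizes. Your construction, modelled on the proof of Theorem \ref{err.lower.bound.p}, builds a finite sign class of length $\ell=\ell_m$ with coefficients $c_n\asymp\varepsilon_m/(\sqrt{\ell}\,\phi(t_n))$, i.e.\ a family of measures that changes with $m$; optimizing $\ell_m$ against $m$ then only shows that for each $m$ \emph{some} measure in the $m$-th family is bad, which proves a statement of the form $\inf_l\limsup_m\sup_\rho(\cdot)>0$, not the claimed $\inf_l\sup_\rho\limsup_m(\cdot)>0$. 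The paper circumvents exactly this by parameterizing the target by an \emph{infinite} Rademacher sequence ${\bf s}\in\{-1,+1\}^\infty$ with coefficients carrying the extra factor $n^{-(\varepsilon+1)/2}$ (this is where the $\varepsilon$-slack in the exponents actually comes from, namely summability of $\|g\|_\HH^2$, not from the sandwich $t^{r_1}\lesssim\phi(t)\lesssim t^{r_2}$), performing a per-coordinate Bayes-testing bound simultaneously for all $n$, and then using Fatou's lemma to pass from the averaged bound $E_S E_\xx$ to a single realization of the signs; your outline contains no mechanism for this interchange.

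A second, more technical gap is the noise model. You attach the measures via \eqref{p.f}, whose conditional law is a two-point (Dirac) mixture, and then assert the noise ``can be taken Gaussian'' so that Lemma \ref{ind.lemma} applies; but Lemma \ref{ind.lemma} computes the exact Bayes error for the model \eqref{y.gamma.eta} with genuinely Gaussian $\eta$, and there is no reduction from the \eqref{p.f}-noise to that setting. The paper instead abandons \eqref{p.f} here and defines a new subclass of $\PP_{\phi,b}$ in which $\rho(y|x)$ \emph{is} normal with variance $\sigma^2$ chosen small enough that the Bernstein-type moment condition \eqref{Y.leq.M.2} is verified; conditional on the design, the projected data then have precisely the structure \eqref{y.gamma.eta.H}, the nuisance term $H$ being independent of the sign being tested, and Jensen's inequality moves the expectation over the random design inside $\Phi$. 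Without these two ingredients (the fixed infinite-parameter measure plus Fatou, and the honestly Gaussian conditional law), the argument you sketch establishes only the minimax-type bound already covered by Theorems \ref{err.lower.bound.p} and \ref{err.lower.bound.p.para}, not the individual lower rate asserted here.
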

\begin{proof}
We consider the class of probability measures such that the target function $f_\HH$ is parameterized by ${\bf s}=(s_n)_{n=1}^\infty\in\{-1,+1\}^\infty$, the conditional probability measure $\rho(y|x)$ follows the normal distribution centered at $f_\HH$ and the marginal probability measure $\rho_X=\nu$. We derive the minimax lower rates over the considered class of probability measures which is the subset of $\PP_{\phi,b}$.

For given $\varepsilon>0$,
$$g=\sum\limits_{n=1}^\infty s_nR\sqrt{\frac{\varepsilon}{\varepsilon+1}\frac{\al}{n^bt_n}}\left(\frac{\phi(\al/n^b)}{\phi(t_n)}\right)n^{-(\varepsilon+1)/2}e_n,$$
where ${\bf s}=(s_n)_{n=1}^\infty\in\{-1,+1\}^\infty$, $t_n$'s are the eigenvalues of the integral operator $L_K$, $e_n$'s are the eigenvectors of the integral operator $L_K$ and the orthonormal basis of RKHS $\HH$. Then we have,
$$||g||_\HH^2\leq\sum\limits_{n=1}^\infty R^2\left(\frac{\varepsilon}{\varepsilon+1}\right)n^{-(\varepsilon+1)}\leq R^2.$$
Hence $f_\HH=\phi(L_K)g\in\Omega_{\phi,R}$.

The mean of the probability measure $f_\rho=f_\HH$ satisfies the moment condition (\ref{Y.leq.M.2}):
\begin{eqnarray*}
\int_Y\left(e^{||y-f_{\HH}(x)||_Y/M}-\frac{||y-f_{\HH}(x)||_Y}{M}-1\right)d\rho(y|x) &=& \left(\frac{1}{\sqrt{2\pi\sigma^2}}\right)^dS^d\int_0^\infty\left(e^{t/M}-\frac{t}{M}-1\right)e^{-t^2/2\sigma^2}t^{d-1}dt \\
&\leq& \frac{2\sigma^2S^d}{M^2\pi^{d/2}}\int_0^\infty e^{(\sqrt{2}\sigma t/M)}e^{-t^2}t^{d+1}dt \leq \frac{\Sigma^2}{2M^2},
\end{eqnarray*}
where the variance $\sigma^2:=\min\left(\frac{M^2}{2},\frac{\pi^{d/2}\Sigma^2}{4S^d\int_0^\infty e^{-t^2+t}t^{d+1}dt}\right)$ and $S^d$ is the surface of $d$-dimensional unit radius sphere.

Suppose
\begin{equation*}
y_i=f_\HH(x_i)+\eta_i=\sum\limits_{k=1}^\infty s_k\al_ke_k(x_i)+\eta_i,~~~i=1,\ldots,m,
\end{equation*}
where $\al_k=R\sqrt{\frac{\varepsilon}{\varepsilon+1}\frac{\al}{k^bt_k}}k^{-(\varepsilon+1)/2}\phi(\al/k^b)$ and $\eta_i$ follows the normal distribution centered at $0$. For $n\in\NN$, it follows
\begin{equation*}
\frac{\langle y_i,e_n(x_i)\rangle_Y}{||e_n(x_i)||_Y}=s_n\al_n||e_n(x_i)||_Y+\frac{\langle \eta_i,e_n(x_i)\rangle_Y}{||e_n(x_i)||_Y}+h_i,~~~i=1,\ldots,m,
\end{equation*}
where $h_i=\sum\limits_{k\neq n}s_k\al_k\langle e_k(x_i),e_n(x_i)\rangle_Y/||e_n(x_i)||_Y$.

The system of equations can be expressed in vector form as
\begin{equation}\label{y.gamma.eta.H}
\yy'=s_n\Gamma+\eta+H,
\end{equation}
where $\yy'=\left(\frac{\langle y_i,e_n(x_i)\rangle_Y}{||e_n(x_i)||_Y}\right)_{i=1}^m$, $\Gamma=\left(\al_n||e_n(x_i)||_Y\right)_{i=1}^m$, $\eta=\left(\frac{\langle \eta_i,e_n(x_i)\rangle_Y}{||e_n(x_i)||_Y}\right)_{i=1}^m$ and $H=(h_i)_{i=1}^m$.

We consider,
$$E_\xx\left(||f_{\zz}-f_\HH||_{\LL_\nu^2(X)}^2\right)=E_\xx\left(\sum\limits_{n=1}^\infty t_n\al_n^2\left(\frac{\langle f_\zz,e_n\rangle_\HH}{\al_n}-s_n\right)^2\right) \geq E_\xx\left(\sum\limits_{n=1}^\infty\frac{t_n\al_n^2}{4}\left(f_{sgn}\left(\frac{\langle f_\zz,e_n\rangle_\HH}{\al_n}\right)-s_n\right)^2\right),$$
where $f_{sgn}(x)$=$\left\{
                \begin{array}{ll}
                  1, & \hbox{$x\geq 0$;} \\
                  -1, & \hbox{$x<0$,}
                \end{array}
              \right.$
and $E_{\xx}(\xi)=\int_{X^m}\xi d\rho_X(x_1)\ldots d\rho_X(x_m)$.

Now we discuss the lower rates over the subclass of probability measure $\mathcal{P}_{\phi,b}$ parameterized by $\{-1,+1\}$-valued independent random variables $S=(S_n)_{n\in \NN}$ with mean zero.
\begin{eqnarray*}
E_SE_{\xx}\left(||f_\zz-f_\HH||_{\LL_\nu^2(X)}^2\right) &\geq& E_S\left(\sum\limits_{n=1}^\infty t_n\al_n^2 Prob\left(f_{sgn}\left(\frac{\langle f_{\zz},e_n\rangle_\HH}{\al_n}\right)\neq S_n\right)\right) \\
&=& E_S\left(E_\xx\left(\sum\limits_{n=1}^\infty t_n\al_n^2 Prob\left(f_{sgn}\left(\frac{\langle f_{\zz},e_n\rangle_\HH}{\al_n}\right)\neq S_n|\xx\right)\right)\right) \\
&\geq& E_S\left(E_{\xx}\left(\sum\limits_{n=1}^\infty t_n\al_n^2\min\limits_{D:\RR^m \to \{-1,+1\}}Prob(D(\yy')\neq S_n|\xx)\right)\right).
\end{eqnarray*}
We have the data structure (\ref{y.gamma.eta.H}) similar to Lemma \ref{ind.lemma}, except the presence of the vector $H$. Since the vector $H$ is independent of $S_n$. Therefore the probability of Bayes decision for $S_n$ based on $\yy'$ would be unaffected. Hence from Lemma \ref{ind.lemma} we get,
$$E_SE_\xx\left(||f_{\zz}-f_\HH||_{\LL_\nu^2(X)}^2\right) \geq E_S\left(E_\xx\left(\sum\limits_{n=1}^\infty t_n\al_n^2 \Phi\left(-\frac{||\Gamma||}{\sigma}\right)\right)\right),$$
where $||\Gamma||=\left(\sum\limits_{i=1}^m\al_n^2||e_n(x_i)||_Y^2\right)^{1/2}=\left(\sum\limits_{i=1}^m\al_n^2\langle K_{x_i}K_{x_i}^*e_n,e_n\rangle_\HH\right)^{1/2}$ and $\Phi$ is the standard normal distribution function with mean $0$ and variance $\sigma^2$.

Hence using Jensen's inequality we obtain
\begin{eqnarray*}
E_SE_{\xx}\left(||f_{\zz}-f_\HH||_{\LL_\nu^2(X)}^2\right) &\geq& E_S\left(\sum\limits_{n=1}^\infty t_n\al_n^2\Phi\left(-\frac{1}{\sigma}\left(\sum\limits_{i=1}^m\al_n^2E_\xx(\langle K_{x_i}K_{x_i}^*e_n,e_n\rangle_\HH)\right)^{1/2}\right)\right)   \\
&=&\sum\limits_{n=1}^\infty t_n\al_n^2\Phi\left(-\frac{1}{\sigma}\sqrt{m t_n\al_n^2}\right).
\end{eqnarray*}
Since $\phi(t)/t^{r_1}$, $t^{r_2}/\phi(t)$ are nondecreasing functions. It implies
$$t_n\al_n^2\geq un^{-(2br_2+b+\varepsilon+1)}\text{ and } t_n\al_n^2 \leq un^{-(2br_1+b+\varepsilon+1)},$$
where $u=R^2\left(\frac{\varepsilon}{\varepsilon+1}\right)\al\phi^2(\al)$.
Consequently, we obtain
$$E_SE_{\xx}\left(||f_{\zz}-f_\HH||_{\LL_\nu^2(X)}^2\right)\geq \sum\limits_{n=1}^\infty un^{-(2br_2+b+\varepsilon+1)}\Phi\left(-\frac{1}{\sigma}\sqrt{umn^{-(2br_1+b+\varepsilon+1)}}\right).$$
Therefore,
\begin{eqnarray*}
E_SE_\xx\left(||f_{\zz}-f_\HH||_{\LL_\nu^2(X)}^2\right) &\geq& \sum\limits_{n\geq m^{\left(\frac{1}{2br_1+b+\varepsilon+1}\right)}} un^{-(2br_2+b+\varepsilon+1)}\Phi\left(-\frac{1}{\sigma}\sqrt{u}\right)                                                                    \\
&\geq& u\Phi\left(-\frac{1}{\sigma}\sqrt{u}\right) \left\{\frac{1}{(2br_2+b+\varepsilon)}m^{-\left(\frac{2br_2+b+\varepsilon}{2br_1+b+\varepsilon+1}\right)} -m^{-\left(\frac{2br_2+b+\varepsilon+1}{2br_1+b+\varepsilon+1}\right)}\right\}.
\end{eqnarray*}
If $m\geq(4br_2+2b+2\varepsilon)^{2br_1+b+\varepsilon+1}$. Then it implies
$$E_SE_\xx\left(||f_{\zz}-f_\HH||_{\LL_\nu^2(X)}^2\right)\geq Cm^{-\left(\frac{2br_2+b+\varepsilon}{2br_1+b+\varepsilon+1}\right)}.$$
Suppose $c_1=2r_1+1$ and $c_2=2r_2+1$, then we get,
\begin{eqnarray*}
\inf\limits_{l\in\mathcal{A}}\sup\limits_{{\bf s}\in \{-1,+1\}^\infty}\limsup\limits_{m \to \infty}\left(\frac{E_\xx\left(||f_\zz^l-f_\HH||_{\LL_\nu^2(X)}^2\right)} {m^{-\left(\frac{bc_2+\varepsilon}{bc_1+\varepsilon+1}\right)}}\right) &\geq& C \inf\limits_{l\in\mathcal{A}}\sup\limits_{{\bf s}\in \{-1,+1\}^\infty}\limsup\limits_{m \to \infty}\left(\frac{E_\xx\left(||f_\zz^l-f_\HH||_{\LL_\nu^2(X)}^2\right)}{E_SE_\xx\left(||f_\zz^l-f_\HH||_{\LL_\nu^2(X)}^2\right)}\right) \\
&\geq& C\inf\limits_{l\in\mathcal{A}}E_S\left(\limsup_{m\to\infty}\frac{E_\xx\left(||f_\zz^l-f_\HH||_{\LL_\nu^2(X)}^2\right)} {E_SE_\xx\left(||f_\zz^l-f_\HH||_{\LL_\nu^2(X)}^2\right)}\right)                                                                           \\
&\geq& C>0.
\end{eqnarray*}
This follows as a consequence of Fatou's lemma. Then we achieve the desired result.
\end{proof}

\begin{thm}\label{err.ind.lower.bound.k}
Let $\zz$ be i.i.d. samples drawn according to the probability measure $\PP_{\phi,b}$ where $\phi$ is the index function satisfying the conditions that $\phi(t)/t^{r_1}$, $t^{r_2}/\phi(t)$ are nondecreasing functions and $dim(Y)=d<\infty$.  Then for every $\varepsilon>0$, the following lower bound holds:
$$\inf\limits_{l\in\mathcal{A}}\sup\limits_{\rho\in \PP_{\phi,b}}\limsup\limits_{m\to\infty} \left(\frac{E_\zz\left(||f_\zz^l-f_\HH||_{\HH}^2\right)}{m^{-(bc_2-b+\varepsilon)/(bc_1+\varepsilon+1)}}\right)>0.$$
\end{thm}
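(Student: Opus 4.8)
The plan is to run the argument of Theorem~\ref{err.ind.lower.bound.p} essentially verbatim, but measuring the target error in the RKHS norm rather than in $\LL_\nu^2$. The single structural change is that when $f_\zz-f_\HH$ is expanded in the eigenbasis $\{e_n\}$ of $L_K$ — which is at the same time an orthonormal basis of $\HH$ — the squared $\HH$-norm carries the weight $\al_n^2$ in the $n$-th coordinate instead of the weight $t_n\al_n^2$ that appears for the $\LL_\nu^2$-norm; since $t_n\asymp n^{-b}$ by~(\ref{poly.decay}), this removes one factor $n^{-b}$ from the summand and shifts the final exponent from $(bc_2+\e)/(bc_1+\e+1)$ to $(bc_2-b+\e)/(bc_1+\e+1)$.

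Concretely, I would first fix $\e>0$ and reproduce the construction of the subfamily of $\PP_{\phi,b}$ from the proof of Theorem~\ref{err.ind.lower.bound.p}: parameterize by sign sequences ${\bf s}=(s_n)_{n\in\NN}\in\{-1,+1\}^\infty$, set
$$g=\sum_{n=1}^\infty s_n R\sqrt{\tfrac{\e}{\e+1}\,\tfrac{\al}{n^bt_n}}\;\tfrac{\phi(\al/n^b)}{\phi(t_n)}\;n^{-(\e+1)/2}e_n,$$
check $\|g\|_\HH^2\le R^2\tfrac{\e}{\e+1}\sum_n n^{-(\e+1)}\le R^2$ so that $f_\HH=\phi(L_K)g\in\Omega_{\phi,R}$, take $\rho(y|x)$ to be the $d$-dimensional Gaussian centered at $f_\HH(x)$ with variance $\sigma^2$ selected as in Theorem~\ref{err.ind.lower.bound.p} so that the moment condition~(\ref{Y.leq.M.2}) holds, and let $\rho_X=\nu$ satisfy~(\ref{poly.decay}); this places the whole subfamily in $\PP_{\phi,b}$. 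Writing $\al_n=R\sqrt{\tfrac{\e}{\e+1}\,\tfrac{\al}{n^bt_n}}\,n^{-(\e+1)/2}\phi(\al/n^b)$ one has $\langle f_\HH,e_n\rangle_\HH=s_n\al_n$, hence
$$\|f_\zz-f_\HH\|_\HH^2=\sum_{n=1}^\infty\al_n^2\Big(\tfrac{\langle f_\zz,e_n\rangle_\HH}{\al_n}-s_n\Big)^2\ge\sum_{n=1}^\infty\tfrac{\al_n^2}{4}\Big(f_{sgn}\big(\tfrac{\langle f_\zz,e_n\rangle_\HH}{\al_n}\big)-s_n\Big)^2.$$

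Next I would randomize the signs by independent mean-zero $\{-1,+1\}$-valued variables $S=(S_n)$, and for each fixed $n$ and fixed $\xx$ recognize the $n$-th term as a Bayes sign-detection problem for $S_n$ from the data vector $\yy'=S_n\Gamma+\eta+H$ with $\Gamma=(\al_n\|e_n(x_i)\|_Y)_{i=1}^m$, $\eta$ Gaussian, and $H$ independent of $S_n$ (hence harmless). Lemma~\ref{ind.lemma} gives the Bayes error $\Phi(-\|\Gamma\|/\sigma)$, and Jensen's inequality together with $E_\xx\|e_n(x_i)\|_Y^2=\langle L_Ke_n,e_n\rangle_\HH=t_n$ yields, exactly as in Theorem~\ref{err.ind.lower.bound.p},
$$E_SE_\xx\big(\|f_\zz-f_\HH\|_\HH^2\big)\ge\sum_{n=1}^\infty\al_n^2\,\Phi\!\Big(-\tfrac1\sigma\sqrt{m\,t_n\al_n^2}\Big).$$
The monotonicity of $\phi(t)/t^{r_1}$ then gives $\al_n^2\le u\,n^{-(2br_1+\e+1)}$ and $t_n\al_n^2\le u'\,n^{-(2br_1+b+\e+1)}$, while the monotonicity of $t^{r_2}/\phi(t)$ gives $\al_n^2\ge u''\,n^{-(2br_2+\e+1)}$, with positive constants $u,u',u''$ depending only on $R,\e,\al,\beta,\phi$. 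Restricting the sum to $n\ge m^{1/(2br_1+b+\e+1)}$ keeps the $\Phi$-argument bounded, so $\Phi(\cdot)$ is bounded below by a positive constant, and $\sum_{n\ge N}n^{-(2br_2+\e+1)}\gtrsim N^{-(2br_2+\e)}$ with $N=m^{1/(2br_1+b+\e+1)}$ gives $E_SE_\xx(\|f_\zz-f_\HH\|_\HH^2)\ge Cm^{-(2br_2+\e)/(2br_1+b+\e+1)}$ for all large $m$; with $c_1=2r_1+1$ and $c_2=2r_2+1$ this is precisely $Cm^{-(bc_2-b+\e)/(bc_1+\e+1)}$. Passing from the $S$-average to the supremum over ${\bf s}$ (hence over $\PP_{\phi,b}$) via Fatou's lemma, as at the end of Theorem~\ref{err.ind.lower.bound.p}, finishes the proof.

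I expect the only point needing care — and hence the main obstacle — to be verifying that the sign-detection reduction and, in particular, the variance identity $E_\xx\|\Gamma\|^2=m\,t_n\al_n^2$ are unaffected by replacing the $\LL_\nu^2$-norm with the $\HH$-norm: the randomness enters only through the point evaluations $e_n(x_i)$, whose second moment is governed by $L_K$ irrespective of the norm placed on the estimation error, so the argument of $\Phi$ must still involve $t_n\al_n^2$ even though the outer weight is now $\al_n^2$. Everything else is a transcription of the $\LL_\nu^2$ proof with the decay exponents shifted by $b$.
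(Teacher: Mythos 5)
Your proposal is correct and follows exactly the route the paper intends: Theorem~\ref{err.ind.lower.bound.k} is stated without a separate proof precisely because it is the RKHS-norm analogue of Theorem~\ref{err.ind.lower.bound.p}, obtained by replacing the weights $t_n\al_n^2$ by $\al_n^2$ while the Bayes sign-detection step and the argument of $\Phi$ (which depend only on the data, hence still involve $t_n\al_n^2$) are unchanged. Your bookkeeping of the shifted exponent, $(2br_2+\e)/(2br_1+b+\e+1)=(bc_2-b+\e)/(bc_1+\e+1)$, and the use of both sides of the polynomial decay (\ref{poly.decay}) to control the factor $\al/(n^b t_n)$ in $\al_n^2$ are exactly what is needed.
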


\end{document}